\documentclass[11pt]{article}

\usepackage[letterpaper,margin=1in]{geometry}
\usepackage[T1]{fontenc}
\usepackage{lmodern}
\usepackage{silence}
\usepackage{microtype}
\usepackage[authoryear,round]{natbib}
\setcitestyle{authoryear,round,citesep={;},aysep={,},yysep={;}}

\usepackage{amsmath,amssymb,amsthm,amsfonts,bm,mathtools}
\usepackage{mathrsfs}
\usepackage{enumitem}
\usepackage{graphicx}
\usepackage{xcolor}
\usepackage{float}
\usepackage{placeins}
\usepackage{array}
\usepackage{needspace}
\usepackage{etoolbox}
\usepackage{url}
\usepackage{hyperref}
\usepackage[nameinlink,capitalise,noabbrev]{cleveref}

\setlist{topsep=4pt,itemsep=2pt,parsep=0pt,partopsep=0pt}
\allowdisplaybreaks[4]
\DeclareMathOperator{\ra}{ran}
\DeclareMathOperator{\Tr}{Tr}

\DeclareMathOperator*{\Span}{span}

\newcommand{\RE}{\mb{E}_{\mm{Rad}^{mn}}}

\newcommand{\eR}{\widehat{\mf{R}}_n^m}
\newcommand{\W}{\msf{W}(C,D)}
\newcommand{\Wl}{\msf{W}_l(C,D)}
\newcommand{\HB}{H^{\scriptscriptstyle (\mm{B})}}
\newcommand{\bsigma}{\bm{\sigma}}

\newcommand{\mK}{\m{K}}
\newcommand{\N}{\mb{N}}
\newcommand{\R}{\mb{R}}
\newcommand{\Rnn}{\R_{\ge 0}}
\newcommand{\Zp}{\mb{N}_{>0}}

\newcommand{\bea}{\b e_a}
\newcommand{\beb}{\b e_b}
\newcommand{\bomega}{\bm{\omega}}
\newcommand{\kB}{k^{\scriptscriptstyle (\mm{B})}}

\newcommand{\mD}{\m{D}}
\newcommand{\mY}{\m{Y}}

\newcommand{\mX}{\m{X}}
\newcommand{\mC}{\m{C}}
\newcommand{\F}{\m{F}}
\newcommand{\mH}{\m{H}}

\newcommand{\mS}{\m{S}}
\newcommand{\mT}{\m{T}}
\newcommand{\mF}{\m{F}}
\newcommand{\mG}{\m{G}}
\newcommand{\mV}{\m{V}}
\newcommand{\mI}{\m{I}}

\newcommand{\mf}[1]{\mathfrak{#1}}
\newcommand{\msf}[1]{\mathsf{#1}}
\newcommand{\mb}[1]{\mathbb{#1}}
\newcommand{\m}[1]{\mathcal{#1}}

\newcommand{\mm}[1]{\mathrm{#1}}

\renewcommand{\d}{\mathrm{d}}
\renewcommand{\b}{\mathbf}
\renewcommand{\P}{\mathbb{P}}

\newcommand{\proofstep}[1]{\par\smallskip\noindent\emph{#1}\enspace}
\newcommand{\HMs}[1]{H_{\b M}^{#1}}

\newcommand{\mE}{\m{E}}
\newcommand{\mL}{\m{L}}
\newcommand{\mZ}{\m{Z}}

\newcommand{\PaperTitle}{Operator-Theoretic Generalization Bounds for Multitask Deep Learning}
\newcommand{\PaperPDFAuthors}{Mahdi Mohammadigohari, Thomas Borsani, and Giuseppe Di Fatta}
\hypersetup{
  colorlinks=true,
  linkcolor=blue,
  citecolor=blue,
  urlcolor=blue,
  pdftitle={\PaperTitle},
  pdfauthor={\PaperPDFAuthors},
  pdfpagemode=UseNone
}

\newtheorem{theorem}{Theorem}
\newtheorem{lemma}{Lemma}[section]
\newtheorem{proposition}{Proposition}
\newtheorem{corollary}{Corollary}
\theoremstyle{definition}
\newtheorem{assumption}{Assumption}
\newtheorem{definition}{Definition}
\theoremstyle{remark}
\newtheorem{remark}{Remark}
\theoremstyle{plain}

\crefname{assumption}{Assumption}{Assumptions}
\Crefname{assumption}{Assumption}{Assumptions}
\crefname{theorem}{Theorem}{Theorems}
\Crefname{theorem}{Theorem}{Theorems}
\crefname{lemma}{Lemma}{Lemmas}
\Crefname{lemma}{Lemma}{Lemmas}
\crefname{proposition}{Proposition}{Propositions}
\Crefname{proposition}{Proposition}{Propositions}
\crefname{corollary}{Corollary}{Corollaries}
\Crefname{corollary}{Corollary}{Corollaries}
\crefname{remark}{Remark}{Remarks}
\Crefname{remark}{Remark}{Remarks}

\title{\PaperTitle}
\author{
Mahdi Mohammadigohari\\
Free University of Bozen--Bolzano, Italy\\
\texttt{mahdi.mohammadigohari@gmail.com}
\and
Thomas Borsani\\
Free University of Bozen--Bolzano, Italy\\
\texttt{tborsani@unibz.it}
\and
Giuseppe Di Fatta\\
Free University of Bozen--Bolzano, Italy\\
\texttt{giuseppe.difatta@unibz.it}
}
\date{}

\begin{document}

\maketitle

\begin{abstract}
We develop operator-theoretic generalization bounds for deep multi-output function classes by representing network layers as Koopman composition operators on vector-valued reproducing kernel Hilbert spaces. In vector-valued Sobolev RKHSs, we derive Rademacher complexity bounds for invertible and width-expanding injective architectures. The estimates separate the output-coupling contribution, represented by the trace of the task matrix, from the layerwise operator norms, Sobolev symbol ratios, determinant factors, and restriction constants generated by the linear maps. We then analyze a distinct one-dimensional Brownian/Cameron--Martin regime. Using the exact anchored derivative-norm characterization of the vector-valued Brownian RKHS, we obtain layerwise bounds for domain-preserving scalar linear maps and anchored diffeomorphic activations; the corresponding factors scale as $|W_l|^{1/2}$ and $\|\sigma_l'\|_\infty^{1/2}$, respectively, and do not involve Sobolev smoothness exponents. Because the Sobolev and Brownian results concern different hypothesis spaces, neither is asserted to dominate the other uniformly. We additionally formulate shared operator learning across tasks, prove a finite-rank representer theorem, derive the exact finite-dimensional problem for squared loss, and establish a target-transfer bound when the learned operator is obtained independently of the target sample. Synthetic and MNIST studies examine stabilized Sobolev-inspired and Brownian-inspired complexity proxies; these empirical proxies are not evaluations of the proved bounds for rank-deficient architectures.
\end{abstract}

\section{Introduction}\label{sec:intro}

Understanding the generalization behavior of deep networks remains a central problem in learning theory. Classical approaches control capacity through parameter counts, margins, compression, or norms of the weight matrices~\citep{bartlett2002rademacher,mohri2018foundations,Neyshabur2015,golowich2018size,bartlett2017spectrally,wei2019data,Fanghui2024,arora2018compression}. Norm-based estimates can avoid an explicit width dependence, but their layerwise factors often retain only coarse information about the geometry of the transformations.

A complementary approach represents each layer by the Koopman operator that it induces on a function space. The closest prior generalization analysis in this direction develops scalar-valued bounds in Sobolev RKHSs and exposes determinant-based volume distortion for full-rank layers~\citep{hashimoto2024koopmanbased}. Two questions then arise naturally. First, how does this construction change for multi-output classes in vector-valued RKHSs? Second, what changes when the underlying function space is not Sobolev but an anchored Brownian/Cameron--Martin RKHS?

\proofstep{Function-space viewpoint.}
The paper studies both questions at the level of the induced composition operators. In the Sobolev regime, the output coordinates are coupled by a positive-definite task matrix $\b M$, and the corresponding RKHS norm is the $\b M^{-1}$-weighted vector Sobolev norm. This makes it possible to separate the task-coupling factor $\Tr\left(\b M\right)$ from the layerwise geometric terms. We treat square invertible maps and non-square injective maps; in the latter case, the proof also records the Sobolev restriction cost created by the lower-dimensional range of each layer.

\proofstep{Brownian/Cameron--Martin viewpoint.}
For the one-dimensional Brownian kernel, the RKHS norm is exactly an anchored first-derivative energy. This permits a direct change-of-variables analysis without Fourier extensions. Under the domain-preserving conditions needed to keep every composition inside the RKHS, a scalar linear layer contributes $|W|^{1/2}$ and an anchored $C^1$ diffeomorphism contributes $\|\sigma'\|_\infty^{1/2}$. The Brownian and Sobolev bounds therefore reflect different function-space geometries. They are bounds for different hypothesis classes and are not presented as uniformly comparable numerical estimates.

\paragraph{Contributions.}
The principal contributions are as follows.
\begin{itemize}[leftmargin=2em]
\item \textbf{Vector-valued Sobolev Koopman bounds.} We extend the Koopman factorization to separable vector-valued Sobolev RKHSs and derive multi-output Rademacher bounds. The output matrix enters through $\Tr\left(\b M\right)$ and the $\b M^{-1}$-weighted terminal norm, while the linear layers contribute operator-norm, Sobolev-symbol, and determinant factors.

\item \textbf{Injective width expansion.} For injective non-square layers, we state explicit Sobolev trace conditions, define the inverse transpose on the range, and include uniform restriction-operator constants. The rank-deficient case is retained only as a regularized empirical proxy; no bounded Koopman theorem is claimed there.

\item \textbf{One-dimensional Brownian/Cameron--Martin bound.} We give an exact vector-valued Cameron--Martin characterization and prove the linear and activation composition estimates directly from the derivative norm. This yields a complexity bound without Sobolev smoothness exponents.

\item \textbf{Shared operator learning and transfer.} We prove a finite-rank representer theorem for a shared Hilbert--Schmidt operator, derive the exact coefficient problem for squared loss, and establish a conditional target bound under source--target sample independence.

\item \textbf{Empirical proxy study.} Synthetic and MNIST experiments compare stabilized Sobolev-inspired and Brownian-inspired factors. The text distinguishes these proxies from the proved bounds, especially for rectangular or rank-deficient layers.
\end{itemize}

\section{Related Work}\label{sec2}

\paragraph{Norm-based generalization bounds.}
A large literature controls deep-network capacity through spectral, Frobenius, path, or reference-matrix norms~\citep{Neyshabur2015,golowich2018size,bartlett2017spectrally,wei2020improved,Ju2022Robust}. These analyses can avoid direct parameter counting, but their bounds generally summarize each layer through a limited collection of matrix norms. Compression methods can exploit approximate low-rank structure~\citep{arora2018compression}, while orthogonality-based analyses emphasize conditioning and dynamical stability~\citep{li2021orthogonal}. The operator-theoretic approach considered here is different: it controls the action of each layer on a specified function space and thereby exposes the volume and regularity distortion of the induced composition operator.

\paragraph{Koopman formulations.}
Koopman operators encode nonlinear dynamics through linear composition operators on observables. A recent learning-theoretic formulation applies this perspective to scalar deep networks in Sobolev RKHSs and derives determinant-sensitive generalization bounds for full-rank weights~\citep{hashimoto2024koopmanbased}. Our Sobolev results extend that construction to multi-output vector-valued RKHSs and to injective width-expanding maps. Our Brownian result instead uses an anchored one-dimensional Cameron--Martin geometry and therefore requires a separate composition analysis.

\paragraph{Vector-valued RKHSs and multitask learning.}
Matrix-valued kernels and vector-valued RKHSs provide a standard framework for coupled outputs and multitask learning~\citep{Evgeniou2005,Micchelli2005,Caponnetto2008,argyriou2006multitask,Argyriou2008convex}. Existing guarantees include Rademacher bounds for linear multitask classes~\citep{Maurer2006Bounds}, excess-risk bounds for trace-norm regularization~\citep{Pontil2013Excess}, representation-learning guarantees~\citep{Maurer2016Benefit}, and local-complexity analyses~\citep{Yousefi2018Local}. These works characterize task sharing but do not analyze deep compositions through layerwise Koopman operators.

\paragraph{Operator learning and transfer.}
Representer theorems for learning Koopman operators show how Hilbert-space operator problems can reduce to finite rank~\citep{khosravi2023representer}. We use the same general projection principle in a vector-valued multitask setting. The transferable object is a bounded operator between Hilbert function spaces, rather than only an output kernel or a finite-dimensional latent vector.

\section{Preliminaries}\label{sec3}
This section fixes the notation, the vector-valued RKHS conventions, the Rademacher complexity, and the Koopman representation used in the main results.

\subsection{Notation and function spaces}\label{sec:notation}
The natural numbers are $\N\coloneqq\left\{0,1,2,\ldots\right\}$ and the positive integers are $\Zp\coloneqq\left\{1,2,\ldots\right\}$. For $n\in\Zp$, write $[n]\coloneqq\left\{1,\ldots,n\right\}$. The nonnegative reals are $\Rnn$. For a bounded linear operator $A$ between Hilbert spaces, $\ra\left(A\right)$, $\ker\left(A\right)$, and $\|A\|$ denote its range, kernel, and operator norm. If $\b W\in\R^{d\times d'}$ is injective with $d\ge d'$, define its range-volume determinant by
\begin{align*}
\left|\det\left(\b W\right)\right|
\coloneqq
\det\left(\b W^{\top}\b W\right)^{1/2}.
\end{align*}
The cone of symmetric positive semidefinite $m\times m$ matrices is $\mathbb S_+^m$; $\b M\succ0$ means that $\b M$ is strictly positive definite.

A scalar kernel $k:\mX\times\mX\to\R$ is symmetric and positive semidefinite. Its RKHS is denoted by $\mH_k$, and $k_{\b x}\coloneqq k\left(\cdot,\b x\right)$ satisfies
\begin{align*}
h\left(\b x\right)
=
\left\langle h,k_{\b x}\right\rangle_{\mH_k}.
\end{align*}
A matrix-valued kernel $K:\mX\times\mX\to\R^{m\times m}$ satisfies $K\left(\b x,\b x'\right)=K\left(\b x',\b x\right)^{\top}$ and
\begin{align*}
\sum_{i,j=1}^{n}
\b y_i^{\top}K\left(\b x_i,\b x_j\right)\b y_j
\ge 0
\end{align*}
for all $n\in\Zp$, $\b x_i\in\mX$, and $\b y_i\in\R^m$. The induced vvRKHS $\mH_K$ satisfies the reproducing identity
\begin{align*}
\left\langle f,K\left(\cdot,\b x\right)\b y\right\rangle_{\mH_K}
=
f\left(\b x\right)^{\top}\b y.
\end{align*}

For a separable kernel $K\left(\b x,\b x'\right)=k\left(\b x,\b x'\right)\b M$ with $\b M\succ0$, the vvRKHS consists of $f=\left(f_1,\ldots,f_m\right)$ with $f_j\in\mH_k$, and
\begin{align*}
\left\langle f,g\right\rangle_{\mH_K}
&=
\sum_{i,j=1}^{m}
\left(\b M^{-1}\right)_{ij}
\left\langle f_i,g_j\right\rangle_{\mH_k},
\\
\|f\|_{\mH_K}^{2}
&=
\sum_{i,j=1}^{m}
\left(\b M^{-1}\right)_{ij}
\left\langle f_i,f_j\right\rangle_{\mH_k}.
\end{align*}
The main Sobolev and Brownian theorems assume $\b M\succ0$, so the weighted norm is nondegenerate and no quotient-space convention is needed. The later target-transfer proposition permits $\b M\succeq0$ because its proof uses only the matrix-valued reproducing property and the diagonal trace bound.

For every ambient dimension $d$ and order $s>d/2$, let $k_s$ denote the Bessel-potential Sobolev kernel normalized so that its RKHS is $H^s\left(\R^d\right)$ with the exact Fourier norm displayed below. The ambient dimension is understood from context. We write
\begin{align*}
K_s\left(\b x,\b x'\right)
=
k_s\left(\b x,\b x'\right)\b M
\end{align*}
for the separable matrix-valued Sobolev kernel and denote its vvRKHS by $\HMs{s}\left(\R^d,\R^m\right)$. With the fixed Fourier-transform normalization used to define $k_s$, its norm is
\begin{align*}
\|h\|_{\HMs{s}\left(\R^d,\R^m\right)}^2
=
\int_{\R^d}
\left(1+\|\bomega\|_2^2\right)^s
\widehat h\left(\bomega\right)^*
\b M^{-1}
\widehat h\left(\bomega\right)
\,\d\bomega.
\end{align*}
For a linear subspace $\mS\subset\R^d$, $\HMs{s}\left(\mS,\R^m\right)$ is defined after identifying $\mS$ isometrically with $\R^{\dim\left(\mS\right)}$. A multi-index is $\alpha\in\N^d$, with $|\alpha|\coloneqq\sum_{j=1}^{d}\alpha_j$, and $D^\alpha$ denotes the corresponding weak derivative.

Let $\mH_{K_1}$ and $\mH_{K_2}$ be vvRKHSs on $\R^{d_1}$ and $\R^{d_2}$. For a measurable map $\phi:\R^{d_1}\to\R^{d_2}$, define
\begin{align*}
\mD_\phi
\coloneqq
\left\{g\in\mH_{K_2}:g\circ\phi\in\mH_{K_1}\right\}.
\end{align*}
The Koopman operator is $\mK_\phi:\mD_\phi\to\mH_{K_1}$, $\mK_\phi g\coloneqq g\circ\phi$. For $\b a\in\R^d$, the translation $\b b\left(\b x\right)=\b x+\b a$ induces $\mK_{\b b}g=g\circ\b b$.

The signed one-dimensional Brownian kernel is
\begin{align*}
\kB\left(x,x'\right)
\coloneqq
\frac{|x|+|x'|-|x-x'|}{2}.
\end{align*}
On $\mX=\left[-R,R\right]$, the separable kernel $K\left(x,x'\right)=\kB\left(x,x'\right)\b M$ induces the vector-valued Brownian RKHS $\HB\left(\mX,\R^m\right)$. This is the usual anchored Cameron--Martin geometry associated with Brownian covariance kernels~\citep{vanderVaart2008}; the exact two-sided signed-kernel characterization needed here is proved in \Cref{lem:BrownianCameronMartin}. For a scalar differentiable map $\sigma$, $\|\sigma'\|_\infty$ denotes the supremum of $|\sigma'|$ over its stated domain.

\subsection{Vector-valued Rademacher complexity}\label{sec:rad}
Let $\mD_n=\left\{\b x_1,\ldots,\b x_n\right\}\subset\mX$ be fixed. For each $i\in[n]$, let $\bsigma_i=\left(\sigma_{i1},\ldots,\sigma_{im}\right)$ have independent coordinates uniformly distributed on $\left\{-1,+1\right\}$, and assume that the vectors $\bsigma_1,\ldots,\bsigma_n$ are independent.

\begin{definition}[Empirical vector-valued Rademacher complexity]
For a class $\F$ of functions $f:\mX\to\R^m$, define
\begin{align*}
\eR\left(\F\right)
\coloneqq
\RE\left[
\sup_{f\in\F}
\frac{1}{n}
\left|
\sum_{i=1}^{n}
\left\langle\bsigma_i,f\left(\b x_i\right)\right\rangle_{\R^m}
\right|
\right].
\end{align*}
\end{definition}

The absolute value is convenient for the operator estimates below. All principal classes are symmetric in the terminal map, so it does not change their order of complexity.

\subsection{Koopman representation of deep networks}\label{sec:koopman-setup}
Fix positive integers $d_0,\ldots,d_L$ and $m$, and let $\mX\subset\R^{d_0}$ be the input domain. We consider functions of the form
\begin{align*}
f
=
g
\circ
\b b_L
\circ
\b W_L
\circ
\sigma_{L-1}
\circ
\cdots
\circ
\sigma_1
\circ
\b b_1
\circ
\b W_1,
\end{align*}
where $\b W_l\in\R^{d_l\times d_{l-1}}$, $\b b_l\left(\b x\right)=\b x+\b a_l$ with $\b a_l\in\R^{d_l}$, $\sigma_l:\R^{d_l}\to\R^{d_l}$, and $g:\R^{d_L}\to\R^m$. The composition order is from right to left. At the function-space level,
\begin{align}
\label{eq:koopman-factorization}
f
=
\mK_{\b W_1}
\mK_{\b b_1}
\mK_{\sigma_1}
\cdots
\mK_{\b W_{L-1}}
\mK_{\b b_{L-1}}
\mK_{\sigma_{L-1}}
\mK_{\b W_L}
\mK_{\b b_L}
g.
\end{align}

For the Sobolev results, fix $\b M\succ0$ and exponents $s_l>d_l/2$. The space at layer $l$ is $\HMs{s_l}\left(\R^{d_l},\R^m\right)$, induced by $K_{s_l}=k_{s_l}\b M$. We make the following standing assumption.

\begin{assumption}[Sobolev admissibility]\label{assum:sobolev-admissibility}
For $l\in[L-1]$, the activation Koopman operator
\begin{align*}
\mK_{\sigma_l}:
\HMs{s_l}\left(\R^{d_l},\R^m\right)
\longrightarrow
\HMs{s_l}\left(\R^{d_l},\R^m\right)
\end{align*}
is bounded. The terminal map belongs to $\HMs{s_L}\left(\R^{d_L},\R^m\right)$. Sufficient smoothness conditions for activation boundedness are given in \Cref{lem:bounded-koopman-activation}.
\end{assumption}

For $B_g>0$, let $\F\left(B_g\right)$ be the class of all functions represented by the displayed architecture with arbitrary admissible biases, weights satisfying the layer-specific conditions stated later, and terminal maps satisfying
\begin{align*}
\|g\|_{\HMs{s_L}\left(\R^{d_L},\R^m\right)}
\le B_g.
\end{align*}
We suppress the argument $B_g$ when it is fixed. Smooth compactly supported vector-valued functions provide nontrivial admissible terminal maps.

\begin{assumption}[Bounded input kernel diagonal]\label{assum:kernel-diagonal}
There exists $\kappa>0$ such that
\begin{align*}
k_{s_0}\left(\b x,\b x\right)
\le\kappa,
\qquad
\b x\in\mX.
\end{align*}
\end{assumption}

The Koopman factorization isolates the contribution of each layer: translations, activations, and linear maps are bounded separately, and their norms are combined by submultiplicativity. The trace of $\b M$ enters through the vector-valued RKHS evaluation term, whereas determinant and Sobolev-symbol factors enter through the linear composition operators.

\section{Vector-Valued Sobolev Koopman Bounds}\label{sec:koopman-sobolev}
The Sobolev estimates combine a vector-valued RKHS evaluation term with layerwise Koopman operator norms. The task matrix appears through $\Tr\left(\b M\right)$ and the $\b M^{-1}$-weighted terminal norm. The linear maps contribute Sobolev-symbol ratios and determinant terms, while the nonlinearities contribute their composition-operator norms.

\subsection{Invertible deep networks}\label{sub:1}
Assume $d_l=d$ for $l=0,\ldots,L$, where $d\in\Zp$, and impose the monotone Sobolev-order condition
\begin{align*}
0
\le
s_0
\le
s_1
\le
\cdots
\le
s_L,
\qquad
s_l>
\frac{d}{2},
\qquad
l=0,\ldots,L.
\end{align*}
For an invertible linear map, this monotonicity ensures that every Sobolev-symbol supremum appearing below is finite. For $C,D>0$, define
\begin{align*}
\W
\coloneqq
\left\{
\b W\in\R^{d\times d}
\;\middle|\;
\|\b W\|\le C,
\quad
|\det\left(\b W\right)|\ge D
\right\}.
\end{align*}
The corresponding class is
\begin{align*}
\F_{\scriptscriptstyle \mm{inv}}
\coloneqq
\left\{
f\in\F\left(B_g\right)
\;\middle|\;
\left(\b W_1,\ldots,\b W_L\right)\in\W^L
\right\}.
\end{align*}

\begin{theorem}[Invertible Sobolev bound]\label{inv}
Under \Cref{assum:sobolev-admissibility,assum:kernel-diagonal} and the monotone Sobolev-order condition above,
\begin{align*}
\eR\left(\F_{\scriptscriptstyle \mm{inv}}\right)
&\le
B_g
\sqrt{
\frac{
\kappa\Tr\left(\b M\right)
}{n}
}
\prod_{l=1}^{L-1}
\|\mK_{\sigma_l}\|
\\
&\quad\cdot
\sup_{\left(\b W_1,\ldots,\b W_L\right)\in\W^L}
\prod_{l=1}^{L}
\left[
\sup_{\bomega\in\R^d}
\left(
\frac{
\left(1+\|\b W_l^{\top}\bomega\|_2^2\right)^{s_{l-1}}
}{
\left(1+\|\bomega\|_2^2\right)^{s_l}
}
\right)^{1/2}
\frac{1}{|\det\left(\b W_l\right)|^{1/2}}
\right].
\end{align*}
Every translation Koopman operator is an isometry on the weighted Sobolev spaces, so no bias-dependent factor appears.
\end{theorem}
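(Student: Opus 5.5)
Write $f=\mK_{\b W_1}\mK_{\b b_1}\mK_{\sigma_1}\cdots\mK_{\b W_L}\mK_{\b b_L}g$ as in \eqref{eq:koopman-factorization}. Then $f\in\HMs{s_0}\left(\R^d,\R^m\right)$, and the proof reduces the Rademacher sum to a single inner product in this input space. The reproducing identity gives
\begin{align*}
\sum_{i=1}^n\left\langle\bsigma_i,f\left(\b x_i\right)\right\rangle_{\R^m}
=
\left\langle f,\sum_{i=1}^n K_{s_0}\left(\cdot,\b x_i\right)\bsigma_i\right\rangle_{\HMs{s_0}}.
\end{align*}
By Cauchy--Schwarz, the absolute value of this sum is at most $\|f\|_{\HMs{s_0}}\,\|\sum_i K_{s_0}(\cdot,\b x_i)\bsigma_i\|$.

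\textbf{Evaluation term.} I apply Jensen's inequality and use the independence of the Rademacher coordinates, so cross terms with $i\ne j$ vanish. This gives
\begin{align*}
\RE\left\|\sum_i K_{s_0}\left(\cdot,\b x_i\right)\bsigma_i\right\|
\le
\left(\sum_i\RE\left[\bsigma_i^{\top}k_{s_0}\left(\b x_i,\b x_i\right)\b M\bsigma_i\right]\right)^{1/2}
=
\left(\sum_i k_{s_0}\left(\b x_i,\b x_i\right)\Tr\left(\b M\right)\right)^{1/2}.
\end{align*}
Under \Cref{assum:kernel-diagonal} this is at most $\sqrt{n\kappa\Tr(\b M)}$. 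Dividing by $n$ produces the prefactor $\sqrt{\kappa\Tr(\b M)/n}$. It remains to bound $\sup_f\|f\|_{\HMs{s_0}}$.

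\textbf{Operator norms.} Submultiplicativity along the factorization gives
\begin{align*}
\|f\|_{\HMs{s_0}}\le\prod_l\|\mK_{\b W_l}\|\,\|\mK_{\b b_l}\|\prod_{l=1}^{L-1}\|\mK_{\sigma_l}\|\,B_g,
\end{align*}
where $\mK_{\b W_l}:\HMs{s_l}\to\HMs{s_{l-1}}$. The activation norms are finite by \Cref{assum:sobolev-admissibility}.

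For translations, $\widehat{h\circ\b b}(\bomega)=e^{i\langle\bomega,\b a\rangle}\widehat h(\bomega)$. The unimodular phase cancels in $\widehat h^*\b M^{-1}\widehat h$, so $\mK_{\b b_l}$ is an isometry of $\HMs{s_l}$.

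For the linear layers, $\widehat{h\circ\b W}(\bomega)=|\det\b W|^{-1}\widehat h(\b W^{-\top}\bomega)$. Substituting $\bomega=\b W^{\top}\bm\eta$, with $\d\bomega=|\det\b W|\,\d\bm\eta$, gives
\begin{align*}
\|h\circ\b W\|_{\HMs{s_{l-1}}}^2
=
\frac{1}{|\det\b W|}\int\left(1+\|\b W^{\top}\bm\eta\|^2\right)^{s_{l-1}}\widehat h(\bm\eta)^*\b M^{-1}\widehat h(\bm\eta)\,\d\bm\eta .
\end{align*}
The integrand is nonnegative because $\b M^{-1}\succ0$. I therefore multiply and divide by $(1+\|\bm\eta\|^2)^{s_l}$ and pull out the supremum of the ratio. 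This gives $\|\mK_{\b W_l}\|\le\sup_{\bomega}\left(\frac{(1+\|\b W_l^\top\bomega\|^2)^{s_{l-1}}}{(1+\|\bomega\|^2)^{s_l}}\right)^{1/2}|\det\b W_l|^{-1/2}$.

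This supremum is finite: since $\|\b W_l\|\le C$, the ratio is at most $(1+C^2\|\bomega\|^2)^{s_{l-1}}/(1+\|\bomega\|^2)^{s_l}$, which is bounded because $s_{l-1}\le s_l$. Taking the supremum over $\W^L$ and over admissible $g$ gives the stated bound.

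The step needing the most care is the well-definedness of the Koopman operators. One must verify that $g\circ\b W$ is tempered with the stated Fourier transform, so that the norm identity holds, and that every intermediate function lies in the correct space. The weighted Fourier bound above shows exactly this, since it proves $\mK_{\b W_l}$ is everywhere defined and bounded from $\HMs{s_l}$ into $\HMs{s_{l-1}}$. Beyond this, the matrix weight $\b M^{-1}$ never interacts with the frequency change of variables, because all the maps act only on the input variable. This is why $\Tr(\b M)$ separates cleanly from the layerwise geometric factors.
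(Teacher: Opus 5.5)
Your proposal is correct and follows the paper's proof essentially step for step. The common steps are the reproducing-identity, Cauchy--Schwarz and Jensen trace computation giving $\sqrt{\kappa\Tr(\b M)/n}$, the translation isometry via the unimodular Fourier phase, and the invertible pullback bound via Fourier scaling and the substitution $\bomega=\b W^{\top}\boldsymbol{\eta}$, all combined by submultiplicativity; the only cosmetic difference is that the paper packages the Rademacher step as a general operator-image lemma through the adjoint $T^*$, whereas you apply Cauchy--Schwarz directly in $\HMs{s_0}$ and then bound $\|f\|$, which is equivalent.
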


\begin{corollary}[Common Sobolev order]\label{cor1}
Suppose $s_l=s>d/2$ for $l=0,\ldots,L$. Then
\begin{align*}
\eR\left(\F_{\scriptscriptstyle \mm{inv}}\right)
\le
B_g
\sqrt{
\frac{
\kappa\Tr\left(\b M\right)
}{n}
}
\left(
\frac{\max\left\{1,C^s\right\}}{D^{1/2}}
\right)^L
\prod_{l=1}^{L-1}
\|\mK_{\sigma_l}\|.
\end{align*}
\end{corollary}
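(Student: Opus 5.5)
The corollary follows by specializing \Cref{inv} to $s_l=s$ and bounding each layerwise bracket uniformly over $\W$. No new operator estimate is needed: the prefactor $B_g\sqrt{\kappa\Tr\left(\b M\right)/n}\prod_{l=1}^{L-1}\|\mK_{\sigma_l}\|$ is carried over unchanged. It then suffices to show that for every $\b W\in\W$,
\begin{align*}
\sup_{\bomega\in\R^d}
\left(
\frac{\left(1+\|\b W^{\top}\bomega\|_2^2\right)^{s}}{\left(1+\|\bomega\|_2^2\right)^{s}}
\right)^{1/2}
\frac{1}{|\det\left(\b W\right)|^{1/2}}
\le
\frac{\max\left\{1,C^s\right\}}{D^{1/2}}.
\end{align*}
Once this holds, the supremum over $\W^L$ of the product of $L$ nonnegative factors is bounded by the $L$-th power of the common bound.

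\textbf{Symbol ratio.} Use $\|\b W^{\top}\bomega\|_2\le\|\b W^{\top}\|\,\|\bomega\|_2=\|\b W\|\,\|\bomega\|_2\le C\|\bomega\|_2$. Setting $t=\|\bomega\|_2^2\ge0$, this gives
\begin{align*}
\frac{1+\|\b W^{\top}\bomega\|_2^2}{1+\|\bomega\|_2^2}
\le
\frac{1+C^2t}{1+t}
\le
\max\left\{1,C^2\right\}.
\end{align*}
The last inequality holds because $t\mapsto(1+C^2t)/(1+t)$ is monotone, with value $1$ at $t=0$ and limit $C^2$ as $t\to\infty$. Equivalently, $1+C^2t\le\max\{1,C^2\}(1+t)$ termwise. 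Raising to the power $s/2>0$, which preserves the inequality, yields $\left(\max\{1,C^2\}\right)^{s/2}=\max\{1,C^s\}$.

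\textbf{Determinant factor.} The constraint $|\det\left(\b W\right)|\ge D$ gives $|\det\left(\b W\right)|^{-1/2}\le D^{-1/2}$.

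Multiplying the two estimates and taking the $L$-fold product completes the argument. The only step requiring any care is the elementary scalar inequality $(1+C^2t)/(1+t)\le\max\{1,C^2\}$ uniformly in $t\ge0$, together with the identity $\|\b W^{\top}\|=\|\b W\|$. Both are routine, so I expect no real obstacle.
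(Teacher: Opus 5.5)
Your proposal is correct and matches the paper's proof. The paper also bounds $\|\b W^{\top}\bomega\|_2\le C\|\bomega\|_2$, uses $1+C^2\|\bomega\|_2^2\le\max\{1,C^2\}(1+\|\bomega\|_2^2)$ to get the factor $\max\{1,C^s\}$, bounds $|\det(\b W)|^{-1/2}\le D^{-1/2}$, and substitutes this uniform per-layer estimate into \Cref{inv}.
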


\subsection{Injective width-expanding networks}\label{sub:2}
Assume $d_0\le d_1\le\cdots\le d_L$ and set $q_l\coloneqq d_l-d_{l-1}$. In addition to $s_j>d_j/2$, assume
\begin{align*}
s_l>\frac{q_l}{2},
\qquad
0\le s_{l-1}\le s_l-\frac{q_l}{2},
\qquad
l\in[L].
\end{align*}
These conditions are sufficient for the standard Sobolev trace map from $\R^{d_l}$ to a $d_{l-1}$-dimensional linear subspace to be bounded at the stated orders~\citep{wendland2005scattered}.

For $C,D>0$, define
\begin{align*}
\Wl
\coloneqq
\left\{
\b W\in\R^{d_l\times d_{l-1}}
\;\middle|\;
\b W\text{ is injective},
\quad
\|\b W\|\le C,
\quad
\det\left(\b W^{\top}\b W\right)^{1/2}\ge D
\right\}.
\end{align*}
For $\b W\in\Wl$, let
\begin{align*}
R_{\b W}:
\HMs{s_l}\left(\R^{d_l},\R^m\right)
\longrightarrow
\HMs{s_{l-1}}\left(\ra\left(\b W\right),\R^m\right),
\qquad
R_{\b W}h
\coloneqq
h|_{\ra\left(\b W\right)}.
\end{align*}
Define the uniform restriction constant
\begin{align*}
\mm G_l
\coloneqq
\sup_{\b W\in\Wl}
\|R_{\b W}\|.
\end{align*}
The trace theorem and rotational invariance of the Sobolev norm imply $\mm G_l<\infty$.

The injective class is
\begin{align*}
\F_{\scriptscriptstyle \mm{inj}}
\coloneqq
\left\{
f\in\F\left(B_g\right)
\;\middle|\;
\left(\b W_1,\ldots,\b W_L\right)
\in
\prod_{l=1}^{L}\Wl
\right\}.
\end{align*}

\begin{theorem}[Injective Sobolev bound]\label{inj}
Under \Cref{assum:sobolev-admissibility,assum:kernel-diagonal} and the trace conditions above,
\begin{align*}
\eR\left(\F_{\scriptscriptstyle \mm{inj}}\right)
&\le
B_g
\sqrt{
\frac{
\kappa\Tr\left(\b M\right)
}{n}
}
\prod_{l=1}^{L-1}
\|\mK_{\sigma_l}\|
\\
&\quad\cdot
\sup_{\left(\b W_1,\ldots,\b W_L\right)
\in\prod_{l=1}^{L}\Wl}
\prod_{l=1}^{L}
\left[
\mm G_l
\sup_{\bomega\in\ra\left(\b W_l\right)}
\left(
\frac{
1+\|\b W_l^{\top}\bomega\|_2^2
}{
1+\|\bomega\|_2^2
}
\right)^{s_{l-1}/2}
\frac{1}{
\det\left(\b W_l^{\top}\b W_l\right)^{1/4}
}
\right].
\end{align*}
\end{theorem}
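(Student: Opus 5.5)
We follow the template of \Cref{inv}: first reduce the Rademacher complexity to an RKHS evaluation term, then bound the Koopman operator norm layer by layer.

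\emph{Step 1: reduction to an evaluation term.} Write $f=\mK_{\b W_1}\mK_{\b b_1}\mK_{\sigma_1}\cdots\mK_{\b W_L}\mK_{\b b_L}g\in\HMs{s_0}\left(\R^{d_0},\R^m\right)$. By the reproducing identity,
\begin{align*}
\sum_{i}\left\langle\bsigma_i,f\left(\b x_i\right)\right\rangle
=\left\langle f,\sum_i K_{s_0}\left(\cdot,\b x_i\right)\bsigma_i\right\rangle_{\HMs{s_0}}.
\end{align*}
Cauchy--Schwarz then bounds the supremum by $\sup\|f\|_{\HMs{s_0}}\cdot\left\|\sum_i K_{s_0}\left(\cdot,\b x_i\right)\bsigma_i\right\|$. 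Jensen's inequality and the independence of the Rademacher coordinates give
\begin{align*}
\RE\left\|\sum_i K_{s_0}\left(\cdot,\b x_i\right)\bsigma_i\right\|
\le\left(\sum_i k_{s_0}\left(\b x_i,\b x_i\right)\Tr\left(\b M\right)\right)^{1/2}
\le\sqrt{n\kappa\Tr\left(\b M\right)}.
\end{align*}
This produces the prefactor $B_g\sqrt{\kappa\Tr\left(\b M\right)/n}$.

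\emph{Step 2: submultiplicativity.} We bound $\|f\|\le\prod_l\|\mK_{\b W_l}\|\,\|\mK_{\b b_l}\|\prod_l\|\mK_{\sigma_l}\|\,B_g$. The translation operators are isometries, since a translation multiplies $\widehat h$ by a unimodular phase that cancels in $\widehat h^*\b M^{-1}\widehat h$. The activation norms are finite by \Cref{assum:sobolev-admissibility}. It remains to bound the injective linear factor $\mK_{\b W_l}:\HMs{s_l}\left(\R^{d_l},\R^m\right)\to\HMs{s_{l-1}}\left(\R^{d_{l-1}},\R^m\right)$.

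\emph{Step 3: the injective linear layer (main obstacle).} Factor the composition as $h\mapsto R_{\b W}h\mapsto \left(R_{\b W}h\right)\circ\b W$.

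The first map costs at most $\mm G_l$ by definition; the trace conditions keep $\mm G_l$ finite.

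For the second map, identify $\ra\left(\b W\right)$ isometrically with $\R^{d_{l-1}}$ via an orthonormal basis $U\in\R^{d_l\times d_{l-1}}$. Then $\b W=UA$ with $A=U^{\top}\b W$ an invertible $d_{l-1}\times d_{l-1}$ matrix satisfying $|\det A|=\det\left(\b W^{\top}\b W\right)^{1/2}$. For $u=R_{\b W}h$, viewed on $\R^{d_{l-1}}$, we have $\widehat{u\circ A}\left(\bomega\right)=|\det A|^{-1}\widehat u\left(A^{-\top}\bomega\right)$. Substituting $\bomega=A^{\top}\bm\eta$ gives
\begin{align*}
\|u\circ A\|^2_{\HMs{s_{l-1}}}
=|\det A|^{-1}\int\left(1+\|A^{\top}\bm\eta\|^2\right)^{s_{l-1}}\widehat u\left(\bm\eta\right)^*\b M^{-1}\widehat u\left(\bm\eta\right)\d\bm\eta .
\end{align*}
Since $u$ lies in the same order-$s_{l-1}$ space, we multiply and divide by $\left(1+\|\bm\eta\|^2\right)^{s_{l-1}}$. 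This bounds the integral by
\begin{align*}
\sup_{\bm\eta}\left(\frac{1+\|A^{\top}\bm\eta\|^2}{1+\|\bm\eta\|^2}\right)^{s_{l-1}}\|u\|^2 .
\end{align*}
Mapping $\bm\eta$ to $\bomega=U\bm\eta\in\ra\left(\b W\right)$ gives $\|A^{\top}\bm\eta\|=\|\b W^{\top}\bomega\|$ and $\|\bm\eta\|=\|\bomega\|$. The supremum is therefore exactly the displayed one over $\ra\left(\b W_l\right)$, with exponent $s_{l-1}/2$ after the square root, and the determinant factor is $\det\left(\b W_l^{\top}\b W_l\right)^{-1/4}$.

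The delicate points are making the range identification consistent with how $\mm G_l$ and $\HMs{s_{l-1}}\left(\ra\left(\b W\right),\R^m\right)$ are defined, and checking that the $\b M^{-1}$ weighting commutes with everything; it does, since $\b M$ acts only on outputs. Combining the three steps and taking the supremum over admissible weights yields the stated bound.
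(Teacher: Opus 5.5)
Your proposal is correct and follows essentially the same route as the paper. The paper likewise uses the reproducing-identity, Cauchy--Schwarz, Jensen and trace estimate (packaged as its operator-image Rademacher lemma), isometric translations, and the factorization $\mK_{\b W_l}=P_{\b W_l}R_{\b W_l}$, where the orthonormal range basis $\b W=\b Q\b A$ reduces the pullback to the invertible square case at equal order $s_{l-1}$. The only cosmetic difference is that you bound $\sup\|f\|$ directly rather than passing through the adjoint $T^*$.
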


Fix $l\in[L]$, $\b W_l\in\Wl$, and $h\in\HMs{s_l}\left(\R^{d_l},\R^m\right)$. Define the realized restriction ratio
\begin{align*}
\gamma_l\left(h,\b W_l\right)
\coloneqq
\begin{cases}
0,
& h=0,
\\
\displaystyle
\frac{
\|R_{\b W_l}h\|_{\HMs{s_{l-1}}\left(\ra\left(\b W_l\right),\R^m\right)}
}{
\|h\|_{\HMs{s_l}\left(\R^{d_l},\R^m\right)}
},
& h\ne0.
\end{cases}
\end{align*}
The factorization $\mK_{\b W_l}=P_{\b W_l}R_{\b W_l}$ proved in \Cref{lem:complete-injective-layer} then gives the function-dependent estimate
\begin{align*}
\|\mK_{\b W_l}h\|_{\HMs{s_{l-1}}\left(\R^{d_{l-1}},\R^m\right)}
&\le
\gamma_l\left(h,\b W_l\right)
\sup_{\bomega\in\ra\left(\b W_l\right)}
\left(
\frac{
1+\|\b W_l^{\top}\bomega\|_2^2
}{
1+\|\bomega\|_2^2
}
\right)^{s_{l-1}/2}
\frac{
\|h\|_{\HMs{s_l}\left(\R^{d_l},\R^m\right)}
}{
\det\left(\b W_l^{\top}\b W_l\right)^{1/4}
}.
\end{align*}
Moreover,
\begin{align*}
0
\le
\gamma_l\left(h,\b W_l\right)
\le
\|R_{\b W_l}\|
\le
\mm G_l.
\end{align*}
In a fixed network, $h$ is the actual downstream function presented to $\mK_{\b W_l}$ in the Koopman chain. The ratio therefore records the restriction cost of that particular realization. It is function-dependent and cannot replace the uniform constant $\mm G_l$ in the class-level Rademacher theorem.

\begin{remark}[Rank-deficient weight matrices]
Injectivity is used to obtain a bounded pullback between the stated Sobolev spaces and a nonzero range-volume determinant. For a rank-deficient matrix, the factor $\det\left(\b W^{\top}\b W\right)^{-1/4}$ is singular, and the theorem above does not apply. The stabilized expression $\det\left(\b I+\b W^{\top}\b W\right)^{-1/4}$ can be used as a finite empirical surrogate, as in the experiments, but this substitution alone does not prove boundedness of the rank-deficient Koopman operator. A rigorous non-injective extension requires a different function-space or quotient-space analysis and is outside the scope of this paper.
\end{remark}

\section{Vector-Valued Brownian/Cameron--Martin Koopman Bound}\label{sec:koopman-brownian}
The Brownian result concerns a different hypothesis space from the Sobolev results. We work on the interval $\mX=\left[-R,R\right]$ with $R>0$, the signed Brownian kernel
\begin{align*}
\kB\left(x,x'\right)
=
\frac{|x|+|x'|-|x-x'|}{2},
\end{align*}
and the matrix-valued kernel $K\left(x,x'\right)=\kB\left(x,x'\right)\b M$, where $\b M\succ0$. By \Cref{lem:BrownianCameronMartin},
\begin{align*}
\HB\left(\mX,\R^m\right)
=
\left\{
f:\mX\to\R^m
\;\middle|\;
f\left(0\right)=0,
\ f\text{ is absolutely continuous},
\ f'\in L^2\left(\mX,\R^m\right)
\right\},
\end{align*}
with the exact norm
\begin{align*}
\|f\|_{\HB\left(\mX,\R^m\right)}^2
=
\int_{-R}^{R}
f'\left(t\right)^{\top}\b M^{-1}f'\left(t\right)
\,\d t.
\end{align*}

For constants $0<D\le C\le1$, define
\begin{align*}
\W_{\mathrm B}
\coloneqq
\left\{W\in\R:D\le|W|\le C\right\}.
\end{align*}
The restriction $C\le1$ ensures that $x\mapsto Wx$ maps $\mX$ into itself. For $l\in[L-1]$, assume that $\sigma_l:\mX\to\mX$ is a $C^1$ diffeomorphism satisfying $\sigma_l\left(0\right)=0$. The anchoring condition is necessary because every function in the Brownian RKHS vanishes at the origin. Bias translations are therefore not included in this regime.

Define
\begin{align*}
\F_{\mathrm{inv}}^{\scriptscriptstyle (\mathrm B)}
\coloneqq
\left\{
\mK_{W_1}
\mK_{\sigma_1}
\cdots
\mK_{\sigma_{L-1}}
\mK_{W_L}g
\;\middle|\;
\|g\|_{\HB\left(\mX,\R^m\right)}\le B_g,
\quad
\left(W_1,\ldots,W_L\right)\in\W_{\mathrm B}^L
\right\}.
\end{align*}

\begin{theorem}[Brownian/Cameron--Martin bound]\label{thm:BrownianKoopman}
For every fixed sample $\left\{x_1,\ldots,x_n\right\}\subset\mX$,
\begin{align*}
\eR\left(\F_{\mathrm{inv}}^{\scriptscriptstyle (\mathrm B)}\right)
&\le
B_g
\sqrt{
\frac{
R\Tr\left(\b M\right)
}{n}
}
\prod_{l=1}^{L-1}
\|\sigma_l'\|_\infty^{1/2}
\\
&\quad\cdot
\sup_{\left(W_1,\ldots,W_L\right)\in\W_{\mathrm B}^L}
\prod_{l=1}^{L}|W_l|^{1/2}.
\end{align*}
\end{theorem}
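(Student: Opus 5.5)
The argument has three parts: a reproducing-property estimate for a single terminal function, a layerwise bound on the composition operators in the anchored derivative norm, and submultiplicativity to combine them.

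\emph{Evaluation term.} For any $h\in\HB\left(\mX,\R^m\right)$, the reproducing identity gives
\begin{align*}
\sum_{i=1}^n\left\langle\bsigma_i,h\left(x_i\right)\right\rangle_{\R^m}
=
\left\langle h,\sum_{i=1}^n K\left(\cdot,x_i\right)\bsigma_i\right\rangle_{\HB}.
\end{align*}
By Cauchy--Schwarz, the absolute value is at most $\|h\|_{\HB}\,\bigl\|\sum_i K\left(\cdot,x_i\right)\bsigma_i\bigr\|_{\HB}$. Jensen's inequality and independence of the Rademacher coordinates then give
\begin{align*}
\RE\Bigl\|\sum_i K\left(\cdot,x_i\right)\bsigma_i\Bigr\|_{\HB}
\le
\Bigl(\sum_i \kB\left(x_i,x_i\right)\Tr\left(\b M\right)\Bigr)^{1/2}.
\end{align*}
Since $\kB\left(x,x\right)=|x|\le R$ on $\mX$, this is at most $\sqrt{nR\Tr\left(\b M\right)}$. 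It therefore suffices to show $\|f\|_{\HB}\le B_g\prod_l\|\sigma_l'\|_\infty^{1/2}\prod_l|W_l|^{1/2}$ for every $f$ in the class, and then take the supremum over weights.

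\emph{Layerwise composition bounds.} I would use \Cref{lem:BrownianCameronMartin} throughout, so every norm is $\int_{-R}^{R} h'^{\top}\b M^{-1}h'\,\d t$.

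For a linear layer with $D\le|W|\le C\le1$, the function $h\circ W$ vanishes at $0$, is absolutely continuous, and has derivative $W h'\left(Wt\right)$. Substituting $u=Wt$ gives
\begin{align*}
\|h\circ W\|^2
=
W^2\int_{-R}^{R}h'\left(Wt\right)^{\top}\b M^{-1}h'\left(Wt\right)\d t
=
|W|\int_{W\left[-R,R\right]}h'^{\top}\b M^{-1}h'\,\d u
\le
|W|\,\|h\|^2.
\end{align*}
The last inequality holds because $W\left[-R,R\right]\subseteq\left[-R,R\right]$ and the integrand is nonnegative, since $\b M^{-1}\succ0$. Hence $\|\mK_W\|\le|W|^{1/2}$.

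For an activation, $\sigma_l$ is a $C^1$ diffeomorphism of $\mX$ with $\sigma_l\left(0\right)=0$, so it is strictly monotone. Substituting $u=\sigma_l\left(t\right)$ in
\begin{align*}
\int\sigma_l'\left(t\right)^2\,h'\left(\sigma_l\left(t\right)\right)^{\top}\b M^{-1}h'\left(\sigma_l\left(t\right)\right)\d t
\end{align*}
removes one factor $|\sigma_l'|$, and bounding the remaining factor by $\|\sigma_l'\|_\infty$ gives $\|\mK_{\sigma_l}\|\le\|\sigma_l'\|_\infty^{1/2}$.

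\emph{Main obstacle.} The delicate step is justifying that the compositions remain in the Cameron--Martin space. Specifically, $h\circ\phi$ must be absolutely continuous with the chain rule valid almost everywhere, when $h$ is only absolutely continuous with $L^2$ derivative and $\phi$ is $C^1$ and monotone or linear. For monotone $C^1$ $\phi$, I would argue that $h\circ\phi$ is absolutely continuous, and justify the chain rule together with the change of variables via the standard result for compositions $\mathrm{AC}\circ\text{monotone AC}$. The finiteness of the resulting integral, shown above, then places the composition in $\HB$ by \Cref{lem:BrownianCameronMartin}.

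\emph{Conclusion.} By submultiplicativity along the Koopman chain,
\begin{align*}
\|f\|_{\HB}\le\prod_{l=1}^{L}|W_l|^{1/2}\prod_{l=1}^{L-1}\|\sigma_l'\|_\infty^{1/2}\,B_g.
\end{align*}
Taking the supremum over $g$ and over the weights inside the expectation, and dividing by $n$, yields the stated bound.
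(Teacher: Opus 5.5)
Your proposal is correct and follows essentially the same route as the paper. The paper packages your evaluation step as a general operator-image Rademacher lemma (adjoint, Cauchy--Schwarz, Jensen, with $\kappa=R$ from $\kB(x,x)=|x|$). It proves the same layerwise bounds $\|\mK_W\|\le|W|^{1/2}$ and $\|\mK_\sigma\|\le\|\sigma'\|_\infty^{1/2}$ by the same substitutions in the anchored derivative norm, and combines them by submultiplicativity. Your extra remark on justifying absolute continuity and the a.e.\ chain rule for $h\circ\phi$ covers a point the paper simply asserts.
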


The layerwise estimates behind the theorem are
\begin{align*}
\|\mK_W\|
&\le |W|^{1/2},
\\
\|\mK_\sigma\|
&\le\|\sigma'\|_\infty^{1/2}.
\end{align*}
They follow directly from the anchored derivative norm; no compactly supported extension, Fourier seminorm, or inverse-derivative factor is required.

\begin{remark}[Scope of the Brownian comparison]
The Sobolev and Brownian results act on different RKHSs, use different domain conditions, and cover different architecture classes. The Brownian theorem is one-dimensional, excludes bias translations, and requires every layer map to preserve $\left[-R,R\right]$ and the anchor at zero. It should therefore be interpreted as a separate operator-theoretic regime, not as a uniform improvement over the multidimensional Sobolev bounds.
\end{remark}

\section{Shared Operator Learning and Transfer}\label{sec:shared-transfer}
We next consider a shared operator learned from several source tasks. The purpose is to record the Hilbert-space representer structure and the complexity of the target class induced by that operator. The section does not claim that operator sharing improves every transfer problem.

\subsection{Shared operator-learning problem}
Let $\mH_{\mathrm{out}}$ be a Hilbert space of terminal maps. Examples include $\HMs{s_L}\left(\R^{d_L},\R^m\right)$ and $\HB\left(\mX,\R^m\right)$. Let $\mH_K$ be a vvRKHS of functions from $\mX$ to $\R^m$, induced by a matrix-valued kernel $K$. We write
\begin{align*}
\m L_2\left(\mH_{\mathrm{out}},\mH_K\right)
\end{align*}
for the Hilbert space of Hilbert--Schmidt operators from $\mH_{\mathrm{out}}$ to $\mH_K$. If $\left(e_j\right)_{j\in J}$ is any orthonormal basis of $\mH_{\mathrm{out}}$, its inner product and norm are
\begin{align*}
\left\langle S,T\right\rangle_{\mathrm{HS}}
&\coloneqq
\sum_{j\in J}
\left\langle Se_j,Te_j\right\rangle_{\mH_K},
\\
\|T\|_{\mathrm{HS}}^2
&\coloneqq
\sum_{j\in J}
\|Te_j\|_{\mH_K}^2,
\end{align*}
and these definitions are independent of the chosen orthonormal basis. We learn an operator
\begin{align*}
T\in\m L_2\left(\mH_{\mathrm{out}},\mH_K\right).
\end{align*}
For $u\in\mH_K$ and $v\in\mH_{\mathrm{out}}$, the rank-one operator $u\otimes v$ is defined by
\begin{align*}
\left(u\otimes v\right)h
=
\left\langle v,h\right\rangle_{\mH_{\mathrm{out}}}u,
\qquad
h\in\mH_{\mathrm{out}}.
\end{align*}

Let $N_{\mathrm s}\in\Zp$ be the number of source tasks. For task $t\in\left[N_{\mathrm s}\right]$, let $g_t\in\mH_{\mathrm{out}}$ be its terminal representation and let $\left(\b x_{ti},\b y_{ti}\right)\in\mX\times\R^m$, $i\in[n_t]$, be its training data. For $\lambda>0$, consider
\begin{align}
\label{eq:shared-transfer-opt}
\underset{T\in\m L_2\left(\mH_{\mathrm{out}},\mH_K\right)}{\operatorname{minimize}}
\sum_{t=1}^{N_{\mathrm s}}
\sum_{i=1}^{n_t}
\ell\left(\b y_{ti},\left(Tg_t\right)\left(\b x_{ti}\right)\right)
+
\lambda\|T\|_{\mathrm{HS}}^2,
\end{align}
where $\ell:\R^m\times\R^m\to\left[0,\infty\right)$ is finite-valued, continuous, and convex in its second argument.

\begin{proposition}[Well-posedness of shared operator learning]\label{prop:shared-transfer-well-posed}
Problem \eqref{eq:shared-transfer-opt} has a unique minimizer in $\m L_2\left(\mH_{\mathrm{out}},\mH_K\right)$.
\end{proposition}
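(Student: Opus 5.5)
We will prove the proposition with the direct method of the calculus of variations on the Hilbert space $\m L_2\left(\mH_{\mathrm{out}},\mH_K\right)$, combined with strict convexity of the regularizer.

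Write the objective as $J(T)=\Phi(T)+\lambda\|T\|_{\mathrm{HS}}^2$, where
\begin{align*}
\Phi(T)=\sum_{t=1}^{N_{\mathrm s}}\sum_{i=1}^{n_t}\ell\left(\b y_{ti},\left(Tg_t\right)\left(\b x_{ti}\right)\right).
\end{align*}

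\textbf{Step 1: evaluation functionals are continuous and linear.} For fixed $t,i$, the map $T\mapsto\left(Tg_t\right)\left(\b x_{ti}\right)\in\R^m$ is linear. For each $\b y\in\R^m$ the reproducing identity gives
\begin{align*}
\left(Tg_t\right)\left(\b x_{ti}\right)^{\top}\b y=\left\langle Tg_t,K\left(\cdot,\b x_{ti}\right)\b y\right\rangle_{\mH_K}.
\end{align*}
Hence
\begin{align*}
\left|\left(Tg_t\right)\left(\b x_{ti}\right)^{\top}\b y\right|\le\|T\|\,\|g_t\|\,\left(\b y^{\top}K\left(\b x_{ti},\b x_{ti}\right)\b y\right)^{1/2}\le\|T\|_{\mathrm{HS}}\|g_t\|\left(\b y^{\top}K\left(\b x_{ti},\b x_{ti}\right)\b y\right)^{1/2},
\end{align*}
using $\|T\|\le\|T\|_{\mathrm{HS}}$. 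The map is therefore a bounded linear map into $\R^m$. Equivalently, it equals $\left\langle T,\left(K\left(\cdot,\b x_{ti}\right)\b y\right)\otimes g_t\right\rangle_{\mathrm{HS}}$, which is also the identity that will drive the later representer theorem.

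\textbf{Step 2: convexity and continuity of $J$.} Since $\ell\left(\b y,\cdot\right)$ is convex and continuous on $\R^m$, each summand of $\Phi$ is a convex continuous function composed with a bounded linear map. So $\Phi$ is convex, continuous, and nonnegative on the Hilbert space. The term $\lambda\|T\|_{\mathrm{HS}}^2$ is continuous and strictly convex because $\lambda>0$. Consequently $J$ is continuous and strictly convex.

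\textbf{Step 3: coercivity and existence.} Because $\ell\ge0$, we have $J(T)\ge\lambda\|T\|_{\mathrm{HS}}^2$. Any minimizing sequence therefore lies in the bounded sublevel set $\{J\le J(0)\}$, which is closed and convex by continuity and convexity. A convex continuous function is weakly lower semicontinuous, and bounded closed convex sets in a Hilbert space are weakly compact. Extracting a weakly convergent subsequence thus yields a minimizer. This weak-compactness step is the only delicate point, and the convex-plus-continuous structure handles it.

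An alternative route makes the same point more concretely: restrict to the finite-dimensional subspace spanned by $\left(K\left(\cdot,\b x_{ti}\right)\b e_a\right)\otimes g_t$ via an orthogonal projection. The projection leaves $\Phi$ unchanged and does not increase the HS norm, so the problem reduces to a coercive continuous function on a finite-dimensional space.

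\textbf{Step 4: uniqueness.} If $T_1\ne T_2$ were both minimizers with value $J^\ast$, strict convexity would give
\begin{align*}
J\left(\tfrac12\left(T_1+T_2\right)\right)<J^\ast,
\end{align*}
a contradiction. Hence the minimizer is unique.
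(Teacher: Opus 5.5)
Your proof is correct and follows essentially the same route as the paper: the training-evaluation maps are bounded linear maps, $\lambda\|T\|_{\mathrm{HS}}^2$ gives coercivity, a bounded minimizing sequence has a weakly convergent subsequence whose limit is a minimizer, and uniqueness comes from strict convexity of the squared Hilbert--Schmidt norm. The only difference is in the lower-semicontinuity step. The paper notes that the vector of training predictions converges in norm because it lies in a finite-dimensional space, so only continuity of $\ell$ is used there. You instead invoke weak lower semicontinuity of convex continuous functionals, which is equally valid.
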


\subsection{Representer theorem}
\begin{theorem}[Shared-operator representer theorem]\label{thm:shared-transfer-representer}
The unique minimizer $\widehat T$ of \eqref{eq:shared-transfer-opt} has a finite-rank representation
\begin{align*}
\widehat T
=
\sum_{t=1}^{N_{\mathrm s}}
\sum_{i=1}^{n_t}
\sum_{a=1}^{m}
c_{tia}
\left(
K\left(\cdot,\b x_{ti}\right)\bea
\right)
\otimes g_t,
\end{align*}
where $\left(\bea\right)_{a=1}^{m}$ is the standard basis of $\R^m$. Consequently, for every $g\in\mH_{\mathrm{out}}$ and $\b x\in\mX$,
\begin{align*}
\left(\widehat T g\right)\left(\b x\right)
=
\sum_{t=1}^{N_{\mathrm s}}
\sum_{i=1}^{n_t}
\sum_{a=1}^{m}
c_{tia}
\left\langle g_t,g\right\rangle_{\mH_{\mathrm{out}}}
K\left(\b x,\b x_{ti}\right)\bea.
\end{align*}
\end{theorem}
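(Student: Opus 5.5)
We prove this with the standard projection argument for representer theorems, carried out in the Hilbert space $\m L_2\left(\mH_{\mathrm{out}},\mH_K\right)$. Existence and uniqueness of $\widehat T$ come from \Cref{prop:shared-transfer-well-posed}, so it remains to show that $\widehat T$ lies in the finite-dimensional subspace
\begin{align*}
\mV\coloneqq\Span\left\{\left(K\left(\cdot,\b x_{ti}\right)\bea\right)\otimes g_t:\ t\in\left[N_{\mathrm s}\right],\ i\in[n_t],\ a\in[m]\right\}.
\end{align*}
Each rank-one operator $u\otimes v$ is Hilbert--Schmidt with $\|u\otimes v\|_{\mathrm{HS}}=\|u\|_{\mH_K}\|v\|_{\mH_{\mathrm{out}}}$. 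Hence $\mV$ is a finite-dimensional, and therefore closed, subspace of $\m L_2$, and we can write $\widehat T=T_\parallel+T_\perp$ with $T_\parallel\in\mV$ and $T_\perp\perp\mV$.

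\emph{Key identity.} The first step is to compute HS inner products with rank-one operators. Choose an orthonormal basis of $\mH_{\mathrm{out}}$ and expand $v$ in it. This gives $\left\langle T,u\otimes v\right\rangle_{\mathrm{HS}}=\left\langle Tv,u\right\rangle_{\mH_K}$; by basis independence one may also take a basis containing $v/\|v\|$. Take $u=K\left(\cdot,\b x_{ti}\right)\bea$ and $v=g_t$. The vector-valued reproducing identity then yields
\begin{align*}
\left\langle T_\perp,\left(K\left(\cdot,\b x_{ti}\right)\bea\right)\otimes g_t\right\rangle_{\mathrm{HS}}
=\left(T_\perp g_t\right)\left(\b x_{ti}\right)^{\top}\bea=0
\end{align*}
for all $t,i,a$. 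Therefore $\left(\widehat Tg_t\right)\left(\b x_{ti}\right)=\left(T_\parallel g_t\right)\left(\b x_{ti}\right)$ for every data point, so the loss term of \eqref{eq:shared-transfer-opt} is the same at $\widehat T$ and at $T_\parallel$.

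\emph{Conclusion.} By Pythagoras, $\|\widehat T\|_{\mathrm{HS}}^2=\|T_\parallel\|_{\mathrm{HS}}^2+\|T_\perp\|_{\mathrm{HS}}^2$. If $T_\perp\neq0$, then $T_\parallel$ attains a strictly smaller objective, since $\lambda>0$; this contradicts minimality. Hence $\widehat T=T_\parallel\in\mV$, which is the displayed expansion with some coefficients $c_{tia}$. The pointwise formula follows by applying the definition of $u\otimes v$ to $g$:
\begin{align*}
\left(\left(K\left(\cdot,\b x_{ti}\right)\bea\right)\otimes g_t\right)g=\left\langle g_t,g\right\rangle_{\mH_{\mathrm{out}}}K\left(\cdot,\b x_{ti}\right)\bea.
\end{align*}
We then evaluate at $\b x$ and use linearity.

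The only step needing care is the HS inner-product identity $\left\langle T,u\otimes v\right\rangle_{\mathrm{HS}}=\left\langle Tv,u\right\rangle$. Its proof is a short basis expansion: $\sum_j\left\langle Te_j,u\right\rangle\left\langle v,e_j\right\rangle=\left\langle T\sum_j\left\langle v,e_j\right\rangle e_j,u\right\rangle$, where boundedness of $T$ justifies interchanging $T$ with the sum. With this identity in hand, the rest is routine.
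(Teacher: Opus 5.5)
Your proposal is correct and follows essentially the same projection argument as the paper. You split $\widehat T$ into its component in the finite-dimensional span $\mV$ of the rank-one operators $\left(K\left(\cdot,\b x_{ti}\right)\bea\right)\otimes g_t$ and an orthogonal remainder, use $\left\langle T,u\otimes v\right\rangle_{\mathrm{HS}}=\left\langle Tv,u\right\rangle_{\mH_K}$ with the reproducing property to show the training predictions see only the $\mV$-component, and then use Pythagoras with $\lambda>0$ to kill the remainder; the only cosmetic difference is that the paper decomposes an arbitrary $T$ rather than $\widehat T$ itself.
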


\subsection{Finite-dimensional reduction}
\begin{theorem}[Squared-loss coefficient problem]\label{thm:shared-transfer-finite}
Assume $\ell\left(\b y,\b z\right)=\|\b y-\b z\|_2^2$. The coefficients of any representer expansion of the unique minimizer in \Cref{thm:shared-transfer-representer} solve
\begin{align*}
\min_c\;&
\sum_{t=1}^{N_{\mathrm s}}
\sum_{i=1}^{n_t}
\left\|
\b y_{ti}
-
\sum_{t'=1}^{N_{\mathrm s}}
\sum_{j=1}^{n_{t'}}
\sum_{a=1}^{m}
c_{t'ja}
\left\langle g_{t'},g_t\right\rangle_{\mH_{\mathrm{out}}}
K\left(\b x_{ti},\b x_{t'j}\right)\bea
\right\|_2^2
\\
&+
\lambda
\sum_{t=1}^{N_{\mathrm s}}
\sum_{i=1}^{n_t}
\sum_{a=1}^{m}
\sum_{t'=1}^{N_{\mathrm s}}
\sum_{j=1}^{n_{t'}}
\sum_{b=1}^{m}
c_{tia}c_{t'jb}
\left\langle g_t,g_{t'}\right\rangle_{\mH_{\mathrm{out}}}
\bea^{\top}
K\left(\b x_{ti},\b x_{t'j}\right)
\beb.
\end{align*}
Conversely, every coefficient tensor $c$ defines a finite-rank operator of the representer form, so the displayed problem has the same optimal value as \eqref{eq:shared-transfer-opt} under squared loss.
\end{theorem}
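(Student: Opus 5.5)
The plan is to plug the representer expansion of \Cref{thm:shared-transfer-representer} into the objective of \eqref{eq:shared-transfer-opt} under squared loss and compute both terms explicitly. For a coefficient tensor $c$, set
\begin{align*}
T_c\coloneqq\sum_{t,i,a}c_{tia}\left(K\left(\cdot,\b x_{ti}\right)\bea\right)\otimes g_t .
\end{align*}
This is a finite sum of rank-one operators, hence Hilbert--Schmidt. So every $c$ gives an admissible point of \eqref{eq:shared-transfer-opt}, which is the ``conversely'' direction.

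\textbf{Data-fit term.} By the definition of the rank-one operator, $T_cg_t=\sum_{t',j,a}c_{t'ja}\langle g_{t'},g_t\rangle_{\mH_{\mathrm{out}}}K\left(\cdot,\b x_{t'j}\right)\bea$. Evaluating at $\b x_{ti}$ gives exactly the predicted vector inside the squared norm of the display. This uses only that elements of $\mH_K$ are functions and that $K\left(\cdot,\b x\right)\b y\in\mH_K$.

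\textbf{Regulariser.} First, $T_c$ vanishes on the orthogonal complement of $\Span\{g_t\}$. Next, I use the identity $\langle u\otimes v,u'\otimes v'\rangle_{\mathrm{HS}}=\langle u,u'\rangle_{\mH_K}\langle v,v'\rangle_{\mH_{\mathrm{out}}}$. To prove it, choose an orthonormal basis $(e_j)$ and compute
\begin{align*}
\sum_j\langle v,e_j\rangle\langle v',e_j\rangle\langle u,u'\rangle=\langle v,v'\rangle\langle u,u'\rangle,
\end{align*}
by Parseval; basis independence is already stated in the paper. Bilinearity then gives
\begin{align*}
\|T_c\|_{\mathrm{HS}}^2=\sum c_{tia}c_{t'jb}\langle g_t,g_{t'}\rangle\left\langle K\left(\cdot,\b x_{ti}\right)\bea,K\left(\cdot,\b x_{t'j}\right)\beb\right\rangle_{\mH_K}.
\end{align*}
By the reproducing identity the last inner product is $\left(K\left(\b x_{ti},\b x_{t'j}\right)\beb\right)^{\top}\bea=\bea^{\top}K\left(\b x_{ti},\b x_{t'j}\right)\beb$, which matches the displayed penalty. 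Hence the finite-dimensional objective $J(c)$ equals the objective of \eqref{eq:shared-transfer-opt} evaluated at $T_c$, for every $c$.

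\textbf{Optimal values.} Write $\Phi$ for the original objective. Since each $T_c$ is admissible, $\inf_c J(c)=\inf_c\Phi\left(T_c\right)\ge\min_T\Phi$. By \Cref{prop:shared-transfer-well-posed} and \Cref{thm:shared-transfer-representer}, the unique minimizer $\widehat T$ equals $T_{\hat c}$ for some $\hat c$. Then $J\left(\hat c\right)=\Phi\left(\widehat T\right)=\min\Phi$. So the two optimal values coincide and $\hat c$ attains $\min_cJ$.

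If $\tilde c$ is any other coefficient tensor with $T_{\tilde c}=\widehat T$, then $J\left(\tilde c\right)=\Phi\left(T_{\tilde c}\right)=\min\Phi$ as well. Thus every coefficient tensor of any representer expansion of $\widehat T$ solves the finite problem. The coefficients need not be unique, because the Gram structure may be singular, and the argument never needs uniqueness of $c$.

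The only step requiring care is the Hilbert--Schmidt inner product of rank-one operators together with the transpose convention in the reproducing identity. I expect this to be the main, though minor, obstacle; the rest is bookkeeping.
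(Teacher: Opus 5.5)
Your proposal is correct and follows the paper's proof essentially step for step. You substitute the representer expansion, evaluate $T_c g_t$ at $\b x_{ti}$ for the data-fit term, and expand $\|T_c\|_{\mathrm{HS}}^2$ via the rank-one Hilbert--Schmidt identity and the reproducing property; you then combine the representer theorem with the admissibility of every $T_c$ to equate the optimal values. Your explicit check that any tensor $\tilde c$ with $T_{\tilde c}=\widehat T$ also attains the minimum is just a slightly more careful version of the paper's closing remark on non-unique coefficients.
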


\subsection{Target-transfer bound}
Let $K\left(\b x,\b x'\right)=k\left(\b x,\b x'\right)\b M$ with $\b M\succeq0$ and $k\left(\b x,\b x\right)\le\kappa$. Assume that $\widehat T$ is measurable with respect to the source data and is independent of the target sample. Conditionally on $\widehat T$, define
\begin{align*}
\F_{\widehat T}\left(B\right)
\coloneqq
\left\{
\widehat T g:
\|g\|_{\mH_{\mathrm{out}}}\le B
\right\}.
\end{align*}
Let
\begin{align*}
R_0\left(f\right)
\coloneqq
\mathbb E\left[\ell\left(Y,f\left(X\right)\right)\right],
\qquad
R_0^\star
\coloneqq
\inf_f R_0\left(f\right),
\end{align*}
where the infimum is over all measurable predictors for which the risk is defined. Let $\widehat f_0$ be a measurable empirical risk minimizer over $\F_{\widehat T}\left(B\right)$ on an independent target sample of size $n_0$. Define the approximation error
\begin{align*}
A_{\widehat T}\left(B\right)
\coloneqq
\inf_{\|g\|_{\mH_{\mathrm{out}}}\le B}
R_0\left(\widehat T g\right)
-
R_0^\star.
\end{align*}

\begin{proposition}[Conditional target bound]\label{prop:shared-transfer-bound}
Assume $0\le\ell\le C_\ell$ and that $\ell\left(y,\cdot\right)$ is $L_\ell$-Lipschitz with respect to the Euclidean norm. Assume also that all suprema appearing in the Rademacher complexities and uniform-deviation quantities used below are measurable. Conditionally on $\widehat T$, with probability at least $1-\delta$ over the target sample,
\begin{align*}
R_0\left(\widehat f_0\right)-R_0^\star
&\le
A_{\widehat T}\left(B\right)
+
4\sqrt{2}\,L_\ell B
\|\widehat T\|
\sqrt{
\frac{
\kappa\Tr\left(\b M\right)
}{n_0}
}
\\
&\quad+
2C_\ell
\sqrt{
\frac{
\log\left(2/\delta\right)
}{2n_0}
}.
\end{align*}
The same statement holds with $\|\widehat T\|_{\mathrm{HS}}$ in place of $\|\widehat T\|$.
\end{proposition}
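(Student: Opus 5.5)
Conditionally on $\widehat T$ the class $\F_{\widehat T}(B)$ is fixed, because $\widehat T$ depends only on the source data and the target sample is independent of it. The proof is therefore the standard excess-risk decomposition for a fixed class, combined with a vector-valued Rademacher bound for the image of a ball under a bounded operator. Let $f_B$ be any element of $\F_{\widehat T}(B)$ whose risk comes within $\varepsilon$ of the infimum defining $A_{\widehat T}(B)$. Write $\widehat R_0$ for the empirical target risk. We decompose
\begin{align*}
R_0(\widehat f_0)-R_0^\star
&=\bigl[R_0(\widehat f_0)-\widehat R_0(\widehat f_0)\bigr]
+\bigl[\widehat R_0(\widehat f_0)-\widehat R_0(f_B)\bigr]
\\
&\quad+\bigl[\widehat R_0(f_B)-R_0(f_B)\bigr]
+\bigl[R_0(f_B)-R_0^\star\bigr].
\end{align*}
The second bracket is nonpositive by the ERM property, and the last is at most $A_{\widehat T}(B)+\varepsilon$. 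The first and third brackets are both bounded by $\sup_{f\in\F_{\widehat T}(B)}\left|R_0(f)-\widehat R_0(f)\right|$. We bound this supremum with the usual symmetrization plus McDiarmid argument, using $0\le\ell\le C_\ell$. This gives, with probability at least $1-\delta$, a bound of $2\,\mb E\,\mf R$ of the loss class plus a deviation term of the form $C_\ell\sqrt{\log(2/\delta)/(2n_0)}$ for each one-sided deviation. A union bound over the two sides, one for each bracket, yields the factor $2C_\ell$ and the $\log(2/\delta)$. We then let $\varepsilon\to0$.

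Next we pass from the loss class to the vector class. Since $\ell(y,\cdot)$ is $L_\ell$-Lipschitz in the Euclidean norm, the vector-contraction inequality of Maurer gives
\begin{align*}
\mb E_\epsilon\sup_f\frac1{n_0}\sum_i\epsilon_i\,\ell(y_i,f(x_i))
\le\sqrt2\,L_\ell\,\mb E_{\bsigma}\sup_f\frac1{n_0}\sum_i\langle\bsigma_i,f(x_i)\rangle .
\end{align*}
The right side is at most $\sqrt2 L_\ell\,\eR(\F_{\widehat T}(B))$. Together with the symmetrization factor $2$, this accounts for the constant $2\sqrt2$ coming from these two steps.

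The core estimate concerns the vector class itself. For $f=\widehat T g$ the reproducing identity gives
\begin{align*}
\sum_i\langle\bsigma_i,f(x_i)\rangle=\Bigl\langle \widehat T g,\ \sum_i K(\cdot,x_i)\bsigma_i\Bigr\rangle_{\mH_K}.
\end{align*}
Its absolute value is therefore at most $B\|\widehat T\|\,\bigl\|\sum_i K(\cdot,x_i)\bsigma_i\bigr\|_{\mH_K}$. Jensen's inequality and the independence of the Rademacher coordinates give
\begin{align*}
\mb E\Bigl\|\sum_i K(\cdot,x_i)\bsigma_i\Bigr\|_{\mH_K}\le\Bigl(\sum_i\Tr K(x_i,x_i)\Bigr)^{1/2}=\Bigl(\sum_i k(x_i,x_i)\Tr\b M\Bigr)^{1/2}\le\sqrt{n_0\kappa\Tr\b M}.
\end{align*}
This step uses only the reproducing property and $\Tr K(x,x)=k(x,x)\Tr\b M$, so $\b M\succeq0$ suffices and no inverse of $\b M$ is needed. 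It gives $\eR\le B\|\widehat T\|\sqrt{\kappa\Tr\b M/n_0}$, uniformly in the sample.

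There is a mismatch in constants to resolve. The chain above yields $2\sqrt2$, and the statement has $4\sqrt2$. The extra factor $2$ comes from handling the absolute-value supremum needed for the two-sided deviation. It also absorbs the gap between the definition of $\eR$, which carries an absolute value, and the one-sided complexity to which the contraction inequality applies. I expect this constant bookkeeping to be the main obstacle: getting a clean two-sided symmetrization with the stated $4\sqrt2$ and $2C_\ell$ requires careful handling of measurability and the conditioning on $\widehat T$. The Hilbert--Schmidt version follows immediately from $\|\widehat T\|\le\|\widehat T\|_{\mathrm{HS}}$.
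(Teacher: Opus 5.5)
Your argument is essentially the paper's proof. Both condition on $\widehat T$, run the ERM chain through a near-minimizer, and control the two one-sided deviations by symmetrization plus McDiarmid at level $\delta/2$ each (\Cref{lem:two-sided-expected-rademacher}). Both then pass to the vector class with Maurer's contraction and bound it by the reproducing-identity/Jensen/trace estimate of \Cref{lem:operator-image-rademacher}. The paper also centers the loss at $\ell(Y_i,0)$ before contracting; this is harmless and not essential.

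The one thing to correct is your final paragraph: there is no constant mismatch, and the explanation you give for one is wrong. The quantity $2\sqrt2\,L_\ell B\|\widehat T\|\sqrt{\kappa\Tr(\b M)/n_0}$ is what a \emph{single} deviation bracket costs. On the union-bound event, your first and third brackets each contribute that amount plus $C_\ell\sqrt{\log(2/\delta)/(2n_0)}$. Summed, they give exactly the stated $4\sqrt2$ term and the $2C_\ell$ term. You doubled the McDiarmid term but forgot to double the Rademacher term.

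The gap between the one-sided complexity and the absolute-value $\widehat{\mf R}_{n_0}^m$ does not absorb anything. It goes in the favorable direction and costs nothing, as you already noted when you bounded the right side by $\sqrt2\,L_\ell\widehat{\mf R}_{n_0}^m$. Measurability is also not an obstacle here, since the statement assumes it.
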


\begin{remark}
The proposition separates approximation from estimation. It does not assert that a shared operator always improves target performance. It states that, when a useful source-learned operator has controlled norm, the target class induced by its image has controlled complexity. If $\widehat T$ is itself a Koopman chain satisfying \Cref{inv,inj,thm:BrownianKoopman}, its operator norm can be bounded by the corresponding layerwise estimates.
\end{remark}

\section{Empirical Proxy Study}\label{sec:experiments}
The experiments examine stabilized numerical factors motivated by the layer geometry in the theory. They are not direct evaluations of \Cref{inv,inj,thm:BrownianKoopman}: the experimental architectures include rectangular and rank-deficient layers, and the determinant is stabilized by adding the identity.

\begin{figure}[tbp]
\centering
\begin{minipage}{0.47\linewidth}
\centering
\includegraphics[width=0.90\linewidth]{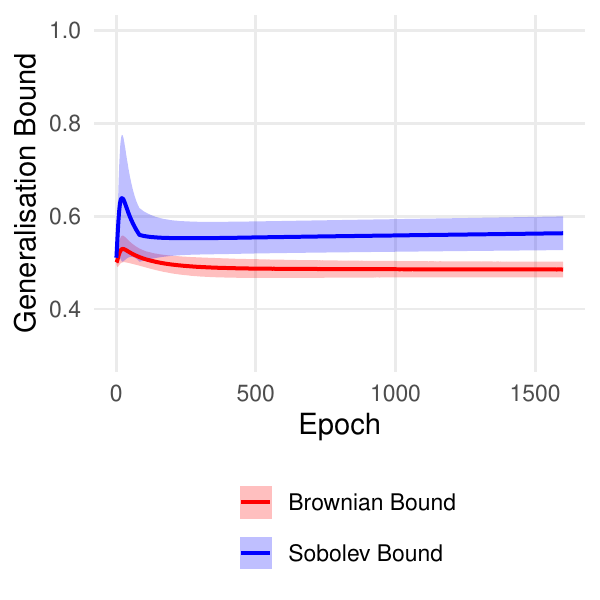}
\par\smallskip
(a)
\end{minipage}
\hfill
\begin{minipage}{0.47\linewidth}
\centering
\includegraphics[width=0.84\linewidth]{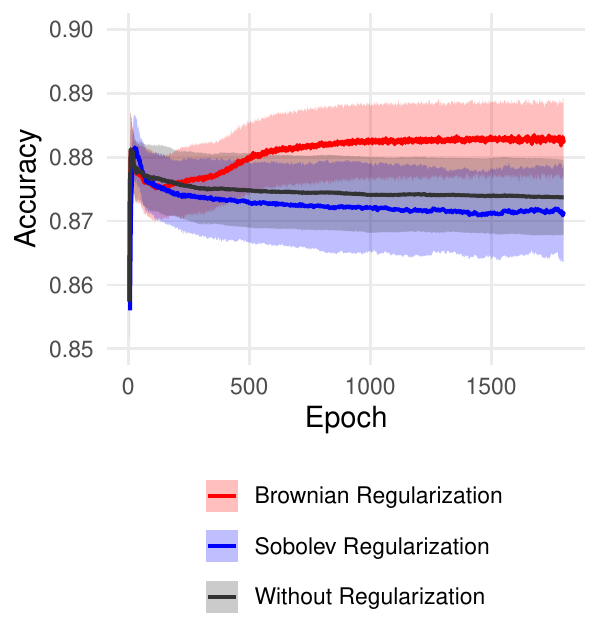}
\par\smallskip
(b)
\end{minipage}
\caption{Empirical behavior of stabilized complexity proxies. (a) Sobolev-inspired and Brownian-inspired proxy values during training on the synthetic regression problem. (b) MNIST test performance for the unregularized baseline and for models trained with the two proxy penalties. The labels inside the original plots use the shorthand ``bound,'' but the quantities are empirical surrogates rather than evaluations of the proved theorems.}
\label{fig:experiment}
\end{figure}

\subsection{Synthetic regression}
Following the experimental pattern of \citet{hashimoto2024koopmanbased}, we consider the target
\begin{align*}
t\left(\b x\right)
\coloneqq
\exp\left(-\|2\b x-\b 1\|_2^2\right),
\qquad
\b x\in\R^3.
\end{align*}
The network is
\begin{align*}
f\left(\b x\right)
=
g\left(\b W_2\sigma\left(\b W_1\b x+\b b_1\right)+\b b_2\right),
\end{align*}
with $\b W_1\in\R^{3\times3}$, $\b W_2\in\R^{6\times3}$, $\b b_1\in\R^3$, $\b b_2\in\R^6$, and $g\left(\b z\right)=\exp\left(-\|\b z\|_2^2\right)$. The weights use orthogonal initialization~\citep{saxe2014exact}, the biases use a uniform initialization, and the activation is the smooth Leaky ReLU of \citet{biswas2022smooth}. Training runs for $1600$ epochs with learning rate $3\times10^{-3}$ and an $L_2$ penalty of $10^{-4}$.

For descriptive comparison, define
\begin{align*}
\mathrm{SP}
&\coloneqq
\prod_{l=1}^{L}
\frac{
\|\b W_l\|^{s_l}
}{
\det\left(\b I+\b W_l^{\top}\b W_l\right)^{1/4}
},
\qquad
s_l\coloneqq\frac{d_l+0.1}{2},
\\
\mathrm{BP}
&\coloneqq
\prod_{l=1}^{L}
\frac{
\|\b W_l\|
}{
\det\left(\b I+\b W_l^{\top}\b W_l\right)^{1/4}
}.
\end{align*}
The identity stabilization keeps both proxies finite for rectangular or rank-deficient matrices. It is precisely why these expressions must not be read as the determinant factors in the proved injective theorem. Figure~\ref{fig:experiment}(a) reports the trajectories over five initializations. In this experiment, $\mathrm{BP}$ is numerically smaller and less variable than $\mathrm{SP}$. This is a descriptive observation about the chosen surrogates, not evidence that the Brownian theorem uniformly dominates the Sobolev theorem.

The MNIST regularization experiment and its architecture are described in \Cref{app:numerics}.

\FloatBarrier
\section{Limitations}\label{sec:limitations}
The Sobolev theorems require invertible or injective linear maps and bounded activation composition operators on the selected weighted Sobolev spaces. The injective theorem additionally requires the stated trace conditions, and its restriction constants may be large. Rank-deficient layers are not covered by replacing a singular determinant with a stabilized one; the experiments use that replacement only as a proxy.

The Brownian theorem is one-dimensional. It requires $|W_l|\le1$, activations that map $\left[-R,R\right]$ diffeomorphically onto itself, and the anchor condition $\sigma_l\left(0\right)=0$. Bias translations are excluded because they do not preserve the anchored RKHS. The theorem therefore does not directly cover standard unconstrained deep architectures.

All complexity bounds also depend on a prescribed terminal-map norm. The shared-transfer result is conditional on an operator learned independently of the target sample and separates estimation from approximation; it does not guarantee that the source-learned operator is useful for a particular target task.

\section{Conclusion}\label{sec5}
We developed operator-theoretic Rademacher complexity bounds for multi-output compositions in vector-valued RKHSs. In the Sobolev regime, the analysis separates task coupling, terminal-map size, activation composition norms, determinant-based volume distortion, and the restriction cost of injective width expansion. In the one-dimensional Brownian/Cameron--Martin regime, the exact anchored derivative norm gives direct linear and activation estimates without Sobolev exponents or Fourier extensions. We also established a finite-rank shared-operator representer theorem, its squared-loss coefficient reduction, and a source-conditioned target bound. The two function-space regimes have different assumptions and should be compared structurally rather than treated as uniformly ordered numerical bounds.

\bibliographystyle{plainnat}
\bibliography{references}

@article{Argyriou2008convex,
  author  = {Argyriou, Andreas and Evgeniou, Theodoros and Pontil, Massimiliano},
  title   = {Convex Multi-Task Feature Learning},
  journal = {Machine Learning},
  volume  = {73},
  number  = {3},
  pages   = {243--272},
  year    = {2008},
  doi     = {10.1007/s10994-007-5040-8}
}

@article{Caponnetto2008,
  author  = {Caponnetto, Andrea and Micchelli, Charles A. and Pontil, Massimiliano and Ying, Yiming},
  title   = {Universal Multi-Task Kernels},
  journal = {Journal of Machine Learning Research},
  volume  = {9},
  number  = {52},
  pages   = {1615--1646},
  year    = {2008}
}

@article{Evgeniou2005,
  author  = {Evgeniou, Theodoros and Micchelli, Charles A. and Pontil, Massimiliano},
  title   = {Learning Multiple Tasks with Kernel Methods},
  journal = {Journal of Machine Learning Research},
  volume  = {6},
  number  = {21},
  pages   = {615--637},
  year    = {2005}
}

@article{Fanghui2024,
  author  = {Liu, Fanghui and Dadi, Leello and Cevher, Volkan},
  title   = {Learning with Norm Constrained, Over-parameterized, Two-layer Neural Networks},
  journal = {Journal of Machine Learning Research},
  volume  = {25},
  number  = {138},
  pages   = {1--42},
  year    = {2024}
}

@inproceedings{Ju2022Robust,
  author    = {Ju, Haotian and Li, Dongyue and Zhang, Hongyang R.},
  title     = {Robust Fine-Tuning of Deep Neural Networks with Hessian-based Generalization Guarantees},
  booktitle = {Proceedings of the 39th International Conference on Machine Learning},
  series    = {Proceedings of Machine Learning Research},
  volume    = {162},
  pages     = {10431--10461},
  publisher = {PMLR},
  year      = {2022}
}

@article{Maurer2006Bounds,
  author  = {Maurer, Andreas},
  title   = {Bounds for Linear Multi-Task Learning},
  journal = {Journal of Machine Learning Research},
  volume  = {7},
  number  = {5},
  pages   = {117--139},
  year    = {2006}
}

@article{Maurer2016Benefit,
  author  = {Maurer, Andreas and Pontil, Massimiliano and Romera-Paredes, Bernardino},
  title   = {The Benefit of Multitask Representation Learning},
  journal = {Journal of Machine Learning Research},
  volume  = {17},
  number  = {81},
  pages   = {1--32},
  year    = {2016}
}

@article{Micchelli2005,
  author  = {Micchelli, Charles A. and Pontil, Massimiliano},
  title   = {On Learning Vector-Valued Functions},
  journal = {Neural Computation},
  volume  = {17},
  number  = {1},
  pages   = {177--204},
  year    = {2005},
  doi     = {10.1162/0899766052530802}
}

@inproceedings{Neyshabur2015,
  author    = {Neyshabur, Behnam and Tomioka, Ryota and Srebro, Nathan},
  title     = {Norm-Based Capacity Control in Neural Networks},
  booktitle = {Proceedings of the 28th Conference on Learning Theory},
  series    = {Proceedings of Machine Learning Research},
  volume    = {40},
  pages     = {1376--1401},
  publisher = {PMLR},
  year      = {2015}
}

@inproceedings{Pontil2013Excess,
  author    = {Pontil, Massimiliano and Maurer, Andreas},
  title     = {Excess Risk Bounds for Multitask Learning with Trace Norm Regularization},
  booktitle = {Proceedings of the 26th Annual Conference on Learning Theory},
  series    = {Proceedings of Machine Learning Research},
  volume    = {30},
  pages     = {55--76},
  publisher = {PMLR},
  year      = {2013}
}

@article{Yousefi2018Local,
  author  = {Yousefi, Niloofar and Lei, Yunwen and Kloft, Marius and Mollaghasemi, Mansooreh and Anagnostopoulos, Georgios C.},
  title   = {Local Rademacher Complexity-Based Learning Guarantees for Multi-Task Learning},
  journal = {Journal of Machine Learning Research},
  volume  = {19},
  number  = {38},
  pages   = {1--47},
  year    = {2018}
}

@inproceedings{argyriou2006multitask,
  author    = {Argyriou, Andreas and Evgeniou, Theodoros and Pontil, Massimiliano},
  title     = {Multi-Task Feature Learning},
  booktitle = {Advances in Neural Information Processing Systems},
  volume    = {19},
  pages     = {41--48},
  publisher = {MIT Press},
  year      = {2007}
}

@inproceedings{arora2018compression,
  author    = {Arora, Sanjeev and Ge, Rong and Neyshabur, Behnam and Zhang, Yi},
  title     = {Stronger Generalization Bounds for Deep Nets via a Compression Approach},
  booktitle = {Proceedings of the 35th International Conference on Machine Learning},
  series    = {Proceedings of Machine Learning Research},
  volume    = {80},
  pages     = {254--263},
  publisher = {PMLR},
  year      = {2018}
}

@article{bartlett2002rademacher,
  author  = {Bartlett, Peter L. and Mendelson, Shahar},
  title   = {Rademacher and Gaussian Complexities: Risk Bounds and Structural Results},
  journal = {Journal of Machine Learning Research},
  volume  = {3},
  pages   = {463--482},
  year    = {2002}
}

@inproceedings{bartlett2017spectrally,
  author    = {Bartlett, Peter L. and Foster, Dylan J. and Telgarsky, Matus J.},
  title     = {Spectrally-Normalized Margin Bounds for Neural Networks},
  booktitle = {Advances in Neural Information Processing Systems},
  volume    = {30},
  pages     = {6240--6249},
  year      = {2017}
}

@inproceedings{biswas2022smooth,
  author    = {Biswas, Koushik and Kumar, Sandeep and Banerjee, Shilpak and Pandey, Ashish Kumar},
  title     = {Smooth Maximum Unit: Smooth Activation Function for Deep Networks Using Smoothing Maximum Technique},
  booktitle = {Proceedings of the IEEE/CVF Conference on Computer Vision and Pattern Recognition},
  pages     = {784--793},
  year      = {2022},
  doi       = {10.1109/CVPR52688.2022.00087}
}

@article{golowich2018size,
  author  = {Golowich, Noah and Rakhlin, Alexander and Shamir, Ohad},
  title   = {Size-Independent Sample Complexity of Neural Networks},
  journal = {Information and Inference: A Journal of the IMA},
  volume  = {9},
  number  = {2},
  pages   = {473--504},
  year    = {2020}
}

@inproceedings{hashimoto2024koopmanbased,
  author    = {Hashimoto, Yuka and Sonoda, Sho and Ishikawa, Isao and Nitanda, Atsushi and Suzuki, Taiji},
  title     = {{Koopman}-Based Generalization Bound: New Aspect for Full-Rank Weights},
  booktitle = {International Conference on Learning Representations},
  year      = {2024}
}

@article{khosravi2023representer,
  author  = {Khosravi, Mohammad},
  title   = {Representer Theorem for Learning {Koopman} Operators},
  journal = {IEEE Transactions on Automatic Control},
  volume  = {68},
  number  = {5},
  pages   = {2995--3010},
  year    = {2023}
}

@inproceedings{kingma2017adammethodstochasticoptimization,
  author    = {Kingma, Diederik P. and Ba, Jimmy},
  title     = {Adam: A Method for Stochastic Optimization},
  booktitle = {International Conference on Learning Representations},
  year      = {2015}
}

@article{li2021orthogonal,
  author  = {Li, Shuai and Jia, Kui and Wen, Yuxin and Liu, Tongliang and Tao, Dacheng},
  title   = {Orthogonal Deep Neural Networks},
  journal = {IEEE Transactions on Pattern Analysis and Machine Intelligence},
  volume  = {43},
  number  = {4},
  pages   = {1352--1368},
  year    = {2021}
}

@inproceedings{maurer2016vectorcontraction,
  author    = {Maurer, Andreas},
  title     = {A Vector-Contraction Inequality for Rademacher Complexities},
  booktitle = {Algorithmic Learning Theory},
  series    = {Lecture Notes in Computer Science},
  volume    = {9925},
  pages     = {3--17},
  publisher = {Springer},
  year      = {2016},
  doi       = {10.1007/978-3-319-46379-7_1}
}

@book{mohri2018foundations,
  author    = {Mohri, Mehryar and Rostamizadeh, Afshin and Talwalkar, Ameet},
  title     = {Foundations of Machine Learning},
  edition   = {2},
  publisher = {MIT Press},
  address   = {Cambridge, MA},
  year      = {2018}
}

@inproceedings{saxe2014exact,
  author    = {Saxe, Andrew M. and McClelland, James L. and Ganguli, Surya},
  title     = {Exact Solutions to the Nonlinear Dynamics of Learning in Deep Linear Neural Networks},
  booktitle = {International Conference on Learning Representations},
  year      = {2014}
}

@incollection{vanderVaart2008,
  author    = {van der Vaart, Aad W. and van Zanten, J. Harry},
  title     = {Reproducing Kernel Hilbert Spaces of Gaussian Priors},
  booktitle = {Pushing the Limits of Contemporary Statistics: Contributions in Honor of Jayanta K. Ghosh},
  series    = {Institute of Mathematical Statistics Collections},
  volume    = {3},
  pages     = {200--222},
  publisher = {Institute of Mathematical Statistics},
  year      = {2008},
  doi       = {10.1214/074921708000000156}
}

@inproceedings{wei2019data,
  author    = {Wei, Colin and Ma, Tengyu},
  title     = {Data-Dependent Sample Complexity of Deep Neural Networks via Lipschitz Augmentation},
  booktitle = {Advances in Neural Information Processing Systems},
  volume    = {32},
  pages     = {9725--9736},
  year      = {2019}
}

@inproceedings{wei2020improved,
  author    = {Wei, Colin and Ma, Tengyu},
  title     = {Improved Sample Complexities for Deep Networks and Robust Classification via an All-Layer Margin},
  booktitle = {International Conference on Learning Representations},
  year      = {2020}
}

@book{wendland2005scattered,
  author    = {Wendland, Holger},
  title     = {Scattered Data Approximation},
  series    = {Cambridge Monographs on Applied and Computational Mathematics},
  volume    = {17},
  publisher = {Cambridge University Press},
  year      = {2004}
}

\appendix

\section{Proofs of the Sobolev Bounds}\label{app:sobolev-proofs}

This appendix proves the vector-valued Sobolev results. Every calculation is carried out in the $\b M^{-1}$-weighted Sobolev norm used in the theorem statements. The proofs first establish a common vvRKHS Rademacher estimate and then derive the layerwise composition bounds.

\subsection{A common vvRKHS Rademacher estimate}

\begin{lemma}[Operator-image Rademacher bound]\label{lem:operator-image-rademacher}
Let $\mH_G$ be a Hilbert space and let $\mH_K$ be a vvRKHS of functions from $\mX$ to $\R^m$ with matrix-valued kernel $K$. Let $\mT$ be a family of bounded linear operators from $\mH_G$ to $\mH_K$, and assume that
\begin{align*}
\sup_{T\in\mT}\|T\|
\le A.
\end{align*}
For $B>0$, define
\begin{align*}
\mF_{\mT}\left(B\right)
\coloneqq
\left\{
Tg:
T\in\mT,
\ \|g\|_{\mH_G}\le B
\right\}.
\end{align*}
Then, for every fixed sample $\left\{\b x_1,\ldots,\b x_n\right\}\subset\mX$,
\begin{align*}
\widehat{\mf R}_n^m\left(\mF_{\mT}\left(B\right)\right)
\le
\frac{BA}{n}
\RE\left[
\left\|
\sum_{i=1}^{n}
K\left(\cdot,\b x_i\right)\bsigma_i
\right\|_{\mH_K}
\right].
\end{align*}
If $K\left(\b x,\b x'\right)=k\left(\b x,\b x'\right)\b M$ with $\b M\succeq0$ and $k\left(\b x_i,\b x_i\right)\le\kappa$, then
\begin{align*}
\widehat{\mf R}_n^m\left(\mF_{\mT}\left(B\right)\right)
\le
BA
\sqrt{
\frac{
\kappa\Tr\left(\b M\right)
}{n}
}.
\end{align*}
\end{lemma}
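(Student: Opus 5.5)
The plan is the standard reproducing-property argument followed by Cauchy--Schwarz and Jensen. Fix the sample and a realization of $\bsigma=\left(\bsigma_1,\ldots,\bsigma_n\right)$. For any $f=Tg\in\mF_{\mT}\left(B\right)$, the vector-valued reproducing identity gives
\begin{align*}
\sum_{i=1}^{n}\left\langle\bsigma_i,f\left(\b x_i\right)\right\rangle_{\R^m}
=
\left\langle Tg,\sum_{i=1}^{n}K\left(\cdot,\b x_i\right)\bsigma_i\right\rangle_{\mH_K}.
\end{align*}
When $\b M$ is only positive semidefinite, I read $\mH_K$ abstractly as the RKHS of the matrix-valued kernel $K$, so that this identity still holds. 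Cauchy--Schwarz in $\mH_K$ then bounds the absolute value by $\|T\|\,\|g\|_{\mH_G}\,\|\sum_i K\left(\cdot,\b x_i\right)\bsigma_i\|_{\mH_K}\le BA\,\|\sum_i K\left(\cdot,\b x_i\right)\bsigma_i\|_{\mH_K}$. This bound no longer depends on $(T,g)$, so I take the supremum over the class, divide by $n$, and take the Rademacher expectation. That yields the first inequality. Measurability of the supremum is harmless, since it is dominated pointwise by a measurable function of $\bsigma$ taking finitely many values.

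For the second inequality, I apply Jensen's inequality to the concave square root:
\begin{align*}
\RE\left\|\sum_i K\left(\cdot,\b x_i\right)\bsigma_i\right\|_{\mH_K}
\le
\left(\RE\left\|\sum_i K\left(\cdot,\b x_i\right)\bsigma_i\right\|_{\mH_K}^2\right)^{1/2}.
\end{align*}
Expanding the square with the reproducing property gives $\sum_{i,j}\bsigma_i^{\top}K\left(\b x_i,\b x_j\right)\bsigma_j$. Independence and zero mean of the coordinates $\sigma_{ia}$ give $\RE\left[\sigma_{ia}\sigma_{jb}\right]=\delta_{ij}\delta_{ab}$. Hence the expectation equals $\sum_i\Tr\left(K\left(\b x_i,\b x_i\right)\right)=\sum_i k\left(\b x_i,\b x_i\right)\Tr\left(\b M\right)\le n\kappa\Tr\left(\b M\right)$. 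Combining, $\frac{BA}{n}\sqrt{n\kappa\Tr\left(\b M\right)}=BA\sqrt{\kappa\Tr\left(\b M\right)/n}$.

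The only step needing care is the reproducing identity $\left\langle K\left(\cdot,\b x\right)\b y,K\left(\cdot,\b x'\right)\b y'\right\rangle_{\mH_K}=\b y^{\top}K\left(\b x,\b x'\right)\b y'$ in the degenerate case $\b M\succeq0$. There the explicit $\b M^{-1}$-weighted norm formula is unavailable, so I rely only on the abstract matrix-valued reproducing property stated in the notation section, which holds for any positive semidefinite matrix kernel. Everything else is routine.
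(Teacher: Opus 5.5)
Your proof is correct and follows the paper's argument essentially step for step: you use the reproducing identity to write the Rademacher sum as $\langle Tg, V_{\bsigma}\rangle_{\mH_K}$, bound it uniformly over the class by Cauchy--Schwarz, and then apply Jensen together with the diagonal trace computation. The only difference is cosmetic: you apply Cauchy--Schwarz in $\mH_K$ through $\|Tg\|\le\|T\|\,\|g\|$, whereas the paper passes to the adjoint $T^*$ and applies it in $\mH_G$, and both give the same bound.
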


\begin{proof}
Fix the sample and a realization of the Rademacher vectors. Define
\begin{align*}
V_{\bsigma}
\coloneqq
\sum_{i=1}^{n}
K\left(\cdot,\b x_i\right)\bsigma_i
\in\mH_K.
\end{align*}
For $T\in\mT$ and $g\in\mH_G$, the reproducing identity gives
\begin{align*}
\sum_{i=1}^{n}
\left\langle
\bsigma_i,
\left(Tg\right)\left(\b x_i\right)
\right\rangle_{\R^m}
&=
\sum_{i=1}^{n}
\left\langle
Tg,
K\left(\cdot,\b x_i\right)\bsigma_i
\right\rangle_{\mH_K}
\\
&=
\left\langle
Tg,
V_{\bsigma}
\right\rangle_{\mH_K}.
\end{align*}
The adjoint relation then yields
\begin{align*}
\left\langle
Tg,
V_{\bsigma}
\right\rangle_{\mH_K}
=
\left\langle
g,
T^*V_{\bsigma}
\right\rangle_{\mH_G}.
\end{align*}
Applying the Cauchy--Schwarz inequality in $\mH_G$ gives
\begin{align*}
\left|
\left\langle
g,
T^*V_{\bsigma}
\right\rangle_{\mH_G}
\right|
&\le
\|g\|_{\mH_G}
\|T^*V_{\bsigma}\|_{\mH_G}
\\
&\le
B\|T^*\|
\|V_{\bsigma}\|_{\mH_K}
\\
&=
B\|T\|
\|V_{\bsigma}\|_{\mH_K}
\\
&\le
BA
\|V_{\bsigma}\|_{\mH_K}.
\end{align*}
Taking the supremum over $T$ and $g$, dividing by $n$, and then taking the Rademacher expectation proves the first assertion.

Assume now that $K=k\b M$ and $k\left(\b x_i,\b x_i\right)\le\kappa$. Jensen's inequality for the concave square-root function gives
\begin{align*}
\RE\left[
\|V_{\bsigma}\|_{\mH_K}
\right]
\le
\left(
\RE\left[
\|V_{\bsigma}\|_{\mH_K}^2
\right]
\right)^{1/2}.
\end{align*}
Expanding the squared norm by the reproducing property gives
\begin{align*}
\|V_{\bsigma}\|_{\mH_K}^2
&=
\sum_{i=1}^{n}
\sum_{j=1}^{n}
\left\langle
K\left(\cdot,\b x_i\right)\bsigma_i,
K\left(\cdot,\b x_j\right)\bsigma_j
\right\rangle_{\mH_K}
\\
&=
\sum_{i=1}^{n}
\sum_{j=1}^{n}
\bsigma_i^{\top}
K\left(\b x_i,\b x_j\right)
\bsigma_j.
\end{align*}
When $i\ne j$, independence and centering imply
\begin{align*}
\RE\left[
\bsigma_i^{\top}
K\left(\b x_i,\b x_j\right)
\bsigma_j
\right]
=0.
\end{align*}
For a diagonal term, independence of the coordinates and the identity $\RE\left[\bsigma_i\bsigma_i^{\top}\right]=\b I_m$ give
\begin{align*}
\RE\left[
\bsigma_i^{\top}
K\left(\b x_i,\b x_i\right)
\bsigma_i
\right]
&=
\Tr\left(
K\left(\b x_i,\b x_i\right)
\RE\left[\bsigma_i\bsigma_i^{\top}\right]
\right)
\\
&=
\Tr\left(
K\left(\b x_i,\b x_i\right)
\right)
\\
&=
k\left(\b x_i,\b x_i\right)
\Tr\left(\b M\right)
\\
&\le
\kappa\Tr\left(\b M\right).
\end{align*}
Consequently,
\begin{align*}
\RE\left[
\|V_{\bsigma}\|_{\mH_K}^2
\right]
\le
n\kappa\Tr\left(\b M\right).
\end{align*}
Substitution into the first assertion yields
\begin{align*}
\widehat{\mf R}_n^m\left(\mF_{\mT}\left(B\right)\right)
&\le
\frac{BA}{n}
\sqrt{
n\kappa\Tr\left(\b M\right)
}
\\
&=
BA
\sqrt{
\frac{
\kappa\Tr\left(\b M\right)
}{n}
}.
\end{align*}
This proves the second assertion.
\end{proof}

\subsection{Translation and invertible linear composition}

\begin{lemma}[Weighted Sobolev translations]\label{lem:sobolev-translation}
Let $s\ge0$, $d,m\in\Zp$, $\b M\succ0$, and $\b a\in\R^d$. For $\b b\left(\b x\right)=\b x+\b a$, the Koopman operator
\begin{align*}
\mK_{\b b}:
\HMs{s}\left(\R^d,\R^m\right)
\longrightarrow
\HMs{s}\left(\R^d,\R^m\right)
\end{align*}
is an isometry.
\end{lemma}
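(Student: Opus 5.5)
The plan is to work entirely on the Fourier side, using the exact weighted norm
\begin{align*}
\|h\|_{\HMs{s}\left(\R^d,\R^m\right)}^2
=
\int_{\R^d}
\left(1+\|\bomega\|_2^2\right)^s
\widehat h\left(\bomega\right)^*
\b M^{-1}
\widehat h\left(\bomega\right)
\,\d\bomega
\end{align*}
fixed in the preliminaries. The whole argument reduces to the observation that translation multiplies the Fourier transform by a unimodular scalar phase. Such a phase commutes with $\b M^{-1}$ and cancels in the Hermitian form.

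The steps are as follows. First, for $h\in\HMs{s}\left(\R^d,\R^m\right)$, each coordinate $h_j$ lies in $H^s\left(\R^d\right)\subset L^2\left(\R^d\right)$. Hence $\mK_{\b b}h=h\left(\cdot+\b a\right)$ is again coordinatewise in $L^2$. The standard shift rule, applied componentwise, gives
\begin{align*}
\widehat{\mK_{\b b}h}\left(\bomega\right)
=
e^{\mathrm{i}c\left\langle\b a,\bomega\right\rangle}\widehat h\left(\bomega\right),
\end{align*}
where the constant $c$ depends only on the Fourier normalization, for example $c=2\pi$ or $c=1$. Second, I substitute this into the quadratic form. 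Because the phase $\theta\left(\bomega\right)=e^{\mathrm{i}c\left\langle\b a,\bomega\right\rangle}$ is a scalar of modulus one,
\begin{align*}
\left(\theta\widehat h\right)^*\b M^{-1}\left(\theta\widehat h\right)
=
|\theta|^2\,\widehat h^*\b M^{-1}\widehat h
=
\widehat h^*\b M^{-1}\widehat h
\end{align*}
pointwise in $\bomega$. The weight $\left(1+\|\bomega\|_2^2\right)^s$ is unchanged, so the two norm integrals coincide. This shows that $\mK_{\b b}h$ lies in the space, which gives $\mD_{\b b}$ equal to the whole space, and that $\|\mK_{\b b}h\|=\|h\|$. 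Third, I note that $\mK_{\b b}$ is linear, so norm preservation is exactly the isometry property. As a bonus, $\mK_{\b b}$ is surjective with inverse $\mK_{-\b b}$, so it is in fact unitary.

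I see no genuine obstacle here. The only point needing care is to justify that the shift rule holds in the $L^2$ (Plancherel) sense for functions that need not be integrable, and not merely for $L^1$ functions. I will handle this by proving the identity on a dense class, for instance Schwartz functions, and then extending by continuity of both translation and the Fourier transform on $L^2$. Everything is carried out coordinatewise, so the vector-valued setting adds nothing beyond the scalar-phase cancellation noted above.
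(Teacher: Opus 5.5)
Your proposal is correct and follows the paper's proof. Both arguments show that translation multiplies $\widehat h$ by a unimodular scalar phase, which cancels in the $\b M^{-1}$-weighted Hermitian form while the weight $\left(1+\|\bomega\|_2^2\right)^s$ is unchanged. Your density justification of the $L^2$ shift rule and your observation that composition with $\b x\mapsto\b x-\b a$ inverts $\mK_{\b b}$, making it unitary, are harmless refinements that the paper does not spell out.
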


\begin{proof}
Let $h\in\HMs{s}\left(\R^d,\R^m\right)$. The Fourier transform of a translation satisfies
\begin{align*}
\widehat{h\circ\b b}\left(\bomega\right)
=
e^{\mathrm i\b a^{\top}\bomega}
\widehat h\left(\bomega\right),
\end{align*}
where the sign of the phase depends on the Fourier-transform convention and is irrelevant for the norm. Since the phase is scalar and has modulus one,
\begin{align*}
\widehat{h\circ\b b}\left(\bomega\right)^*
\b M^{-1}
\widehat{h\circ\b b}\left(\bomega\right)
&=
\left|e^{\mathrm i\b a^{\top}\bomega}\right|^2
\widehat h\left(\bomega\right)^*
\b M^{-1}
\widehat h\left(\bomega\right)
\\
&=
\widehat h\left(\bomega\right)^*
\b M^{-1}
\widehat h\left(\bomega\right).
\end{align*}
Therefore,
\begin{align*}
\|\mK_{\b b}h\|_{\HMs{s}\left(\R^d,\R^m\right)}^2
&=
\int_{\R^d}
\left(1+\|\bomega\|_2^2\right)^s
\widehat{h\circ\b b}\left(\bomega\right)^*
\b M^{-1}
\widehat{h\circ\b b}\left(\bomega\right)
\,\d\bomega
\\
&=
\int_{\R^d}
\left(1+\|\bomega\|_2^2\right)^s
\widehat h\left(\bomega\right)^*
\b M^{-1}
\widehat h\left(\bomega\right)
\,\d\bomega
\\
&=
\|h\|_{\HMs{s}\left(\R^d,\R^m\right)}^2.
\end{align*}
Thus $\|\mK_{\b b}h\|=\|h\|$ for every $h$, and the operator is an isometry.
\end{proof}

\begin{lemma}[Invertible weighted Sobolev pullback]\label{lem:invertible-sobolev-pullback}
Let $s_{\mathrm{in}},s_{\mathrm{out}}\ge0$, let $\b M\succ0$, and let $\b W\in\R^{d\times d}$ be invertible. Then
\begin{align*}
\mK_{\b W}:
\HMs{s_{\mathrm{out}}}\left(\R^d,\R^m\right)
\longrightarrow
\HMs{s_{\mathrm{in}}}\left(\R^d,\R^m\right)
\end{align*}
is bounded whenever the displayed supremum below is finite, and
\begin{align*}
\|\mK_{\b W}\|
\le
\sup_{\bomega\in\R^d}
\left(
\frac{
\left(1+\|\b W^{\top}\bomega\|_2^2\right)^{s_{\mathrm{in}}}
}{
\left(1+\|\bomega\|_2^2\right)^{s_{\mathrm{out}}}
}
\right)^{1/2}
\frac{1}{\left|\det\left(\b W\right)\right|^{1/2}}.
\end{align*}
\end{lemma}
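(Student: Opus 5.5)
The plan is to compute the Fourier transform of $h\circ\b W$ explicitly and then perform a linear change of variables in the weighted Sobolev integral. Fix $h\in\HMs{s_{\mathrm{out}}}\left(\R^d,\R^m\right)$; first take $h$ in the Schwartz class, or in $L^1\cap L^2$, so the transforms are classical, and extend by density at the end. Acting componentwise, the substitution $\b y=\b W\b x$ in the Fourier integral gives
\begin{align*}
\widehat{h\circ\b W}\left(\bomega\right)
=
\frac{1}{\left|\det\left(\b W\right)\right|}
\,\widehat h\left(\b W^{-\top}\bomega\right).
\end{align*}
The scalar factor and the matrix acting on the frequency variable commute with the constant weight $\b M^{-1}$. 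Hence the Hermitian form $\widehat{h\circ\b W}^*\b M^{-1}\widehat{h\circ\b W}$ equals $\left|\det\left(\b W\right)\right|^{-2}$ times $\widehat h^*\b M^{-1}\widehat h$ evaluated at $\b W^{-\top}\bomega$. This is exactly how the translation proof in \Cref{lem:sobolev-translation} works, except that here the frequency is rescaled instead of receiving a phase.

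Substituting this into the norm gives
\begin{align*}
\|h\circ\b W\|^2_{\HMs{s_{\mathrm{in}}}}
=
\frac{1}{\left|\det\left(\b W\right)\right|^{2}}
\int_{\R^d}
\left(1+\|\bomega\|_2^2\right)^{s_{\mathrm{in}}}
\widehat h\left(\b W^{-\top}\bomega\right)^*\b M^{-1}\widehat h\left(\b W^{-\top}\bomega\right)\,\d\bomega .
\end{align*}
Next change variables with $\bomega=\b W^{\top}\bm\eta$, whose Jacobian is $\left|\det\left(\b W\right)\right|$. One determinant factor cancels, leaving $\left|\det\left(\b W\right)\right|^{-1}$ times the integral of $\left(1+\|\b W^{\top}\bm\eta\|_2^2\right)^{s_{\mathrm{in}}}\widehat h\left(\bm\eta\right)^*\b M^{-1}\widehat h\left(\bm\eta\right)$.

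Now multiply and divide by $\left(1+\|\bm\eta\|_2^2\right)^{s_{\mathrm{out}}}$. The ratio of the two weights is bounded pointwise by its supremum, and the Hermitian form is nonnegative because $\b M^{-1}\succ0$. This yields
\begin{align*}
\|h\circ\b W\|^2_{\HMs{s_{\mathrm{in}}}}
\le
\sup_{\bm\eta\in\R^d}
\frac{\left(1+\|\b W^{\top}\bm\eta\|_2^2\right)^{s_{\mathrm{in}}}}{\left(1+\|\bm\eta\|_2^2\right)^{s_{\mathrm{out}}}}
\cdot
\frac{\|h\|^2_{\HMs{s_{\mathrm{out}}}}}{\left|\det\left(\b W\right)\right|} .
\end{align*}
Taking square roots gives the stated operator-norm bound, and shows in particular that $h\circ\b W$ lies in the target space, so $\mD_{\b W}$ is the whole space.

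The only delicate point is the density step, which is where the argument needs the most care. The bound is first proved on a dense subspace, where $h\circ\b W$ has a classical transform. Because the estimate is uniform, $h\mapsto h\circ\b W$ extends to a bounded operator on all of $\HMs{s_{\mathrm{out}}}$. This extension must coincide with genuine composition, so that it is really $\mK_{\b W}$. That follows because convergence in $\HMs{s_{\mathrm{out}}}$ implies pointwise convergence, since the RKHS evaluation functionals are continuous for $s_{\mathrm{out}}>d/2$, or else $L^2$ convergence when $s\ge0$. Composition with the invertible map $\b W$ preserves both kinds of convergence, so the limits agree almost everywhere, and hence everywhere for the continuous representatives. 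Finiteness of the supremum is assumed in the statement, so no separate argument is needed for it.
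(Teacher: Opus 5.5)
Your proposal is correct and follows the paper's proof essentially line for line. Both use the Fourier scaling identity $\widehat{h\circ\b W}(\bomega)=|\det(\b W)|^{-1}\widehat h(\b W^{-\top}\bomega)$, then the substitution $\bomega=\b W^{\top}\boldsymbol{\xi}$ that cancels one determinant factor, and finally pull the supremum of the weight ratio outside the integral. Your density remark adds rigor the paper skips (it applies the scaling formula directly to arbitrary $h$), and it is sound, since the identity persists for the $L^2$ Fourier transform and composition with $\b W$ preserves $L^2$ convergence.
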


\begin{proof}
Let $h\in\HMs{s_{\mathrm{out}}}\left(\R^d,\R^m\right)$. The Fourier scaling formula for an invertible linear map gives
\begin{align*}
\widehat{h\circ\b W}\left(\bomega\right)
=
\frac{1}{\left|\det\left(\b W\right)\right|}
\widehat h\left(\b W^{-\top}\bomega\right).
\end{align*}
Insert this identity into the weighted Sobolev norm:
\begin{align*}
\|\mK_{\b W}h\|_{\HMs{s_{\mathrm{in}}}\left(\R^d,\R^m\right)}^2
&=
\int_{\R^d}
\left(1+\|\bomega\|_2^2\right)^{s_{\mathrm{in}}}
\widehat{h\circ\b W}\left(\bomega\right)^*
\b M^{-1}
\widehat{h\circ\b W}\left(\bomega\right)
\,\d\bomega
\\
&=
\frac{1}{\left|\det\left(\b W\right)\right|^2}
\int_{\R^d}
\left(1+\|\bomega\|_2^2\right)^{s_{\mathrm{in}}}
\widehat h\left(\b W^{-\top}\bomega\right)^*
\b M^{-1}
\widehat h\left(\b W^{-\top}\bomega\right)
\,\d\bomega.
\end{align*}
Use the change of variables
\begin{align*}
\bomega
=
\b W^{\top}\boldsymbol{\xi}.
\end{align*}
Its Jacobian is $\left|\det\left(\b W\right)\right|$, so
\begin{align*}
\d\bomega
=
\left|\det\left(\b W\right)\right|
\d\boldsymbol{\xi}.
\end{align*}
The preceding norm therefore equals
\begin{align*}
\|\mK_{\b W}h\|_{\HMs{s_{\mathrm{in}}}}^2
&=
\frac{1}{\left|\det\left(\b W\right)\right|}
\int_{\R^d}
\left(1+\|\b W^{\top}\boldsymbol{\xi}\|_2^2\right)^{s_{\mathrm{in}}}
\widehat h\left(\boldsymbol{\xi}\right)^*
\b M^{-1}
\widehat h\left(\boldsymbol{\xi}\right)
\,\d\boldsymbol{\xi}
\\
&=
\frac{1}{\left|\det\left(\b W\right)\right|}
\int_{\R^d}
\frac{
\left(1+\|\b W^{\top}\boldsymbol{\xi}\|_2^2\right)^{s_{\mathrm{in}}}
}{
\left(1+\|\boldsymbol{\xi}\|_2^2\right)^{s_{\mathrm{out}}}
}
\left(1+\|\boldsymbol{\xi}\|_2^2\right)^{s_{\mathrm{out}}}
\widehat h\left(\boldsymbol{\xi}\right)^*
\b M^{-1}
\widehat h\left(\boldsymbol{\xi}\right)
\,\d\boldsymbol{\xi}.
\end{align*}
The scalar quotient is nonnegative. Taking its supremum outside the integral yields
\begin{align*}
\|\mK_{\b W}h\|_{\HMs{s_{\mathrm{in}}}}^2
&\le
\frac{1}{\left|\det\left(\b W\right)\right|}
\sup_{\boldsymbol{\xi}\in\R^d}
\frac{
\left(1+\|\b W^{\top}\boldsymbol{\xi}\|_2^2\right)^{s_{\mathrm{in}}}
}{
\left(1+\|\boldsymbol{\xi}\|_2^2\right)^{s_{\mathrm{out}}}
}
\|h\|_{\HMs{s_{\mathrm{out}}}}^2.
\end{align*}
Taking square roots proves the claimed operator-norm estimate.
\end{proof}

\subsection{Proof of the invertible theorem and its corollary}

\begin{proof}[Proof of \Cref{inv}]
For a fixed admissible network, define the complete Koopman chain
\begin{align*}
T
\coloneqq
\mK_{\b W_1}
\mK_{\b b_1}
\mK_{\sigma_1}
\cdots
\mK_{\b W_{L-1}}
\mK_{\b b_{L-1}}
\mK_{\sigma_{L-1}}
\mK_{\b W_L}
\mK_{\b b_L}.
\end{align*}
By \Cref{eq:koopman-factorization}, every function in the class has the form $f=Tg$ with
\begin{align*}
\|g\|_{\HMs{s_L}\left(\R^d,\R^m\right)}
\le B_g.
\end{align*}
Submultiplicativity of the operator norm gives
\begin{align*}
\|T\|
&\le
\prod_{l=1}^{L}
\|\mK_{\b W_l}\|
\prod_{l=1}^{L}
\|\mK_{\b b_l}\|
\prod_{l=1}^{L-1}
\|\mK_{\sigma_l}\|.
\end{align*}
By \Cref{lem:sobolev-translation}, every translation factor is one. By \Cref{lem:invertible-sobolev-pullback}, the $l$th linear factor satisfies
\begin{align*}
\|\mK_{\b W_l}\|
\le
\sup_{\bomega\in\R^d}
\left(
\frac{
\left(1+\|\b W_l^{\top}\bomega\|_2^2\right)^{s_{l-1}}
}{
\left(1+\|\bomega\|_2^2\right)^{s_l}
}
\right)^{1/2}
\frac{1}{\left|\det\left(\b W_l\right)\right|^{1/2}}.
\end{align*}
Consequently,
\begin{align*}
\|T\|
&\le
\prod_{l=1}^{L-1}
\|\mK_{\sigma_l}\|
\\
&\quad\cdot
\prod_{l=1}^{L}
\left[
\sup_{\bomega\in\R^d}
\left(
\frac{
\left(1+\|\b W_l^{\top}\bomega\|_2^2\right)^{s_{l-1}}
}{
\left(1+\|\bomega\|_2^2\right)^{s_l}
}
\right)^{1/2}
\frac{1}{\left|\det\left(\b W_l\right)\right|^{1/2}}
\right].
\end{align*}
Take the supremum of this estimate over all weight tuples in $\W^L$. The input space is the vvRKHS induced by $K_{s_0}=k_{s_0}\b M$, and \Cref{assum:kernel-diagonal} gives
\begin{align*}
k_{s_0}\left(\b x_i,\b x_i\right)
\le\kappa.
\end{align*}
Apply \Cref{lem:operator-image-rademacher} with the terminal Hilbert space $\HMs{s_L}\left(\R^d,\R^m\right)$, radius $B_g$, and the family of complete Koopman chains. The resulting estimate is exactly the bound stated in \Cref{inv}.
\end{proof}

\begin{proof}[Proof of \Cref{cor1}]
Assume $s_l=s$ for every $l$. For any $\b W\in\W$ and any $\bomega\in\R^d$,
\begin{align*}
\|\b W^{\top}\bomega\|_2
\le
\|\b W^{\top}\|
\|\bomega\|_2
=
\|\b W\|
\|\bomega\|_2
\le
C\|\bomega\|_2.
\end{align*}
It follows that
\begin{align*}
1+\|\b W^{\top}\bomega\|_2^2
&\le
1+C^2\|\bomega\|_2^2
\\
&\le
\max\left\{1,C^2\right\}
\left(1+\|\bomega\|_2^2\right).
\end{align*}
Raise both sides to the power $s$, divide by $\left(1+\|\bomega\|_2^2\right)^s$, and take square roots. This gives
\begin{align*}
\left(
\frac{
\left(1+\|\b W^{\top}\bomega\|_2^2\right)^s
}{
\left(1+\|\bomega\|_2^2\right)^s
}
\right)^{1/2}
\le
\max\left\{1,C^s\right\}.
\end{align*}
The determinant condition in the definition of $\W$ gives
\begin{align*}
\frac{1}{\left|\det\left(\b W\right)\right|^{1/2}}
\le
\frac{1}{D^{1/2}}.
\end{align*}
Thus each linear layer contributes at most
\begin{align*}
\frac{\max\left\{1,C^s\right\}}{D^{1/2}}.
\end{align*}
Substituting this uniform estimate into \Cref{inv} proves the corollary.
\end{proof}

\subsection{Injective linear composition and proof of the injective theorem}

\begin{lemma}[Uniform weighted Sobolev restriction]\label{lem:weighted-sobolev-restriction}
Fix integers $d_{\mathrm{in}}\le d_{\mathrm{out}}$ and set $q=d_{\mathrm{out}}-d_{\mathrm{in}}$. Let $s_{\mathrm{out}}>q/2$ and let
\begin{align*}
0\le s_{\mathrm{in}}
\le
s_{\mathrm{out}}-\frac{q}{2}.
\end{align*}
For every $d_{\mathrm{in}}$-dimensional linear subspace $\mS\subset\R^{d_{\mathrm{out}}}$, restriction extends uniquely to a bounded operator
\begin{align*}
R_{\mS}:
\HMs{s_{\mathrm{out}}}\left(\R^{d_{\mathrm{out}}},\R^m\right)
\longrightarrow
\HMs{s_{\mathrm{in}}}\left(\mS,\R^m\right).
\end{align*}
Moreover, the operator norm can be bounded by a constant that depends only on $d_{\mathrm{in}}$, $d_{\mathrm{out}}$, $s_{\mathrm{in}}$, and $s_{\mathrm{out}}$, and not on the orientation of $\mS$ or on the positive-definite matrix $\b M$.
\end{lemma}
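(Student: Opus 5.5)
The plan is to reduce the statement to the classical scalar Sobolev trace theorem on a coordinate hyperplane, using three reductions: rotation, diagonalization of $\b M$, and a Fourier-side Cauchy--Schwarz argument.

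\emph{Step 1: remove the orientation of $\mS$.} Choose an orthogonal matrix $\b Q\in\R^{d_{\mathrm{out}}\times d_{\mathrm{out}}}$ mapping $\R^{d_{\mathrm{in}}}\times\{0\}$ onto $\mS$. By the invertible pullback \Cref{lem:invertible-sobolev-pullback} with $s_{\mathrm{in}}=s_{\mathrm{out}}$, $\mK_{\b Q}$ is an isometry of $\HMs{s_{\mathrm{out}}}\left(\R^{d_{\mathrm{out}}},\R^m\right)$. Indeed $\|\b Q^{\top}\bomega\|_2=\|\bomega\|_2$ and $|\det\b Q|=1$, and applying the lemma to $\b Q^{-1}$ as well gives equality. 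The identification of $\mS$ with $\R^{d_{\mathrm{in}}}$ is isometric by definition. Hence $\|R_{\mS}\|$ equals the norm of restriction to the coordinate subspace $\R^{d_{\mathrm{in}}}\times\{0\}$, which is independent of $\mS$.

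\emph{Step 2: remove $\b M$.} Write $\b M^{-1/2}$ for the symmetric inverse square root. Since $\b M^{-1/2}$ is a constant matrix, it commutes with both the Fourier transform and restriction. The map $h\mapsto\b M^{-1/2}h$ is therefore a unitary from $\HMs{s}$ onto the unweighted space $H^s\left(\cdot,\R^m\right)$ at every order, and it intertwines the two restriction operators. The weighted restriction norm thus equals the unweighted vector-valued one. Because that norm is a sum over coordinates, it equals the scalar trace norm. This is where independence from $\b M$ enters.

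\emph{Step 3: the scalar coordinate-restriction bound for Schwartz $h$.} Write $\bomega=\left(\b u,\b v\right)\in\R^{d_{\mathrm{in}}}\times\R^q$. The Fourier transform of the restriction is
\begin{align*}
\widehat{h|}\left(\b u\right)=c\int_{\R^q}\widehat h\left(\b u,\b v\right)\d\b v,
\end{align*}
where $c$ is a constant fixed by the normalization. Cauchy--Schwarz gives
\begin{align*}
\left|\widehat{h|}\left(\b u\right)\right|^2\le c^2\int_{\R^q}\left(1+\|\b u\|_2^2+\|\b v\|_2^2\right)^{-s_{\mathrm{out}}}\d\b v\int_{\R^q}\left(1+\|\bomega\|_2^2\right)^{s_{\mathrm{out}}}\left|\widehat h\right|^2\d\b v.
\end{align*}
The substitution $\b v=\left(1+\|\b u\|_2^2\right)^{1/2}\b z$ shows that the first integral equals $C_{q,s_{\mathrm{out}}}\left(1+\|\b u\|_2^2\right)^{q/2-s_{\mathrm{out}}}$. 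The constant $C_{q,s_{\mathrm{out}}}$ is finite precisely because $s_{\mathrm{out}}>q/2$. Multiplying by $\left(1+\|\b u\|_2^2\right)^{s_{\mathrm{in}}}$ and using $s_{\mathrm{in}}\le s_{\mathrm{out}}-q/2$, the exponent $s_{\mathrm{in}}+q/2-s_{\mathrm{out}}$ is nonpositive, so that power is at most one. Integrating over $\b u$ yields
\begin{align*}
\|h|\|_{H^{s_{\mathrm{in}}}}^2\le c^2C_{q,s_{\mathrm{out}}}\|h\|_{H^{s_{\mathrm{out}}}}^2.
\end{align*}
This constant depends only on the dimensions and orders.

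\emph{Step 4: extension.} Schwartz functions are dense, and the estimate extends by continuity to a unique bounded operator. On continuous elements this extension agrees with pointwise restriction: since $s_{\mathrm{out}}>d_{\mathrm{out}}/2$, Sobolev convergence implies uniform convergence. The main obstacle is this bookkeeping in Steps 1 and 4, namely checking that rotation is an exact isometry and that the extension is genuinely restriction. The analytic core in Step 3 is routine.
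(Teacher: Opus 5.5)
Your proof is correct and uses the same skeleton as the paper's. Independence from the orientation of $\mS$ comes from rotation invariance of the Sobolev norm, and independence from $\b M$ comes from applying a scalar estimate componentwise to $\b M^{-1/2}h$. The only difference is the scalar step: the paper cites the trace theorem $H^{s_{\mathrm{out}}}(\R^{d_{\mathrm{out}}})\to H^{s_{\mathrm{out}}-q/2}(\mS)$ and then the embedding into $H^{s_{\mathrm{in}}}(\mS)$, whereas you prove the combined estimate directly by Fourier-side Cauchy--Schwarz, which makes the constant explicit (it depends only on $q$, $s_{\mathrm{out}}$ and the Fourier normalization).

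One small remark on Step 4: $s_{\mathrm{out}}>d_{\mathrm{out}}/2$ is not a hypothesis of the lemma itself, though it belongs to the paper's standing setup, where it is needed for the Sobolev kernel to be defined. You use it only to identify the extension with pointwise restriction; it plays no role in the existence or uniqueness of the bounded extension.
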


\begin{proof}
The scalar Sobolev trace theorem for Bessel-potential spaces states that restriction to a codimension-$q$ linear subspace extends boundedly as
\begin{align*}
H^{s_{\mathrm{out}}}\left(\R^{d_{\mathrm{out}}}\right)
\longrightarrow
H^{s_{\mathrm{out}}-q/2}\left(\mS\right)
\end{align*}
whenever $s_{\mathrm{out}}>q/2$. The continuous Sobolev embedding along the same Euclidean subspace gives
\begin{align*}
H^{s_{\mathrm{out}}-q/2}\left(\mS\right)
\longrightarrow
H^{s_{\mathrm{in}}}\left(\mS\right)
\end{align*}
when $s_{\mathrm{in}}\le s_{\mathrm{out}}-q/2$. Composing these two maps gives a scalar restriction bound
\begin{align*}
\|R_{\mS}u\|_{H^{s_{\mathrm{in}}}\left(\mS\right)}
\le
C_{\mathrm{tr}}
\|u\|_{H^{s_{\mathrm{out}}}\left(\R^{d_{\mathrm{out}}}\right)}.
\end{align*}
Because Euclidean Sobolev norms are invariant under orthogonal changes of coordinates, $C_{\mathrm{tr}}$ may be chosen independently of the orientation of $\mS$.

It remains to pass from the scalar estimate to the $\b M^{-1}$-weighted vector estimate. Let
\begin{align*}
\b A
\coloneqq
\b M^{-1/2}.
\end{align*}
For a vector-valued function $h$, define $u=\b A h$. Since $\b A$ acts only on the output coordinate and restriction acts only on the input variable,
\begin{align*}
\b A\left(R_{\mS}h\right)
=
R_{\mS}\left(\b A h\right).
\end{align*}
The weighted Sobolev norm can be written as the ordinary product Sobolev norm of $\b A h$:
\begin{align*}
\|h\|_{\HMs{s}\left(\R^d,\R^m\right)}^2
=
\sum_{a=1}^{m}
\|\left(\b A h\right)_a\|_{H^s\left(\R^d\right)}^2.
\end{align*}
Apply the scalar restriction estimate to each component of $\b A h$ and sum over $a$:
\begin{align*}
\|R_{\mS}h\|_{\HMs{s_{\mathrm{in}}}\left(\mS,\R^m\right)}^2
&=
\sum_{a=1}^{m}
\|R_{\mS}\left(\b A h\right)_a\|_{H^{s_{\mathrm{in}}}\left(\mS\right)}^2
\\
&\le
C_{\mathrm{tr}}^2
\sum_{a=1}^{m}
\|\left(\b A h\right)_a\|_{H^{s_{\mathrm{out}}}\left(\R^{d_{\mathrm{out}}}\right)}^2
\\
&=
C_{\mathrm{tr}}^2
\|h\|_{\HMs{s_{\mathrm{out}}}\left(\R^{d_{\mathrm{out}}},\R^m\right)}^2.
\end{align*}
The constant is the same scalar trace constant and is therefore independent of $\b M$.
\end{proof}

\begin{lemma}[Injective weighted Sobolev pullback on the range]\label{lem:injective-sobolev-pullback}
Let $d_{\mathrm{in}}\le d_{\mathrm{out}}$, let $s\ge0$, let $\b M\succ0$, and let $\b W\in\R^{d_{\mathrm{out}}\times d_{\mathrm{in}}}$ be injective. Set
\begin{align*}
\mS
\coloneqq
\ra\left(\b W\right).
\end{align*}
Then the pullback
\begin{align*}
P_{\b W}:
\HMs{s}\left(\mS,\R^m\right)
\longrightarrow
\HMs{s}\left(\R^{d_{\mathrm{in}}},\R^m\right),
\qquad
P_{\b W}u
\coloneqq
u\circ\b W,
\end{align*}
is bounded and satisfies
\begin{align*}
\|P_{\b W}\|
\le
\sup_{\bomega\in\mS}
\left(
\frac{
1+\|\b W^{\top}\bomega\|_2^2
}{
1+\|\bomega\|_2^2
}
\right)^{s/2}
\frac{1}{\det\left(\b W^{\top}\b W\right)^{1/4}}.
\end{align*}
\end{lemma}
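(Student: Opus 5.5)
The plan is to reduce the statement to \Cref{lem:invertible-sobolev-pullback} by identifying $\mS=\ra\left(\b W\right)$ isometrically with $\R^{d_{\mathrm{in}}}$. Choose an orthonormal basis of $\mS$, i.e.\ a matrix $\b U\in\R^{d_{\mathrm{out}}\times d_{\mathrm{in}}}$ with $\b U^{\top}\b U=\b I$ and $\ra\left(\b U\right)=\mS$. By the convention fixed in \Cref{sec:notation}, $\|u\|_{\HMs{s}\left(\mS,\R^m\right)}=\|u\circ\b U\|_{\HMs{s}\left(\R^{d_{\mathrm{in}}},\R^m\right)}$, and this is independent of the chosen basis by rotational invariance of the Sobolev norm. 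Since $\b W$ is injective with range $\mS$, we can factor
\begin{align*}
\b W=\b U\b V,
\qquad
\b V\coloneqq\b U^{\top}\b W\in\R^{d_{\mathrm{in}}\times d_{\mathrm{in}}},
\end{align*}
where $\b V$ is invertible. Then $P_{\b W}u=\left(u\circ\b U\right)\circ\b V=\mK_{\b V}\left(u\circ\b U\right)$, so $\|P_{\b W}\|\le\|\mK_{\b V}\|$, with $\mK_{\b V}$ acting from $\HMs{s}\left(\R^{d_{\mathrm{in}}},\R^m\right)$ to itself.

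Next we apply \Cref{lem:invertible-sobolev-pullback} with $s_{\mathrm{in}}=s_{\mathrm{out}}=s$:
\begin{align*}
\|\mK_{\b V}\|
\le
\sup_{\boldsymbol{\xi}\in\R^{d_{\mathrm{in}}}}
\left(\frac{1+\|\b V^{\top}\boldsymbol{\xi}\|_2^2}{1+\|\boldsymbol{\xi}\|_2^2}\right)^{s/2}
\frac{1}{|\det\left(\b V\right)|^{1/2}}.
\end{align*}
It remains to translate both factors into intrinsic quantities of $\b W$.

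\textbf{Determinant factor.} We have $\b W^{\top}\b W=\b V^{\top}\b U^{\top}\b U\b V=\b V^{\top}\b V$, so $|\det\left(\b V\right)|=\det\left(\b W^{\top}\b W\right)^{1/2}$. Hence $|\det\left(\b V\right)|^{-1/2}=\det\left(\b W^{\top}\b W\right)^{-1/4}$, which is exactly the factor in the statement.

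\textbf{Symbol factor.} Substitute $\bomega=\b U\boldsymbol{\xi}$. As $\boldsymbol{\xi}$ ranges over $\R^{d_{\mathrm{in}}}$, $\bomega$ ranges over all of $\mS$, and $\|\bomega\|_2=\|\boldsymbol{\xi}\|_2$. Moreover $\b W^{\top}\bomega=\b V^{\top}\b U^{\top}\b U\boldsymbol{\xi}=\b V^{\top}\boldsymbol{\xi}$. The supremum over $\R^{d_{\mathrm{in}}}$ therefore equals the supremum over $\bomega\in\mS$ in the claim.

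\textbf{Finiteness.} This supremum is at most $\max\left\{1,\|\b W\|^{s}\right\}$ by the argument in the proof of \Cref{cor1}, so $P_{\b W}$ is bounded.

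The main obstacle I expect is the identification step: the space $\HMs{s}\left(\mS,\R^m\right)$ must be well defined independently of the isometry, and $u\circ\b U$ must make sense for the restricted functions. The latter needs care, since the restricted functions are only defined via the trace extension of \Cref{lem:weighted-sobolev-restriction}. I would first prove the bound for smooth, rapidly decaying $u$, then extend it by density, using that both sides are continuous in the $\HMs{s}\left(\mS,\R^m\right)$ norm. The $\b M^{-1}$ weighting needs no extra work, because \Cref{lem:invertible-sobolev-pullback} already handles it.
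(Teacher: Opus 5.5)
Your proposal is correct and follows the paper's proof essentially step for step. The paper likewise factors $\b W=\b Q\b A$ through an orthonormal basis of $\mS$, applies the invertible pullback lemma with equal orders $s$, and rewrites the determinant and symbol factors via $\b W^{\top}\b W=\b A^{\top}\b A$ and $\bomega=\b Q\boldsymbol{\xi}$; your finiteness remark is a harmless addition, and your density step is unnecessary, since $\HMs{s}(\mS,\R^m)$ is defined directly by the isometric identification (no trace is involved) and the Fourier computation of the invertible lemma already applies to every element.
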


\begin{proof}
Choose a matrix $\b Q\in\R^{d_{\mathrm{out}}\times d_{\mathrm{in}}}$ whose columns form an orthonormal basis of $\mS$. Thus
\begin{align*}
\b Q^{\top}\b Q
=
\b I_{d_{\mathrm{in}}},
\qquad
\ra\left(\b Q\right)
=
\mS.
\end{align*}
Because every column of $\b W$ belongs to $\mS$, there is a unique square matrix
\begin{align*}
\b A
\coloneqq
\b Q^{\top}\b W
\in
\R^{d_{\mathrm{in}}\times d_{\mathrm{in}}}
\end{align*}
such that
\begin{align*}
\b W
=
\b Q\b A.
\end{align*}
Injectivity of $\b W$ implies invertibility of $\b A$. Indeed, if $\b A\b x=0$, then
\begin{align*}
\b W\b x
=
\b Q\b A\b x
=0,
\end{align*}
and injectivity of $\b W$ forces $\b x=0$.

Identify a function $u:\mS\to\R^m$ with the function
\begin{align*}
\widetilde u:\R^{d_{\mathrm{in}}}\to\R^m,
\qquad
\widetilde u\left(\b z\right)
\coloneqq
u\left(\b Q\b z\right).
\end{align*}
This identification is isometric by the definition of the Sobolev space on $\mS$. Moreover,
\begin{align*}
\left(P_{\b W}u\right)\left(\b x\right)
&=
u\left(\b W\b x\right)
\\
&=
u\left(\b Q\b A\b x\right)
\\
&=
\widetilde u\left(\b A\b x\right).
\end{align*}
Therefore $P_{\b W}$ is, in orthonormal range coordinates, exactly the invertible square pullback associated with $\b A$. Apply \Cref{lem:invertible-sobolev-pullback} with equal input and output order $s$:
\begin{align*}
\|P_{\b W}u\|_{\HMs{s}\left(\R^{d_{\mathrm{in}}},\R^m\right)}
&\le
\sup_{\boldsymbol{\xi}\in\R^{d_{\mathrm{in}}}}
\left(
\frac{
1+\|\b A^{\top}\boldsymbol{\xi}\|_2^2
}{
1+\|\boldsymbol{\xi}\|_2^2
}
\right)^{s/2}
\frac{1}{\left|\det\left(\b A\right)\right|^{1/2}}
\|\widetilde u\|_{\HMs{s}\left(\R^{d_{\mathrm{in}}},\R^m\right)}.
\end{align*}
We now rewrite every factor in coordinate-free form. First,
\begin{align*}
\b W^{\top}\b W
&=
\b A^{\top}\b Q^{\top}\b Q\b A
\\
&=
\b A^{\top}\b A.
\end{align*}
Taking determinants gives
\begin{align*}
\det\left(\b W^{\top}\b W\right)
&=
\det\left(\b A^{\top}\b A\right)
\\
&=
\det\left(\b A\right)^2,
\end{align*}
and hence
\begin{align*}
\left|\det\left(\b A\right)\right|^{1/2}
=
\det\left(\b W^{\top}\b W\right)^{1/4}.
\end{align*}
Second, for $\boldsymbol{\xi}\in\R^{d_{\mathrm{in}}}$, set
\begin{align*}
\bomega
\coloneqq
\b Q\boldsymbol{\xi}
\in\mS.
\end{align*}
Orthonormality gives
\begin{align*}
\|\bomega\|_2
=
\|\boldsymbol{\xi}\|_2,
\end{align*}
and
\begin{align*}
\b W^{\top}\bomega
&=
\b A^{\top}\b Q^{\top}\b Q\boldsymbol{\xi}
\\
&=
\b A^{\top}\boldsymbol{\xi}.
\end{align*}
As $\boldsymbol{\xi}$ ranges over $\R^{d_{\mathrm{in}}}$, $\bomega=\b Q\boldsymbol{\xi}$ ranges over $\mS$. The square-coordinate supremum is therefore equal to
\begin{align*}
\sup_{\bomega\in\mS}
\left(
\frac{
1+\|\b W^{\top}\bomega\|_2^2
}{
1+\|\bomega\|_2^2
}
\right)^{s/2}.
\end{align*}
Finally, the isometric identification gives
\begin{align*}
\|\widetilde u\|_{\HMs{s}\left(\R^{d_{\mathrm{in}}},\R^m\right)}
=
\|u\|_{\HMs{s}\left(\mS,\R^m\right)}.
\end{align*}
Substituting these three identities proves the stated bound.
\end{proof}

\begin{lemma}[Complete injective linear-layer estimate]\label{lem:complete-injective-layer}
Under the trace assumptions of \Cref{inj}, let $\b W_l\in\Wl$. Then
\begin{align*}
\mK_{\b W_l}:
\HMs{s_l}\left(\R^{d_l},\R^m\right)
\longrightarrow
\HMs{s_{l-1}}\left(\R^{d_{l-1}},\R^m\right)
\end{align*}
is bounded and
\begin{align*}
\|\mK_{\b W_l}\|
\le
\mm G_l
\sup_{\bomega\in\ra\left(\b W_l\right)}
\left(
\frac{
1+\|\b W_l^{\top}\bomega\|_2^2
}{
1+\|\bomega\|_2^2
}
\right)^{s_{l-1}/2}
\frac{1}{\det\left(\b W_l^{\top}\b W_l\right)^{1/4}}.
\end{align*}
\end{lemma}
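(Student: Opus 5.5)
The plan is to factor the Koopman operator of the injective layer as $\mK_{\b W_l}=P_{\b W_l}R_{\b W_l}$. Here $R_{\b W_l}$ is the restriction of \Cref{lem:weighted-sobolev-restriction}, applied with the subspace $\mS=\ra\left(\b W_l\right)$, the dimensions $d_{\mathrm{in}}=d_{l-1}$ and $d_{\mathrm{out}}=d_l$, and the orders $s_{\mathrm{in}}=s_{l-1}$ and $s_{\mathrm{out}}=s_l$. The factor $P_{\b W_l}$ is the range pullback of \Cref{lem:injective-sobolev-pullback}, taken at order $s=s_{l-1}$. Once the factorization is established, submultiplicativity gives
\begin{align*}
\|\mK_{\b W_l}\|
\le
\|P_{\b W_l}\|\,\|R_{\b W_l}\|.
\end{align*}
The estimate $\|R_{\b W_l}\|\le\mm G_l$ holds by the definition of $\mm G_l$ as a supremum over $\Wl$. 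The bound on $\|P_{\b W_l}\|$ is exactly the symbol-ratio and determinant factor in the statement. Multiplying these two estimates yields the claim, and boundedness of $\mK_{\b W_l}$ comes out as part of the same argument.

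The trace hypotheses of \Cref{inj} are precisely the hypotheses of \Cref{lem:weighted-sobolev-restriction}: $s_l>q_l/2$ and $0\le s_{l-1}\le s_l-q_l/2$ with $q_l=d_l-d_{l-1}$. That lemma therefore supplies a bounded $R_{\b W_l}$ into $\HMs{s_{l-1}}\left(\ra\left(\b W_l\right),\R^m\right)$, with a norm bound that is uniform in the orientation of the range, so $\mm G_l<\infty$. Injectivity of $\b W_l$ together with $s_{l-1}\ge0$ are the hypotheses of \Cref{lem:injective-sobolev-pullback}. The supremum appearing in that lemma is finite, because for $\bomega\in\ra\left(\b W_l\right)$ we have $1+\|\b W_l^{\top}\bomega\|_2^2\le\max\left\{1,\|\b W_l\|^2\right\}\left(1+\|\bomega\|_2^2\right)$.

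The main obstacle is justifying the identity $\mK_{\b W_l}h=P_{\b W_l}\left(R_{\b W_l}h\right)$ for every $h\in\HMs{s_l}\left(\R^{d_l},\R^m\right)$, since $R_{\b W_l}$ is defined only as the continuous extension of pointwise restriction. I would first note that $s_l>d_l/2$, so every such $h$ is continuous and coincides with its continuous representative. I would then verify the identity on the dense subspace of Schwartz (or $C_c^\infty$) functions: there the restriction is literal, and
\begin{align*}
\left(R_{\b W_l}h\right)\left(\b W_l\b x\right)=h\left(\b W_l\b x\right).
\end{align*}
For general $h$, take a sequence $h_k\to h$ in $\HMs{s_l}$. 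The sequence $P_{\b W_l}R_{\b W_l}h_k$ converges in $\HMs{s_{l-1}}$ to $P_{\b W_l}R_{\b W_l}h$. Meanwhile $h_k\circ\b W_l\to h\circ\b W_l$ pointwise, because $\HMs{s_l}$ is an RKHS and point evaluations are continuous. Convergence in $\HMs{s_{l-1}}$ implies convergence in $L^2$, hence almost-everywhere convergence along a subsequence, so the two limits agree almost everywhere. This shows $h\circ\b W_l\in\HMs{s_{l-1}}$, so $h\in\mD_{\b W_l}$, and establishes the factorization. If $s_{l-1}\le d_{l-1}/2$ the target space is not an RKHS, but the $L^2$ identification is unaffected; the standing assumption $s_{l-1}>d_{l-1}/2$ removes the issue in any case.
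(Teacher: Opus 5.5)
Your proposal is correct and follows the paper's proof: you factor $\mK_{\b W_l}=P_{\b W_l}R_{\b W_l}$, bound $\|R_{\b W_l}\|\le\mm G_l$ by the definition of $\mm G_l$, and apply the range-pullback lemma at order $s_{l-1}$. Your density argument, which identifies the continuous extension $R_{\b W_l}$ with pointwise restriction, adds justification for the factorization; the paper simply asserts it through the identity $h\circ\b W_l=\left(h|_{\mS_l}\right)\circ\b W_l$.
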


\begin{proof}
Let
\begin{align*}
\mS_l
\coloneqq
\ra\left(\b W_l\right).
\end{align*}
For $h\in\HMs{s_l}\left(\R^{d_l},\R^m\right)$, composition with $\b W_l$ depends only on the restriction of $h$ to $\mS_l$. More precisely,
\begin{align*}
h\circ\b W_l
=
\left(h|_{\mS_l}\right)\circ\b W_l.
\end{align*}
Thus the full Koopman operator factors as
\begin{align*}
\mK_{\b W_l}
=
P_{\b W_l}R_{\b W_l},
\end{align*}
where $R_{\b W_l}$ is the restriction map defined in the main text and $P_{\b W_l}$ is the range pullback from \Cref{lem:injective-sobolev-pullback}. Submultiplicativity gives
\begin{align*}
\|\mK_{\b W_l}\|
\le
\|P_{\b W_l}\|
\|R_{\b W_l}\|.
\end{align*}
By definition of $\mm G_l$,
\begin{align*}
\|R_{\b W_l}\|
\le
\mm G_l.
\end{align*}
Apply \Cref{lem:injective-sobolev-pullback} with $s=s_{l-1}$ to the other factor. Multiplying the two estimates proves the result.
\end{proof}

\begin{proof}[Proof of \Cref{inj}]
For a fixed injective network, let $T$ denote the complete Koopman chain in \Cref{eq:koopman-factorization}. As in the proof of \Cref{inv}, submultiplicativity gives
\begin{align*}
\|T\|
&\le
\prod_{l=1}^{L}
\|\mK_{\b W_l}\|
\prod_{l=1}^{L}
\|\mK_{\b b_l}\|
\prod_{l=1}^{L-1}
\|\mK_{\sigma_l}\|.
\end{align*}
The dimensions of a translation agree on its domain and codomain, so \Cref{lem:sobolev-translation} applies at every layer and gives
\begin{align*}
\|\mK_{\b b_l}\|
=1.
\end{align*}
For the linear layers, \Cref{lem:complete-injective-layer} gives
\begin{align*}
\|\mK_{\b W_l}\|
\le
\mm G_l
\sup_{\bomega\in\ra\left(\b W_l\right)}
\left(
\frac{
1+\|\b W_l^{\top}\bomega\|_2^2
}{
1+\|\bomega\|_2^2
}
\right)^{s_{l-1}/2}
\frac{1}{\det\left(\b W_l^{\top}\b W_l\right)^{1/4}}.
\end{align*}
Consequently,
\begin{align*}
\|T\|
&\le
\prod_{l=1}^{L-1}
\|\mK_{\sigma_l}\|
\\
&\quad\cdot
\prod_{l=1}^{L}
\left[
\mm G_l
\sup_{\bomega\in\ra\left(\b W_l\right)}
\left(
\frac{
1+\|\b W_l^{\top}\bomega\|_2^2
}{
1+\|\bomega\|_2^2
}
\right)^{s_{l-1}/2}
\frac{1}{\det\left(\b W_l^{\top}\b W_l\right)^{1/4}}
\right].
\end{align*}
Take the supremum over the product class $\prod_{l=1}^{L}\Wl$. Every function in $\F_{\scriptscriptstyle\mm{inj}}$ has the form $Tg$ with terminal norm at most $B_g$. The input vvRKHS has kernel $K_{s_0}=k_{s_0}\b M$, and its scalar diagonal is bounded by $\kappa$ under \Cref{assum:kernel-diagonal}. Applying \Cref{lem:operator-image-rademacher} proves the theorem.
\end{proof}

\section{Proof of the Brownian/Cameron--Martin Bound}\label{app:brownian-proof}

\subsection{Exact vector-valued Brownian RKHS}

\begin{lemma}[Signed-interval feature representation]\label{lem:signed-interval-feature}
For $x\in\left[-R,R\right]$, define $\psi_x\in L^2\left(\left[-R,R\right]\right)$ by
\begin{align*}
\psi_x\left(t\right)
\coloneqq
\begin{cases}
\mathbf 1_{\left(0,x\right]}\left(t\right),
& x\ge0,
\\
-\mathbf 1_{\left[x,0\right)}\left(t\right),
& x<0.
\end{cases}
\end{align*}
Then, for all $x,x'\in\left[-R,R\right]$,
\begin{align*}
\left\langle
\psi_x,
\psi_{x'}
\right\rangle_{L^2\left(\left[-R,R\right]\right)}
=
\kB\left(x,x'\right).
\end{align*}
\end{lemma}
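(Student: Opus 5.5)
The plan is a direct case analysis on the signs of $x$ and $x'$, computing the $L^2$ inner product of the two signed indicators as a signed length of an interval intersection. Throughout, write $\lambda$ for Lebesgue measure and use
\begin{align*}
\left\langle\psi_x,\psi_{x'}\right\rangle_{L^2\left(\left[-R,R\right]\right)}
=
\int_{-R}^{R}\psi_x\left(t\right)\psi_{x'}\left(t\right)\,\d t .
\end{align*}
Since $\psi_x$ is supported in $\left[-R,R\right]$ for $x\in\left[-R,R\right]$, each inner product reduces to $\pm$ the measure of the intersection of the two supporting intervals. Both sides are symmetric in $\left(x,x'\right)$, so it suffices to treat the sign patterns up to swapping the two arguments.

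\textbf{Same sign.} If $x,x'\ge0$, the product is $\mathbf 1_{\left(0,\min\left\{x,x'\right\}\right]}$, so the inner product equals $\min\left\{x,x'\right\}$. On the kernel side, $|x-x'|=\max\left\{x,x'\right\}-\min\left\{x,x'\right\}$ and $|x|+|x'|=\max\left\{x,x'\right\}+\min\left\{x,x'\right\}$, so $\kB\left(x,x'\right)=\min\left\{x,x'\right\}$.

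If $x,x'<0$, the two minus signs cancel and the product is $\mathbf 1_{\left[\max\left\{x,x'\right\},0\right)}$. The inner product is therefore $-\max\left\{x,x'\right\}=\min\left\{|x|,|x'|\right\}$. The same computation as before, applied to $|x|$ and $|x'|$, gives $\kB\left(x,x'\right)=\min\left\{|x|,|x'|\right\}$.

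\textbf{Opposite signs.} Suppose $x\ge0>x'$. The supports $\left(0,x\right]$ and $\left[x',0\right)$ are disjoint, so the inner product is $0$. On the kernel side, $|x-x'|=x-x'=|x|+|x'|$, so $\kB\left(x,x'\right)=0$ as well.

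\textbf{Boundary cases.} The case $x=0$ or $x'=0$ is already covered, since $\psi_0=0$ and $\kB\left(0,\cdot\right)=0$.

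There is no real obstacle here. The only care needed is the bookkeeping of the sign convention for negative $x$ and the half-open endpoints of the intervals. The endpoints are irrelevant because single points have measure zero. The sign convention is exactly what makes the negative–negative case produce a positive value: the product $\left(-1\right)\left(-1\right)=1$ on the overlap.
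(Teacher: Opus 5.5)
Your proof is correct and follows the paper's argument: the same three-way sign case analysis (both nonnegative, both negative, opposite signs), with the inner product computed as the length of the overlap and matched to $\kB$ by evaluating $|x-x'|$ in each case. The symmetry reduction and the remark that $\psi_0=0$ are harmless additions that the paper leaves implicit.
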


\begin{proof}
There are three sign configurations.

\proofstep{Both points are nonnegative.}
If $x\ge0$ and $x'\ge0$, then the supports are the intervals $\left(0,x\right]$ and $\left(0,x'\right]$. Their intersection has length $\min\left\{x,x'\right\}$, so
\begin{align*}
\left\langle\psi_x,\psi_{x'}\right\rangle
=
\min\left\{x,x'\right\}.
\end{align*}
For nonnegative $x$ and $x'$,
\begin{align*}
\frac{|x|+|x'|-|x-x'|}{2}
&=
\frac{x+x'-|x-x'|}{2}
\\
&=
\min\left\{x,x'\right\}.
\end{align*}
Thus the desired identity holds.

\proofstep{Both points are nonpositive.}
If $x<0$ and $x'<0$, both feature functions carry a minus sign, so their product is positive on the overlap of $\left[x,0\right)$ and $\left[x',0\right)$. The overlap has length $\min\left\{|x|,|x'|\right\}$, and hence
\begin{align*}
\left\langle\psi_x,\psi_{x'}\right\rangle
=
\min\left\{|x|,|x'|\right\}.
\end{align*}
Because $x$ and $x'$ have the same negative sign,
\begin{align*}
|x-x'|
=
\left||x|-|x'|\right|.
\end{align*}
Therefore,
\begin{align*}
\frac{|x|+|x'|-|x-x'|}{2}
=
\min\left\{|x|,|x'|\right\}.
\end{align*}
Again the desired identity holds.

\proofstep{The points have opposite signs.}
If one point is nonnegative and the other is negative, the two feature functions have disjoint supports up to the single point zero, which has Lebesgue measure zero. Hence their inner product is zero. In the same sign configuration,
\begin{align*}
|x-x'|
=
|x|+|x'|,
\end{align*}
so $\kB\left(x,x'\right)=0$. This completes all cases.
\end{proof}

\begin{lemma}[Exact vector-valued Cameron--Martin characterization]\label{lem:BrownianCameronMartin}
Let $\mX=\left[-R,R\right]$, let $\b M\succ0$, and let
\begin{align*}
K\left(x,x'\right)
=
\kB\left(x,x'\right)\b M.
\end{align*}
The vvRKHS $\HB\left(\mX,\R^m\right)$ is exactly
\begin{align*}
\left\{
f:\mX\to\R^m
\;\middle|\;
f\left(0\right)=0,
\ f\text{ is absolutely continuous},
\ f'\in L^2\left(\mX,\R^m\right)
\right\},
\end{align*}
and its inner product and norm are
\begin{align*}
\left\langle f,h\right\rangle_{\HB}
&=
\int_{-R}^{R}
f'\left(t\right)^{\top}
\b M^{-1}
h'\left(t\right)
\,\d t,
\\
\|f\|_{\HB}^2
&=
\int_{-R}^{R}
f'\left(t\right)^{\top}
\b M^{-1}
f'\left(t\right)
\,\d t.
\end{align*}
\end{lemma}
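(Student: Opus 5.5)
The approach is to build the candidate space explicitly, check that it is a Hilbert space with the reproducing property for $K$, and then invoke uniqueness of the vvRKHS attached to a matrix-valued kernel.

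\emph{Step 1 (candidate space).} Let
\begin{align*}
\mathcal{G}\coloneqq\left\{f:\mX\to\R^m \;\middle|\; f(0)=0,\ f\text{ absolutely continuous},\ f'\in L^2\left(\mX,\R^m\right)\right\},
\end{align*}
equipped with $\langle f,h\rangle_{\mathcal G}\coloneqq\int_{-R}^{R}f'(t)^{\top}\b M^{-1}h'(t)\,\d t$.

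\emph{Step 2 ($\mathcal G$ is a Hilbert space).} Every $f\in\mathcal G$ satisfies $f(x)=\int_{-R}^{R}\psi_x(t)f'(t)\,\d t$, with $\psi_x$ as in \Cref{lem:signed-interval-feature}. For $x\ge0$ this is $f(x)-f(0)$; for $x<0$ it is $-\int_x^0 f' = f(x)-f(0)$. So the map $f\mapsto f'$ is a linear bijection from $\mathcal G$ onto $L^2(\mX,\R^m)$, with inverse $v\mapsto\left(x\mapsto\int\psi_x v\right)$. The inverse lands in $\mathcal G$, since an indefinite integral of an $L^2\subset L^1$ function is absolutely continuous and vanishes at $0$. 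This bijection is isometric when $L^2$ carries the inner product $\int v^{\top}\b M^{-1}w$. That inner product is equivalent to the standard one because $\b M\succ0$, so $\mathcal G$ is complete. Definiteness holds because $f'=0$ a.e.\ together with $f(0)=0$ forces $f\equiv0$.

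\emph{Step 3 (kernel sections lie in $\mathcal G$).} For $x\in\mX$ and $\b y\in\R^m$, set $u_{x,\b y}(\cdot)\coloneqq K(\cdot,x)\b y=\kB(\cdot,x)\b M\b y$. By \Cref{lem:signed-interval-feature},
\begin{align*}
\kB(z,x)=\langle\psi_z,\psi_x\rangle=\int\psi_z(t)\psi_x(t)\,\d t.
\end{align*}
Hence $u_{x,\b y}(z)=\int\psi_z(t)\,\psi_x(t)\b M\b y\,\d t$. By the representation in Step 2, $u_{x,\b y}\in\mathcal G$ with $u_{x,\b y}'=\psi_x\b M\b y$ a.e.

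\emph{Step 4 (reproducing identity).} For $f\in\mathcal G$,
\begin{align*}
\langle f,u_{x,\b y}\rangle_{\mathcal G}=\int f'(t)^{\top}\b M^{-1}\b M\b y\,\psi_x(t)\,\d t=\left(\int\psi_x f'\right)^{\top}\b y=f(x)^{\top}\b y.
\end{align*}
So $\mathcal G$ is a Hilbert space of $\R^m$-valued functions on $\mX$ that contains all sections $K(\cdot,x)\b y$ and satisfies the reproducing identity. By uniqueness of the vvRKHS associated with a matrix-valued kernel (Moore--Aronszajn, vector-valued version), $\mathcal G=\HB(\mX,\R^m)$ with the same inner product. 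The norm formula is the case $h=f$.

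\emph{Main obstacle.} The only delicate point is making the uniqueness step airtight. Equivalently, one must show that the span of the sections is dense in $\mathcal G$. Density follows directly: if $f\perp u_{x,\b y}$ for all $x$ and $\b y$, then $f(x)^{\top}\b y=0$ for all $x,\b y$, so $f=0$. The remaining care is the a.e.\ derivative bookkeeping for negative $x$ in Steps 2 and 3, namely the sign of $\psi_x$ on $[x,0)$. This is routine once the identity $f(x)=\int\psi_x f'$ is written down for both signs of $x$.
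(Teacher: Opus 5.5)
Your proposal is correct and follows essentially the same route as the paper. Both arguments identify the anchored absolutely continuous space isometrically with the $\b M^{-1}$-weighted $L^2$ space (the paper's map $J$ is your inverse map $v\mapsto\left(x\mapsto\int\psi_x v\right)$), place the kernel sections in that space with derivative $\psi_x\b M\b y$, derive the reproducing identity from $f(x)=\int\psi_x f'$, and conclude by uniqueness of the vvRKHS. In your Step 3 you write $K(\cdot,x)\b y$ as $z\mapsto\int\psi_z(t)\psi_x(t)\b M\b y\,\d t$ and then apply the inverse map; this spells out the derivative identity that the paper simply attributes to \Cref{lem:signed-interval-feature}.
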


\begin{proof}
Define the weighted derivative space
\begin{align*}
L_{\b M}^2\left(\mX,\R^m\right)
\coloneqq
\left\{
 v:\mX\to\R^m
 \;\middle|\;
 \int_{-R}^{R}
 v\left(t\right)^{\top}\b M^{-1}v\left(t\right)
 \,\d t
 <\infty
\right\},
\end{align*}
with inner product
\begin{align*}
\left\langle v,w\right\rangle_{L_{\b M}^2}
\coloneqq
\int_{-R}^{R}
v\left(t\right)^{\top}
\b M^{-1}
w\left(t\right)
\,\d t.
\end{align*}
Let $\lambda_{\min}\left(\b M\right)>0$ and $\lambda_{\max}\left(\b M\right)>0$ denote the smallest and largest eigenvalues of $\b M$. For every $z\in\R^m$,
\begin{align*}
\frac{1}{\lambda_{\max}\left(\b M\right)}
\|z\|_2^2
\le
z^{\top}\b M^{-1}z
\le
\frac{1}{\lambda_{\min}\left(\b M\right)}
\|z\|_2^2.
\end{align*}
The weighted norm is therefore equivalent to the ordinary vector-valued $L^2$ norm. Since $L^2\left(\mX,\R^m\right)$ is complete, $L_{\b M}^2\left(\mX,\R^m\right)$ is a Hilbert space.

Let $\mC_{\b M}$ denote the anchored absolutely continuous space displayed in the lemma. Define
\begin{align*}
J:
L_{\b M}^2\left(\mX,\R^m\right)
&\longrightarrow
\mC_{\b M},
\\
\left(Jv\right)\left(x\right)
&\coloneqq
\int_{0}^{x}v\left(t\right)\,\d t,
\qquad
x\in\left[-R,R\right],
\end{align*}
where, for $x<0$, the oriented integral means
\begin{align*}
\int_0^x v\left(t\right)\,\d t
=
-
\int_x^0 v\left(t\right)\,\d t.
\end{align*}
For every $v\in L_{\b M}^2$, the function $Jv$ is absolutely continuous, satisfies $\left(Jv\right)\left(0\right)=0$, and has weak derivative
\begin{align*}
\left(Jv\right)'
=
v
\end{align*}
almost everywhere. Thus $Jv\in\mC_{\b M}$. Conversely, if $f\in\mC_{\b M}$, the fundamental theorem for absolutely continuous functions and the anchoring condition give
\begin{align*}
f\left(x\right)
&=
f\left(0\right)
+
\int_0^x f'\left(t\right)\,\d t
\\
&=
\int_0^x f'\left(t\right)\,\d t
\\
&=
\left(Jf'\right)\left(x\right)
\end{align*}
for every $x\in\mX$. Hence $J$ is surjective. It is injective because $Jv=0$ implies $v=\left(Jv\right)'=0$ almost everywhere. Finally,
\begin{align*}
\|Jv\|_{\mC_{\b M}}^2
&=
\int_{-R}^{R}
\left(Jv\right)'\left(t\right)^{\top}
\b M^{-1}
\left(Jv\right)'\left(t\right)
\,\d t
\\
&=
\int_{-R}^{R}
v\left(t\right)^{\top}
\b M^{-1}
v\left(t\right)
\,\d t
\\
&=
\|v\|_{L_{\b M}^2}^2.
\end{align*}
Therefore $J$ is an isometric bijection from the Hilbert space $L_{\b M}^2\left(\mX,\R^m\right)$ onto $\mC_{\b M}$. In particular, $\mC_{\b M}$ is complete.

Fix $x\in\mX$ and $\b y\in\R^m$. The kernel section is
\begin{align*}
K\left(\cdot,x\right)\b y
=
\kB\left(\cdot,x\right)\b M\b y.
\end{align*}
By \Cref{lem:signed-interval-feature}, the weak derivative of the scalar function $t\mapsto\kB\left(t,x\right)$ is $\psi_x\left(t\right)$ for almost every $t$. Consequently,
\begin{align*}
\frac{\d}{\d t}
\left(
K\left(t,x\right)\b y
\right)
=
\psi_x\left(t\right)
\b M\b y
\end{align*}
for almost every $t$. This derivative belongs to $L^2$, and the kernel section vanishes at $t=0$. Hence $K\left(\cdot,x\right)\b y\in\mC_{\b M}$.

For $f\in\mC_{\b M}$, compute its inner product with the kernel section:
\begin{align*}
\left\langle
f,
K\left(\cdot,x\right)\b y
\right\rangle_{\mC_{\b M}}
&=
\int_{-R}^{R}
f'\left(t\right)^{\top}
\b M^{-1}
\left(
\psi_x\left(t\right)\b M\b y
\right)
\,\d t
\\
&=
\int_{-R}^{R}
\psi_x\left(t\right)
f'\left(t\right)^{\top}
\b y
\,\d t
\\
&=
\left(
\int_{-R}^{R}
\psi_x\left(t\right)
f'\left(t\right)
\,\d t
\right)^{\top}
\b y.
\end{align*}
The signed-interval definition of $\psi_x$ and the fundamental theorem for absolutely continuous functions give
\begin{align*}
\int_{-R}^{R}
\psi_x\left(t\right)
f'\left(t\right)
\,\d t
=
f\left(x\right)-f\left(0\right)
=
f\left(x\right).
\end{align*}
Therefore,
\begin{align*}
\left\langle
f,
K\left(\cdot,x\right)\b y
\right\rangle_{\mC_{\b M}}
=
f\left(x\right)^{\top}\b y.
\end{align*}
Thus $K$ is a reproducing kernel for $\mC_{\b M}$. By uniqueness of the RKHS associated with a positive semidefinite kernel, $\mC_{\b M}=\HB\left(\mX,\R^m\right)$ isometrically. The displayed inner product and norm are therefore exact.
\end{proof}

\subsection{Brownian composition operators}

\begin{lemma}[Scalar linear maps in the Brownian RKHS]\label{lem:brownian-linear-map}
Let $W\in\R$ satisfy $|W|\le1$. Then
\begin{align*}
\mK_W:
\HB\left(\left[-R,R\right],\R^m\right)
\longrightarrow
\HB\left(\left[-R,R\right],\R^m\right)
\end{align*}
is bounded and
\begin{align*}
\|\mK_W\|
\le
|W|^{1/2}.
\end{align*}
\end{lemma}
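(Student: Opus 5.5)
The plan is to work entirely with the exact anchored derivative norm from \Cref{lem:BrownianCameronMartin}. There are three steps: show that $f\circ W$ lies in the Brownian space, identify its derivative, and then evaluate its norm by a linear change of variables. The hypothesis $|W|\le1$ is used in two places: to keep $Wx\in[-R,R]$, so that $f\circ W$ is defined on $\mX$, and to keep the rescaled integration interval inside $[-R,R]$.

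\emph{Membership.} Fix $f\in\HB\left(\mX,\R^m\right)$.

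1. If $W=0$, then $f\circ W\equiv f(0)=0$. Hence $\|\mK_W f\|=0$ and the bound is trivial, so assume $W\neq0$.
2. Since $|W|\le1$, the map $x\mapsto Wx$ sends $[-R,R]$ into $[-R,|W|R]\cup[-|W|R,R]\subset[-R,R]$, so $f(Wx)$ is defined. Moreover $(f\circ W)(0)=f(0)=0$, so the anchor condition holds.
3. Rather than invoke a general chain rule for absolutely continuous functions, I would use the representation $f=Jf'$ from the proof of \Cref{lem:BrownianCameronMartin}. The linear substitution $t=Wu$ in the oriented integral, valid for any $L^1$ integrand, gives
\begin{align*}
f(Wx)=\int_0^{Wx}f'(t)\,\d t=\int_0^{x}W f'(Wu)\,\d u .
\end{align*}
4. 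The integrand $u\mapsto Wf'(Wu)$ is in $L^2\left(\mX,\R^m\right)$; this is the computation in the norm step below. Hence $f\circ W=J\bigl(Wf'(W\cdot)\bigr)$ lies in the anchored absolutely continuous space $\mC_{\b M}$, and its weak derivative is $Wf'(W\cdot)$ almost everywhere.

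\emph{Norm computation.} By the exact norm formula,
\begin{align*}
\|\mK_W f\|_{\HB}^2
=\int_{-R}^{R}W^2 f'(Wx)^{\top}\b M^{-1}f'(Wx)\,\d x .
\end{align*}
Substitute $t=Wx$, so that $\d x=\d t/|W|$ after accounting for orientation when $W<0$. The $x$-interval $[-R,R]$ maps onto $[-|W|R,|W|R]$, and therefore
\begin{align*}
\|\mK_W f\|_{\HB}^2
=|W|\int_{-|W|R}^{|W|R}f'(t)^{\top}\b M^{-1}f'(t)\,\d t
\le |W|\,\|f\|_{\HB}^2 .
\end{align*}
The inequality holds because the integrand is nonnegative (as $\b M^{-1}\succ0$) and $[-|W|R,|W|R]\subset[-R,R]$. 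This computation also confirms the $L^2$ membership claimed in the first part. Taking square roots and the supremum over $\|f\|_{\HB}\le1$ gives $\|\mK_W\|\le|W|^{1/2}$, and in particular $\mK_W$ is bounded.

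The only delicate point is justifying that $f\circ W$ is absolutely continuous with derivative $Wf'(W\cdot)$ when $f'$ is merely $L^2$. Routing the argument through the integral representation $J$ avoids any general chain-rule subtlety, because only a linear substitution in a Lebesgue integral is needed. Everything after that is a direct calculation.
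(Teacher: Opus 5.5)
Your proposal is correct and follows the same route as the paper. Both arguments check the anchor and domain conditions, identify the derivative as $Wf'(W\cdot)$, substitute $t=Wx$ to obtain $|W|\int_{-|W|R}^{|W|R}f'(t)^{\top}\b M^{-1}f'(t)\,\d t$, and conclude from nonnegativity of the integrand and $[-|W|R,|W|R]\subset[-R,R]$. The only difference is that you derive the derivative formula from the representation $f=Jf'$ by a linear substitution, whereas the paper cites the chain rule for absolutely continuous functions; your version makes the membership step slightly more self-contained. Incidentally, the image of $[-R,R]$ under $x\mapsto Wx$ is simply $[-|W|R,|W|R]$, which is all your step 2 needs.
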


\begin{proof}
Let $f\in\HB\left(\left[-R,R\right],\R^m\right)$. Since $|W|\le1$, the point $Wx$ belongs to $\left[-R,R\right]$ whenever $x\in\left[-R,R\right]$. Also,
\begin{align*}
\left(f\circ W\right)\left(0\right)
=
f\left(0\right)
=0.
\end{align*}
The chain rule for absolutely continuous functions gives
\begin{align*}
\left(f\circ W\right)'\left(x\right)
=
W f'\left(Wx\right)
\end{align*}
for almost every $x$. By \Cref{lem:BrownianCameronMartin},
\begin{align*}
\|\mK_Wf\|_{\HB}^2
&=
\int_{-R}^{R}
\left(Wf'\left(Wx\right)\right)^{\top}
\b M^{-1}
\left(Wf'\left(Wx\right)\right)
\,\d x
\\
&=
|W|^2
\int_{-R}^{R}
f'\left(Wx\right)^{\top}
\b M^{-1}
f'\left(Wx\right)
\,\d x.
\end{align*}
If $W=0$, the last expression is zero and the result is immediate. Assume $W\ne0$ and set
\begin{align*}
Q\left(y\right)
\coloneqq
f'\left(y\right)^{\top}
\b M^{-1}
f'\left(y\right).
\end{align*}
Because $\b M^{-1}\succ0$, one has $Q\left(y\right)\ge0$ for almost every $y$.

First suppose that $W>0$. With $y=Wx$ and $\d x=\d y/W$,
\begin{align*}
|W|^2
\int_{-R}^{R}
Q\left(Wx\right)
\,\d x
&=
W^2
\frac{1}{W}
\int_{-WR}^{WR}
Q\left(y\right)
\,\d y
\\
&=
|W|
\int_{-|W|R}^{|W|R}
Q\left(y\right)
\,\d y.
\end{align*}
Next suppose that $W<0$. The same substitution reverses the integration limits, and therefore
\begin{align*}
|W|^2
\int_{-R}^{R}
Q\left(Wx\right)
\,\d x
&=
W^2
\frac{1}{W}
\int_{-WR}^{WR}
Q\left(y\right)
\,\d y
\\
&=
W
\int_{|W|R}^{-|W|R}
Q\left(y\right)
\,\d y
\\
&=
-W
\int_{-|W|R}^{|W|R}
Q\left(y\right)
\,\d y
\\
&=
|W|
\int_{-|W|R}^{|W|R}
Q\left(y\right)
\,\d y.
\end{align*}
Thus, in both cases,
\begin{align*}
\|\mK_Wf\|_{\HB}^2
&=
|W|
\int_{-|W|R}^{|W|R}
Q\left(y\right)
\,\d y
\\
&\le
|W|
\int_{-R}^{R}
Q\left(y\right)
\,\d y
\\
&=
|W|
\|f\|_{\HB}^2,
\end{align*}
where the inequality uses $|W|\le1$ and $Q\ge0$. Taking square roots proves the operator-norm bound.
\end{proof}

\begin{lemma}[Anchored diffeomorphisms in the Brownian RKHS]\label{lem:brownian-activation-map}
Let $\sigma:\left[-R,R\right]\to\left[-R,R\right]$ be a $C^1$ diffeomorphism satisfying $\sigma\left(0\right)=0$. Then
\begin{align*}
\mK_\sigma:
\HB\left(\left[-R,R\right],\R^m\right)
\longrightarrow
\HB\left(\left[-R,R\right],\R^m\right)
\end{align*}
is bounded and
\begin{align*}
\|\mK_\sigma\|
\le
\|\sigma'\|_\infty^{1/2}.
\end{align*}
\end{lemma}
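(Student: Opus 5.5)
The plan mirrors the proof of \Cref{lem:brownian-linear-map}, replacing the linear substitution $y=Wx$ by the substitution $y=\sigma\left(x\right)$. By \Cref{lem:BrownianCameronMartin}, it suffices to show three things for every $f\in\HB\left(\left[-R,R\right],\R^m\right)$: $f\circ\sigma$ vanishes at zero, is absolutely continuous with square-integrable derivative, and satisfies
\begin{align*}
\int_{-R}^{R}
\left(f\circ\sigma\right)'\left(x\right)^{\top}\b M^{-1}\left(f\circ\sigma\right)'\left(x\right)\,\d x
\le
\|\sigma'\|_\infty\|f\|_{\HB}^2.
\end{align*}
Anchoring is immediate: $\left(f\circ\sigma\right)\left(0\right)=f\left(\sigma\left(0\right)\right)=f\left(0\right)=0$. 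Since $\sigma$ is a $C^1$ diffeomorphism of the compact interval, it is strictly monotone, Lipschitz with constant $\|\sigma'\|_\infty<\infty$, and maps $\left[-R,R\right]$ onto itself.

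The first technical step is regularity. Write $f=Jv$ with $v=f'$ as in the proof of \Cref{lem:BrownianCameronMartin}. For $x\in\left[-R,R\right]$, I would use the substitution formula for a $C^1$ monotone map and an integrable function:
\begin{align*}
f\left(\sigma\left(x\right)\right)
=
\int_{0}^{\sigma\left(x\right)}v\left(y\right)\,\d y
=
\int_{0}^{x}v\left(\sigma\left(t\right)\right)\sigma'\left(t\right)\,\d t .
\end{align*}
This formula holds for $v\in L^1$, and $v\in L^2$ on a bounded interval implies $v\in L^1$. It shows that $f\circ\sigma$ is absolutely continuous with derivative $\left(f'\circ\sigma\right)\sigma'$ almost everywhere. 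This avoids any pointwise chain rule for absolutely continuous functions.

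The second step is the norm computation. Set $Q\left(y\right)=f'\left(y\right)^{\top}\b M^{-1}f'\left(y\right)\ge0$. Then
\begin{align*}
\|\mK_\sigma f\|_{\HB}^2
=
\int_{-R}^{R}\sigma'\left(x\right)^2Q\left(\sigma\left(x\right)\right)\,\d x
\le
\|\sigma'\|_\infty\int_{-R}^{R}\left|\sigma'\left(x\right)\right|Q\left(\sigma\left(x\right)\right)\,\d x .
\end{align*}
Since $Q\ge0$ is integrable, the change of variables $y=\sigma\left(x\right)$ turns the last integral into $\int_{\sigma\left(\left[-R,R\right]\right)}Q\left(y\right)\,\d y=\int_{-R}^{R}Q\left(y\right)\,\d y=\|f\|_{\HB}^2$. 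In the decreasing case, the sign of $\sigma'$ cancels the reversed limits, exactly as in the $W<0$ case of \Cref{lem:brownian-linear-map}. Taking square roots gives the stated bound, and the same computation shows that $\mK_\sigma f\in\HB$.

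I expect the main obstacle to be justifying the change of variables and chain rule for merely $L^1$ or $L^2$ integrands rather than continuous ones. I would handle this with the standard result that, for absolutely continuous monotone $\sigma$ and integrable $v$, the map $\left(v\circ\sigma\right)\sigma'$ is integrable and the substitution identity holds. If needed, I would prove it by establishing the identity for indicator functions of intervals, extending it by monotone class arguments, and concluding by monotone convergence for the nonnegative $Q$. Note that the bound only uses $|\sigma'|\le\|\sigma'\|_\infty$; invertibility of $\sigma'$ is never required.
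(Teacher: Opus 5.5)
Your proposal is correct and follows essentially the same route as the paper: anchoring, the a.e.\ chain rule $(f\circ\sigma)'=(f'\circ\sigma)\sigma'$, and the substitution $y=\sigma(x)$ combined with $|\sigma'|\le\|\sigma'\|_\infty$. The differences are minor. The paper substitutes first, using $\sigma'\neq0$ and $\d x=|(\sigma^{-1})'(y)|\,\d y$, and only then bounds $|\sigma'(\sigma^{-1}(y))|$; you instead peel off one factor of $|\sigma'|$ before substituting, and your substitution-formula argument for the absolute continuity of $f\circ\sigma$ supplies the regularity step that the paper simply attributes to the chain rule.
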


\begin{proof}
Let $f\in\HB\left(\left[-R,R\right],\R^m\right)$. The domain and anchor conditions give
\begin{align*}
\left(f\circ\sigma\right)\left(0\right)
=
f\left(\sigma\left(0\right)\right)
=
f\left(0\right)
=0.
\end{align*}
The chain rule gives
\begin{align*}
\left(f\circ\sigma\right)'\left(x\right)
=
f'\left(\sigma\left(x\right)\right)
\sigma'\left(x\right)
\end{align*}
for almost every $x$. Therefore,
\begin{align*}
\|\mK_\sigma f\|_{\HB}^2
&=
\int_{-R}^{R}
|\sigma'\left(x\right)|^2
f'\left(\sigma\left(x\right)\right)^{\top}
\b M^{-1}
f'\left(\sigma\left(x\right)\right)
\,\d x.
\end{align*}
Because $\sigma$ and $\sigma^{-1}$ are both $C^1$, the chain rule applied to $\sigma^{-1}\circ\sigma=\operatorname{id}$ gives
\begin{align*}
\left(\sigma^{-1}\right)'\left(\sigma\left(x\right)\right)
\sigma'\left(x\right)
=1
\end{align*}
for every $x\in\left[-R,R\right]$. Hence $\sigma'\left(x\right)\ne0$ everywhere. Continuity of $\sigma'$ on the connected interval then implies that $\sigma'$ has a constant sign, so $\sigma$ is strictly monotone. Use the change of variables $y=\sigma\left(x\right)$. Since $\sigma$ maps the interval onto itself,
\begin{align*}
\d x
=
\left|
\left(\sigma^{-1}\right)'\left(y\right)
\right|
\d y
=
\frac{1}{
\left|
\sigma'\left(\sigma^{-1}\left(y\right)\right)
\right|
}
\d y.
\end{align*}
Substitution yields
\begin{align*}
\|\mK_\sigma f\|_{\HB}^2
&=
\int_{-R}^{R}
\left|
\sigma'\left(\sigma^{-1}\left(y\right)\right)
\right|
f'\left(y\right)^{\top}
\b M^{-1}
f'\left(y\right)
\,\d y
\\
&\le
\|\sigma'\|_\infty
\int_{-R}^{R}
f'\left(y\right)^{\top}
\b M^{-1}
f'\left(y\right)
\,\d y
\\
&=
\|\sigma'\|_\infty
\|f\|_{\HB}^2.
\end{align*}
Taking square roots proves the claim. Notice that the inverse derivative cancels exactly in the change of variables; it is needed only to justify the diffeomorphic change of variables, not as a multiplicative norm factor.
\end{proof}

\subsection{Proof of the Brownian theorem}

\begin{proof}[Proof of \Cref{thm:BrownianKoopman}]
For a fixed admissible Brownian network, define
\begin{align*}
T
\coloneqq
\mK_{W_1}
\mK_{\sigma_1}
\cdots
\mK_{\sigma_{L-1}}
\mK_{W_L}.
\end{align*}
Every function in $\F_{\mathrm{inv}}^{\scriptscriptstyle\left(\mathrm B\right)}$ has the form $Tg$ with $\|g\|_{\HB}\le B_g$. Submultiplicativity, \Cref{lem:brownian-linear-map}, and \Cref{lem:brownian-activation-map} give
\begin{align*}
\|T\|
&\le
\prod_{l=1}^{L}
\|\mK_{W_l}\|
\prod_{l=1}^{L-1}
\|\mK_{\sigma_l}\|
\\
&\le
\prod_{l=1}^{L}
|W_l|^{1/2}
\prod_{l=1}^{L-1}
\|\sigma_l'\|_\infty^{1/2}.
\end{align*}
Take the supremum over $\left(W_1,\ldots,W_L\right)\in\W_{\mathrm B}^L$.

The input kernel is $K\left(x,x'\right)=\kB\left(x,x'\right)\b M$. On the diagonal,
\begin{align*}
\kB\left(x,x\right)
&=
\frac{|x|+|x|-|x-x|}{2}
\\
&=
|x|
\\
&\le
R.
\end{align*}
Apply \Cref{lem:operator-image-rademacher} with $\kappa=R$, terminal radius $B_g$, and the family of complete Brownian Koopman chains. This gives
\begin{align*}
\eR\left(\F_{\mathrm{inv}}^{\scriptscriptstyle\left(\mathrm B\right)}\right)
&\le
B_g
\sqrt{
\frac{
R\Tr\left(\b M\right)
}{n}
}
\prod_{l=1}^{L-1}
\|\sigma_l'\|_\infty^{1/2}
\\
&\quad\cdot
\sup_{\left(W_1,\ldots,W_L\right)\in\W_{\mathrm B}^L}
\prod_{l=1}^{L}
|W_l|^{1/2},
\end{align*}
which is the theorem.
\end{proof}

\section{Proofs for Shared Operator Learning and Transfer}\label{app:shared-transfer-proofs}

\subsection{Hilbert--Schmidt rank-one identities}

\begin{lemma}[Rank-one Hilbert--Schmidt identities]\label{lem:rank-one-hs}
Let $\mH_1$ and $\mH_2$ be Hilbert spaces. For $u,u'\in\mH_2$ and $v,v'\in\mH_1$, define
\begin{align*}
\left(u\otimes v\right)h
=
\left\langle v,h\right\rangle_{\mH_1}u.
\end{align*}
Then $u\otimes v$ is Hilbert--Schmidt and
\begin{align*}
\|u\otimes v\|_{\mathrm{HS}}
&=
\|u\|_{\mH_2}
\|v\|_{\mH_1},
\\
\left\langle
u\otimes v,
u'\otimes v'
\right\rangle_{\mathrm{HS}}
&=
\left\langle u,u'\right\rangle_{\mH_2}
\left\langle v,v'\right\rangle_{\mH_1}.
\end{align*}
Moreover, for every Hilbert--Schmidt operator $T:\mH_1\to\mH_2$,
\begin{align*}
\left\langle
T,
u\otimes v
\right\rangle_{\mathrm{HS}}
=
\left\langle
Tv,
u
\right\rangle_{\mH_2}.
\end{align*}
\end{lemma}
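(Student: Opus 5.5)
The plan is to work directly from the definition of the Hilbert--Schmidt inner product through an orthonormal basis, choosing that basis adapted to $v$. The case $v=0$ is trivial: then $u\otimes v=0$ and every identity reads $0=0$. So assume $v\neq0$, and pick an orthonormal basis $(e_j)_{j\in J}$ of $\mH_1$ with $e_{j_0}=v/\|v\|_{\mH_1}$; this is legitimate because the main text states that the HS quantities do not depend on the basis. For every $j\ne j_0$ we have $(u\otimes v)e_j=\langle v,e_j\rangle u=0$, while $(u\otimes v)e_{j_0}=\|v\|u$. Hence
\begin{align*}
\|u\otimes v\|_{\mathrm{HS}}^2=\|v\|_{\mH_1}^2\|u\|_{\mH_2}^2,
\end{align*}
which gives the first identity and shows that $u\otimes v$ is Hilbert--Schmidt.

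For the third identity, use the same basis. Only the $j_0$ term survives:
\begin{align*}
\langle T,u\otimes v\rangle_{\mathrm{HS}}=\langle Te_{j_0},\|v\|u\rangle_{\mH_2}=\langle Tv,u\rangle_{\mH_2},
\end{align*}
where the last step uses linearity of $T$.

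The second identity then follows by applying the third with $T=u\otimes v$, in the order $\langle u\otimes v,u'\otimes v'\rangle_{\mathrm{HS}}=\langle (u\otimes v)v',u'\rangle_{\mH_2}$. Since $(u\otimes v)v'=\langle v,v'\rangle u$, this equals $\langle v,v'\rangle_{\mH_1}\langle u,u'\rangle_{\mH_2}$. Symmetry of the real inner product disposes of the ordering. If $v'=0$, both sides vanish directly.

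The only subtle point is the basis choice, which relies on basis independence and on completing $\{v/\|v\|\}$ to an orthonormal basis. The latter follows from Zorn's lemma, or from Gram--Schmidt in the separable case. Alternatively, one can avoid the special basis altogether. With an arbitrary basis, Parseval gives $\sum_j|\langle v,e_j\rangle|^2\|u\|^2=\|v\|^2\|u\|^2$. For the third identity, $\sum_j\langle Te_j,u\rangle\langle v,e_j\rangle=\langle T^*u,v\rangle$ by Parseval, and this equals $\langle u,Tv\rangle$. Summability holds because $(\langle v,e_j\rangle)\in\ell^2$ and $(\langle Te_j,u\rangle)\in\ell^2$.
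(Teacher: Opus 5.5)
Your proof is correct, but its main line differs from the paper's. The paper fixes an arbitrary orthonormal basis $(e_j)_{j\in J}$ of $\mathcal H_1$ and proves each identity separately:
\begin{itemize}
\item Parseval gives the norm formula.
\item The generalized Parseval identity $\sum_j\langle v,e_j\rangle\langle v',e_j\rangle=\langle v,v'\rangle$ gives the rank-one inner product.
\item For the third identity, it rewrites $\sum_j\langle v,e_j\rangle\langle Te_j,u\rangle$ as $\langle T(\sum_j\langle v,e_j\rangle e_j),u\rangle$. This step moves the series through $T$ and needs a convergence justification.
\end{itemize}

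You instead take a basis containing $v/\|v\|$, so every series collapses to a single term. You then get the second identity as the special case $T=u\otimes v$ of the third. This is legitimate because your first identity has already shown that $u\otimes v$ is Hilbert--Schmidt.

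What your route buys: it avoids all convergence and interchange issues, and it saves one computation. What it costs: it relies on basis independence of the Hilbert--Schmidt inner product (asserted in the main text, and following from that of the norm by polarization). It also needs an orthonormal basis extending a given unit vector (Zorn's lemma in the nonseparable case), and a separate $v=0$ case. The paper's route is basis-agnostic and case-free, but must justify pulling the sum inside $T$.

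Your closing fallback is essentially the paper's argument. The one difference is that you use the adjoint identity $\langle Te_j,u\rangle=\langle e_j,T^*u\rangle$ together with Parseval, where the paper uses continuity of $T$.
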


\begin{proof}
Let $\left(e_j\right)_{j\in J}$ be an orthonormal basis of $\mH_1$. By the definition of the Hilbert--Schmidt norm,
\begin{align*}
\|u\otimes v\|_{\mathrm{HS}}^2
&=
\sum_{j\in J}
\|\left(u\otimes v\right)e_j\|_{\mH_2}^2
\\
&=
\sum_{j\in J}
\left|
\left\langle v,e_j\right\rangle_{\mH_1}
\right|^2
\|u\|_{\mH_2}^2
\\
&=
\|v\|_{\mH_1}^2
\|u\|_{\mH_2}^2,
\end{align*}
where the last equality is Parseval's identity. This proves the first formula.

For the inner product of two rank-one operators,
\begin{align*}
\left\langle
u\otimes v,
u'\otimes v'
\right\rangle_{\mathrm{HS}}
&=
\sum_{j\in J}
\left\langle
\left(u\otimes v\right)e_j,
\left(u'\otimes v'\right)e_j
\right\rangle_{\mH_2}
\\
&=
\sum_{j\in J}
\left\langle v,e_j\right\rangle_{\mH_1}
\overline{
\left\langle v',e_j\right\rangle_{\mH_1}
}
\left\langle u,u'\right\rangle_{\mH_2}
\\
&=
\left\langle v,v'\right\rangle_{\mH_1}
\left\langle u,u'\right\rangle_{\mH_2}.
\end{align*}
All spaces in the paper are real, so the complex conjugation may be omitted; it is displayed only to emphasize the standard Hilbert-space identity.

Finally,
\begin{align*}
\left\langle
T,
u\otimes v
\right\rangle_{\mathrm{HS}}
&=
\sum_{j\in J}
\left\langle
Te_j,
\left(u\otimes v\right)e_j
\right\rangle_{\mH_2}
\\
&=
\sum_{j\in J}
\left\langle v,e_j\right\rangle_{\mH_1}
\left\langle Te_j,u\right\rangle_{\mH_2}
\\
&=
\left\langle
T\left(
\sum_{j\in J}
\left\langle v,e_j\right\rangle_{\mH_1}e_j
\right),
u
\right\rangle_{\mH_2}
\\
&=
\left\langle Tv,u\right\rangle_{\mH_2},
\end{align*}
where convergence follows from the Hilbert--Schmidt property and Parseval's identity. This proves all three formulas.
\end{proof}

\subsection{Well-posedness of the shared operator problem}

\begin{proof}[Proof of \Cref{prop:shared-transfer-well-posed}]
Set
\begin{align*}
\mE
\coloneqq
\m L_2\left(\mH_{\mathrm{out}},\mH_K\right).
\end{align*}
This is a Hilbert space under the Hilbert--Schmidt inner product. Define the finite-dimensional prediction space
\begin{align*}
\mY_{\mathrm{tr}}
\coloneqq
\prod_{t=1}^{N_{\mathrm s}}
\prod_{i=1}^{n_t}
\R^m
\end{align*}
with norm
\begin{align*}
\|z\|_{\mY_{\mathrm{tr}}}^2
\coloneqq
\sum_{t=1}^{N_{\mathrm s}}
\sum_{i=1}^{n_t}
\|z_{ti}\|_2^2.
\end{align*}
Define the linear prediction map
\begin{align*}
\Phi:\mE
&\longrightarrow
\mY_{\mathrm{tr}},
\\
\left(\Phi T\right)_{ti}
&\coloneqq
\left(Tg_t\right)\left(\b x_{ti}\right).
\end{align*}
We first prove that $\Phi$ is bounded. Fix $T\in\mE$, $t\in\left[N_{\mathrm s}\right]$, and $i\in\left[n_t\right]$. By Euclidean duality and the reproducing property,
\begin{align*}
\left\|\left(Tg_t\right)\left(\b x_{ti}\right)\right\|_2
&=
\sup_{\|\b y\|_2\le1}
\left|
\b y^{\top}
\left(Tg_t\right)\left(\b x_{ti}\right)
\right|
\\
&=
\sup_{\|\b y\|_2\le1}
\left|
\left\langle
Tg_t,
K\left(\cdot,\b x_{ti}\right)\b y
\right\rangle_{\mH_K}
\right|
\\
&\le
\|Tg_t\|_{\mH_K}
\sup_{\|\b y\|_2\le1}
\left\|K\left(\cdot,\b x_{ti}\right)\b y\right\|_{\mH_K}
\\
&\le
\|T\|
\|g_t\|_{\mH_{\mathrm{out}}}
\sup_{\|\b y\|_2\le1}
\left\|K\left(\cdot,\b x_{ti}\right)\b y\right\|_{\mH_K}.
\end{align*}
For every $\b y\in\R^m$, a second use of the reproducing property gives
\begin{align*}
\left\|K\left(\cdot,\b x_{ti}\right)\b y\right\|_{\mH_K}^2
&=
\b y^{\top}
K\left(\b x_{ti},\b x_{ti}\right)
\b y
\\
&\le
\left\|K\left(\b x_{ti},\b x_{ti}\right)\right\|_{\mathrm{op}}
\|\b y\|_2^2.
\end{align*}
Since $\|T\|\le\|T\|_{\mathrm{HS}}$, it follows that
\begin{align*}
\left\|\left(Tg_t\right)\left(\b x_{ti}\right)\right\|_2^2
\le
\left\|K\left(\b x_{ti},\b x_{ti}\right)\right\|_{\mathrm{op}}
\|g_t\|_{\mH_{\mathrm{out}}}^2
\|T\|_{\mathrm{HS}}^2.
\end{align*}
Define the finite constant
\begin{align*}
C_{\Phi}^2
\coloneqq
\sum_{t=1}^{N_{\mathrm s}}
\sum_{i=1}^{n_t}
\left\|K\left(\b x_{ti},\b x_{ti}\right)\right\|_{\mathrm{op}}
\|g_t\|_{\mH_{\mathrm{out}}}^2.
\end{align*}
Summing the preceding pointwise estimate yields
\begin{align*}
\|\Phi T\|_{\mY_{\mathrm{tr}}}
\le
C_{\Phi}
\|T\|_{\mathrm{HS}}.
\end{align*}
Thus $\Phi$ is bounded.

Define the finite-dimensional loss functional
\begin{align*}
\mL\left(z\right)
\coloneqq
\sum_{t=1}^{N_{\mathrm s}}
\sum_{i=1}^{n_t}
\ell\left(\b y_{ti},z_{ti}\right),
\qquad
z\in\mY_{\mathrm{tr}},
\end{align*}
and define the objective
\begin{align*}
J\left(T\right)
\coloneqq
\mL\left(\Phi T\right)
+
\lambda\|T\|_{\mathrm{HS}}^2.
\end{align*}
Because $\ell$ is finite-valued, continuous, convex in its second argument, and nonnegative, $\mL$ is finite-valued, continuous, convex, and nonnegative on the finite-dimensional space $\mY_{\mathrm{tr}}$. Consequently,
\begin{align*}
J\left(T\right)
\ge
\lambda\|T\|_{\mathrm{HS}}^2,
\qquad
T\in\mE.
\end{align*}
Let
\begin{align*}
j_\star
\coloneqq
\inf_{T\in\mE}J\left(T\right).
\end{align*}
The value $j_\star$ is finite because $0\le j_\star\le J\left(0\right)<\infty$. Choose a minimizing sequence $\left(T_r\right)_{r\in\Zp}$ satisfying
\begin{align*}
J\left(T_r\right)
\le
j_\star+
\frac{1}{r}.
\end{align*}
The coercive lower bound gives
\begin{align*}
\lambda\|T_r\|_{\mathrm{HS}}^2
\le
J\left(T_r\right)
\le
j_\star+1,
\end{align*}
so the sequence is bounded in the Hilbert space $\mE$. Hilbert spaces are reflexive; therefore, after passing to a subsequence without changing notation, there is $T_\star\in\mE$ such that
\begin{align*}
T_r
\rightharpoonup
T_\star
\end{align*}
weakly in $\mE$. Bounded linear maps preserve weak convergence, so
\begin{align*}
\Phi T_r
\rightharpoonup
\Phi T_\star
\end{align*}
weakly in $\mY_{\mathrm{tr}}$. Since $\mY_{\mathrm{tr}}$ is finite-dimensional, weak convergence and norm convergence coincide. Hence
\begin{align*}
\Phi T_r
\longrightarrow
\Phi T_\star
\end{align*}
in $\mY_{\mathrm{tr}}$. Continuity of $\mL$ gives
\begin{align*}
\mL\left(\Phi T_r\right)
\longrightarrow
\mL\left(\Phi T_\star\right).
\end{align*}
The Hilbert--Schmidt norm is weakly lower semicontinuous, so
\begin{align*}
\|T_\star\|_{\mathrm{HS}}^2
\le
\liminf_{r\to\infty}
\|T_r\|_{\mathrm{HS}}^2.
\end{align*}
Combining the last two relations gives
\begin{align*}
J\left(T_\star\right)
&=
\mL\left(\Phi T_\star\right)
+
\lambda\|T_\star\|_{\mathrm{HS}}^2
\\
&\le
\liminf_{r\to\infty}
\left[
\mL\left(\Phi T_r\right)
+
\lambda\|T_r\|_{\mathrm{HS}}^2
\right]
\\
&=
j_\star.
\end{align*}
By the definition of $j_\star$, the reverse inequality $j_\star\le J\left(T_\star\right)$ holds. Thus $J\left(T_\star\right)=j_\star$, and a minimizer exists.

It remains to prove uniqueness. Let $S,T\in\mE$ with $S\ne T$, and let $\theta\in\left(0,1\right)$. Convexity of $\mL$ and linearity of $\Phi$ give
\begin{align*}
\mL\left(\Phi\left(\theta S+\left(1-\theta\right)T\right)\right)
\le
\theta\mL\left(\Phi S\right)
+
\left(1-\theta\right)\mL\left(\Phi T\right).
\end{align*}
The Hilbert-space norm identity gives
\begin{align*}
\left\|\theta S+\left(1-\theta\right)T\right\|_{\mathrm{HS}}^2
&=
\theta\|S\|_{\mathrm{HS}}^2
+
\left(1-\theta\right)\|T\|_{\mathrm{HS}}^2
\\
&\quad-
\theta\left(1-\theta\right)
\|S-T\|_{\mathrm{HS}}^2.
\end{align*}
Because $\lambda>0$, $\theta\left(1-\theta\right)>0$, and $S\ne T$, the final term is strictly negative. Therefore,
\begin{align*}
J\left(\theta S+\left(1-\theta\right)T\right)
<
\theta J\left(S\right)
+
\left(1-\theta\right)J\left(T\right).
\end{align*}
Thus $J$ is strictly convex and can have at most one minimizer. Existence and uniqueness together prove the proposition.
\end{proof}

\subsection{Proof of the shared-operator representer theorem}

\begin{proof}[Proof of \Cref{thm:shared-transfer-representer}]
For $t\in\left[N_{\mathrm s}\right]$, $i\in\left[n_t\right]$, and $a\in\left[m\right]$, define the Hilbert--Schmidt operator
\begin{align*}
\Xi_{tia}
\coloneqq
\left(
K\left(\cdot,\b x_{ti}\right)\bea
\right)
\otimes g_t.
\end{align*}
Let
\begin{align*}
\mV
\coloneqq
\Span\left\{
\Xi_{tia}:
1\le t\le N_{\mathrm s},
\ 1\le i\le n_t,
\ 1\le a\le m
\right\}
\end{align*}
inside the Hilbert space $\m L_2\left(\mH_{\mathrm{out}},\mH_K\right)$. The span is finite-dimensional and therefore closed.

Fix an arbitrary Hilbert--Schmidt operator $T$. Decompose it orthogonally as
\begin{align*}
T
=
T_{\mV}+T_{\mV^\perp},
\qquad
T_{\mV}\in\mV,
\qquad
T_{\mV^\perp}\in\mV^\perp.
\end{align*}
We first show that every training prediction depends only on $T_{\mV}$. For a fixed triple $\left(t,i,a\right)$, the reproducing property gives
\begin{align*}
\bea^{\top}
\left(Tg_t\right)\left(\b x_{ti}\right)
&=
\left\langle
Tg_t,
K\left(\cdot,\b x_{ti}\right)\bea
\right\rangle_{\mH_K}.
\end{align*}
Apply the final identity of \Cref{lem:rank-one-hs} with
\begin{align*}
u
=
K\left(\cdot,\b x_{ti}\right)\bea,
\qquad
v
=
g_t.
\end{align*}
It gives
\begin{align*}
\bea^{\top}
\left(Tg_t\right)\left(\b x_{ti}\right)
&=
\left\langle
T,
\Xi_{tia}
\right\rangle_{\mathrm{HS}}
\\
&=
\left\langle
T_{\mV},
\Xi_{tia}
\right\rangle_{\mathrm{HS}}
+
\left\langle
T_{\mV^\perp},
\Xi_{tia}
\right\rangle_{\mathrm{HS}}.
\end{align*}
Since $\Xi_{tia}\in\mV$ and $T_{\mV^\perp}\perp\mV$, the last inner product is zero. Hence
\begin{align*}
\bea^{\top}
\left(Tg_t\right)\left(\b x_{ti}\right)
=
\bea^{\top}
\left(T_{\mV}g_t\right)\left(\b x_{ti}\right)
\end{align*}
for every coordinate $a$. Therefore,
\begin{align*}
\left(Tg_t\right)\left(\b x_{ti}\right)
=
\left(T_{\mV}g_t\right)\left(\b x_{ti}\right)
\end{align*}
for every training pair.

The loss part of the objective in \Cref{eq:shared-transfer-opt} is consequently unchanged when $T$ is replaced by $T_{\mV}$. Orthogonality gives the Pythagorean identity
\begin{align*}
\|T\|_{\mathrm{HS}}^2
=
\|T_{\mV}\|_{\mathrm{HS}}^2
+
\|T_{\mV^\perp}\|_{\mathrm{HS}}^2.
\end{align*}
Because $\lambda>0$, the regularization term is strictly smaller after deleting a nonzero orthogonal component. Hence every minimizer must satisfy
\begin{align*}
T_{\mV^\perp}
=0.
\end{align*}
Thus every minimizer belongs to $\mV$. By the definition of $\mV$, there are coefficients $c_{tia}\in\R$ such that
\begin{align*}
\widehat T
=
\sum_{t=1}^{N_{\mathrm s}}
\sum_{i=1}^{n_t}
\sum_{a=1}^{m}
c_{tia}
\left(
K\left(\cdot,\b x_{ti}\right)\bea
\right)
\otimes g_t.
\end{align*}
The coefficients need not be unique if the spanning operators are linearly dependent, but the represented operator is the minimizer.

To obtain the prediction formula, let $g\in\mH_{\mathrm{out}}$. By the definition of a rank-one operator,
\begin{align*}
\widehat T g
&=
\sum_{t=1}^{N_{\mathrm s}}
\sum_{i=1}^{n_t}
\sum_{a=1}^{m}
c_{tia}
\left\langle g_t,g\right\rangle_{\mH_{\mathrm{out}}}
K\left(\cdot,\b x_{ti}\right)\bea.
\end{align*}
Evaluating at $\b x\in\mX$ yields
\begin{align*}
\left(\widehat T g\right)\left(\b x\right)
&=
\sum_{t=1}^{N_{\mathrm s}}
\sum_{i=1}^{n_t}
\sum_{a=1}^{m}
c_{tia}
\left\langle g_t,g\right\rangle_{\mH_{\mathrm{out}}}
K\left(\b x,\b x_{ti}\right)\bea,
\end{align*}
which proves the theorem.
\end{proof}

\subsection{Proof of the finite-dimensional reduction}

\begin{proof}[Proof of \Cref{thm:shared-transfer-finite}]
Let a finite-rank operator be represented as
\begin{align*}
T_c
=
\sum_{t'=1}^{N_{\mathrm s}}
\sum_{j=1}^{n_{t'}}
\sum_{a=1}^{m}
c_{t'ja}
\left(
K\left(\cdot,\b x_{t'j}\right)\bea
\right)
\otimes g_{t'}.
\end{align*}
For source task $t$ and sample point $\b x_{ti}$, apply the definition of the rank-one operators:
\begin{align*}
T_cg_t
&=
\sum_{t'=1}^{N_{\mathrm s}}
\sum_{j=1}^{n_{t'}}
\sum_{a=1}^{m}
c_{t'ja}
\left\langle g_{t'},g_t\right\rangle_{\mH_{\mathrm{out}}}
K\left(\cdot,\b x_{t'j}\right)\bea.
\end{align*}
Evaluation at $\b x_{ti}$ gives
\begin{align*}
\left(T_cg_t\right)\left(\b x_{ti}\right)
&=
\sum_{t'=1}^{N_{\mathrm s}}
\sum_{j=1}^{n_{t'}}
\sum_{a=1}^{m}
c_{t'ja}
\left\langle g_{t'},g_t\right\rangle_{\mH_{\mathrm{out}}}
K\left(\b x_{ti},\b x_{t'j}\right)\bea.
\end{align*}
Under squared loss, substituting this expression gives the first term in the coefficient objective stated in the theorem.

It remains to calculate the Hilbert--Schmidt norm. Expand the squared norm using bilinearity:
\begin{align*}
\|T_c\|_{\mathrm{HS}}^2
&=
\sum_{t=1}^{N_{\mathrm s}}
\sum_{i=1}^{n_t}
\sum_{a=1}^{m}
\sum_{t'=1}^{N_{\mathrm s}}
\sum_{j=1}^{n_{t'}}
\sum_{b=1}^{m}
c_{tia}c_{t'jb}
\\
&\quad\cdot
\left\langle
\left(
K\left(\cdot,\b x_{ti}\right)\bea
\right)\otimes g_t,
\left(
K\left(\cdot,\b x_{t'j}\right)\beb
\right)\otimes g_{t'}
\right\rangle_{\mathrm{HS}}.
\end{align*}
Apply the rank-one inner-product identity from \Cref{lem:rank-one-hs}:
\begin{align*}
\|T_c\|_{\mathrm{HS}}^2
&=
\sum_{t=1}^{N_{\mathrm s}}
\sum_{i=1}^{n_t}
\sum_{a=1}^{m}
\sum_{t'=1}^{N_{\mathrm s}}
\sum_{j=1}^{n_{t'}}
\sum_{b=1}^{m}
c_{tia}c_{t'jb}
\left\langle g_t,g_{t'}\right\rangle_{\mH_{\mathrm{out}}}
\\
&\quad\cdot
\left\langle
K\left(\cdot,\b x_{ti}\right)\bea,
K\left(\cdot,\b x_{t'j}\right)\beb
\right\rangle_{\mH_K}.
\end{align*}
The vvRKHS reproducing property gives
\begin{align*}
\left\langle
K\left(\cdot,\b x_{ti}\right)\bea,
K\left(\cdot,\b x_{t'j}\right)\beb
\right\rangle_{\mH_K}
=
\bea^{\top}
K\left(\b x_{ti},\b x_{t'j}\right)
\beb.
\end{align*}
Therefore,
\begin{align*}
\|T_c\|_{\mathrm{HS}}^2
&=
\sum_{t=1}^{N_{\mathrm s}}
\sum_{i=1}^{n_t}
\sum_{a=1}^{m}
\sum_{t'=1}^{N_{\mathrm s}}
\sum_{j=1}^{n_{t'}}
\sum_{b=1}^{m}
c_{tia}c_{t'jb}
\left\langle g_t,g_{t'}\right\rangle_{\mH_{\mathrm{out}}}
\bea^{\top}
K\left(\b x_{ti},\b x_{t'j}\right)
\beb.
\end{align*}
Multiplication by $\lambda$ gives the regularization term in the theorem.

By \Cref{thm:shared-transfer-representer}, every minimizer of the infinite-dimensional problem has at least one coefficient representation of the displayed form. Conversely, every coefficient tensor defines a Hilbert--Schmidt finite-rank operator $T_c$. The prediction and norm calculations above show that the value of the infinite-dimensional objective at $T_c$ is exactly the displayed coefficient objective. Hence the two optimization problems have the same feasible objective values on the representer subspace, and the representer theorem shows that this subspace contains every minimizer. Their optimal values are therefore equal, and the coefficients of a minimizer solve the stated finite-dimensional problem.
\end{proof}

\subsection{Rademacher complexity of a fixed operator image}

\begin{lemma}[Conditional target-class complexity]\label{lem:conditional-target-complexity}
Let $K\left(\b x,\b x'\right)=k\left(\b x,\b x'\right)\b M$ with $\b M\succeq0$ and $k\left(\b x,\b x\right)\le\kappa$. Fix a bounded operator $T:\mH_{\mathrm{out}}\to\mH_K$ and define
\begin{align*}
\F_T\left(B\right)
=
\left\{
Tg:
\|g\|_{\mH_{\mathrm{out}}}\le B
\right\}.
\end{align*}
For every sample of size $n_0$,
\begin{align*}
\widehat{\mf R}_{n_0}^m\left(\F_T\left(B\right)\right)
\le
B\|T\|
\sqrt{
\frac{
\kappa\Tr\left(\b M\right)
}{n_0}
}.
\end{align*}
The same bound holds with $\|T\|_{\mathrm{HS}}$ in place of $\|T\|$.
\end{lemma}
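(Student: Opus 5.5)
This is a direct specialization of \Cref{lem:operator-image-rademacher}. I take the operator family to be the singleton $\mT\coloneqq\left\{T\right\}$ with $\mH_G\coloneqq\mH_{\mathrm{out}}$ and $A\coloneqq\|T\|$, which is finite because $T$ is assumed bounded. Then $\mF_{\mT}\left(B\right)=\F_T\left(B\right)$ exactly. The kernel has the separable form $K=k\b M$ with $\b M\succeq0$, and the diagonal bound $k\left(\b x,\b x\right)\le\kappa$ holds at every point, in particular at each sample point $\b x_i$, $i\in[n_0]$. The second assertion of \Cref{lem:operator-image-rademacher}, applied with $n=n_0$, therefore gives
\begin{align*}
\widehat{\mf R}_{n_0}^m\left(\F_T\left(B\right)\right)
\le
B\|T\|
\sqrt{
\frac{
\kappa\Tr\left(\b M\right)
}{n_0}
}.
\end{align*}

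Before invoking the lemma, I will check that its proof really permits the semidefinite case $\b M\succeq0$. That proof uses only three ingredients:
\begin{itemize}[leftmargin=2em]
\item the matrix-valued reproducing identity,
\item the expansion $\|V_{\bsigma}\|_{\mH_K}^2=\sum_{i,j}\bsigma_i^{\top}K\left(\b x_i,\b x_j\right)\bsigma_j$,
\item the diagonal identity $\Tr\left(K\left(\b x_i,\b x_i\right)\right)=k\left(\b x_i,\b x_i\right)\Tr\left(\b M\right)$.
\end{itemize}
None of these needs $\b M$ to be invertible. For a general positive semidefinite matrix-valued kernel, the vvRKHS is defined abstractly through $K$, so no $\b M^{-1}$-weighted norm formula is required. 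This is consistent with the remark in \Cref{sec:notation} that the transfer results rely only on the reproducing property and the trace bound.

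For the Hilbert--Schmidt variant, I first note that $\|T\|=\sup_{\|h\|\le1}\|Th\|\le\|T\|_{\mathrm{HS}}$. To see this, take any unit vector $h$ and extend it to an orthonormal basis $\left(e_j\right)$ with $e_{j_0}=h$. Then
\begin{align*}
\|Th\|_{\mH_K}^2
\le
\sum_{j}\|Te_j\|_{\mH_K}^2
=
\|T\|_{\mathrm{HS}}^2,
\end{align*}
where the equality uses the basis independence of the Hilbert--Schmidt norm recorded in \Cref{sec:shared-transfer}. If $T$ is not Hilbert--Schmidt, the right-hand side is $+\infty$ and the bound holds trivially. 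Otherwise, replacing $\|T\|$ by the larger quantity $\|T\|_{\mathrm{HS}}$ in the displayed bound preserves the inequality.

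I do not expect a genuine obstacle. The only delicate point is the semidefinite case of $\b M$ just discussed, and the only other thing to confirm is that $T$ is fixed, so the class contains no supremum over operators. The measurability of the supremum is then automatic: by the Cauchy--Schwarz step in the proof of \Cref{lem:operator-image-rademacher}, the supremum inside the Rademacher average equals $B\|T^*V_{\bsigma}\|_{\mH_{\mathrm{out}}}/n_0$, which is a finite-valued function of the finitely many signs $\bsigma$.
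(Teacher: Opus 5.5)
Your proposal is correct and matches the paper's proof: specialize \Cref{lem:operator-image-rademacher} to the singleton family $\{T\}$ with $A=\|T\|$, then use $\|T\|\le\|T\|_{\mathrm{HS}}$ for the Hilbert--Schmidt variant. Your extra checks are sound and only make explicit what the paper leaves implicit; they cover the semidefinite case $\b M\succeq0$, the basis-extension proof of $\|T\|\le\|T\|_{\mathrm{HS}}$, and measurability of the supremum.
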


\begin{proof}
Apply \Cref{lem:operator-image-rademacher} to the singleton operator family $\mT=\left\{T\right\}$. This gives the estimate with $\|T\|$. Since every Hilbert--Schmidt operator satisfies
\begin{align*}
\|T\|
\le
\|T\|_{\mathrm{HS}},
\end{align*}
the second estimate follows immediately.
\end{proof}

\subsection{A two-sided expected-Rademacher deviation lemma}

\begin{lemma}[Two-sided expected-Rademacher deviation]\label{lem:two-sided-expected-rademacher}
Let $Z_1,\ldots,Z_n$ be independent and identically distributed random variables with values in a measurable space $\mZ$, let $S=\left(Z_1,\ldots,Z_n\right)$, and let $\mG$ be a class of measurable functions $g:\mZ\to\left[0,C\right]$. Assume that the suprema below are measurable. Define
\begin{align*}
Pg
&\coloneqq
\mathbb E\left[g\left(Z_1\right)\right],
\\
P_Sg
&\coloneqq
\frac{1}{n}
\sum_{i=1}^{n}
g\left(Z_i\right),
\\
\widehat{\mf R}_S\left(\mG\right)
&\coloneqq
\mathbb E_{\varepsilon}
\left[
\sup_{g\in\mG}
\frac{1}{n}
\sum_{i=1}^{n}
\varepsilon_i g\left(Z_i\right)
\right],
\\
\mf R_n\left(\mG\right)
&\coloneqq
\mathbb E_S
\left[
\widehat{\mf R}_S\left(\mG\right)
\right],
\end{align*}
where $\varepsilon_1,\ldots,\varepsilon_n$ are independent Rademacher variables independent of $S$. Then, with probability at least $1-\delta$, the following inequalities hold simultaneously for every $g\in\mG$:
\begin{align*}
Pg
&\le
P_Sg
+
2\mf R_n\left(\mG\right)
+
C
\sqrt{
\frac{
\log\left(2/\delta\right)
}{2n}
},
\\
P_Sg
&\le
Pg
+
2\mf R_n\left(\mG\right)
+
C
\sqrt{
\frac{
\log\left(2/\delta\right)
}{2n}
}.
\end{align*}
\end{lemma}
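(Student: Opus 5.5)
The plan is the standard symmetrization argument combined with McDiarmid's bounded-differences inequality, applied once to each one-sided uniform deviation and followed by a union bound that splits the confidence level $\delta$ into two halves. Define
\begin{align*}
\Phi_+\left(S\right)
\coloneqq
\sup_{g\in\mG}\left(Pg-P_Sg\right),
\qquad
\Phi_-\left(S\right)
\coloneqq
\sup_{g\in\mG}\left(P_Sg-Pg\right).
\end{align*}
Both are measurable by assumption. Each inequality in the lemma follows from a high-probability bound of the form $\Phi_\pm\left(S\right)\le 2\mf R_n\left(\mG\right)+C\sqrt{\log\left(2/\delta\right)/\left(2n\right)}$, holding with probability at least $1-\delta/2$.

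\textbf{Step 1: concentration.} Replacing one coordinate $Z_i$ by $Z_i'$ changes each $P_Sg$ by at most $C/n$, because $g$ takes values in $\left[0,C\right]$. A supremum of functions with this property has the same bounded difference, so $\Phi_\pm$ satisfy McDiarmid's condition with constants $c_i=C/n$. McDiarmid's inequality then gives
\begin{align*}
\P\left(\Phi_\pm\left(S\right)-\mathbb E\Phi_\pm\left(S\right)\ge \varepsilon\right)
\le
\exp\left(-2n\varepsilon^2/C^2\right).
\end{align*}
Setting the right-hand side equal to $\delta/2$ yields $\varepsilon=C\sqrt{\log\left(2/\delta\right)/\left(2n\right)}$.

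\textbf{Step 2: symmetrization.} Introduce an independent ghost sample $S'$. Writing $Pg=\mathbb E_{S'}P_{S'}g$ and using Jensen's inequality to pull the supremum inside the expectation gives
\begin{align*}
\mathbb E\Phi_+
\le
\mathbb E_{S,S'}\sup_g\frac1n\sum_i\left(g\left(Z_i'\right)-g\left(Z_i\right)\right).
\end{align*}
Since $Z_i$ and $Z_i'$ are exchangeable, inserting Rademacher signs $\varepsilon_i$ leaves the distribution unchanged. Splitting the supremum of the sum into two suprema then bounds the right-hand side by $\mf R_n\left(\mG\right)+\mathbb E\sup_g\frac1n\sum_i\left(-\varepsilon_i\right)g\left(Z_i\right)$. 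The symmetry $-\varepsilon\overset{d}{=}\varepsilon$ shows the second term equals $\mf R_n\left(\mG\right)$, so $\mathbb E\Phi_+\le 2\mf R_n\left(\mG\right)$. The same computation with the roles of $S$ and $S'$ exchanged gives $\mathbb E\Phi_-\le2\mf R_n\left(\mG\right)$.

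\textbf{Step 3: combining.} On the intersection of the two good events, which has probability at least $1-\delta$ by the union bound, both inequalities hold simultaneously for all $g\in\mG$.

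The main point needing care is the measurability and integrability used in symmetrization, in particular the exchange of the supremum and the expectation over $S'$. The lemma's measurability hypothesis is exactly what covers this, and boundedness by $C$ guarantees integrability. Note also that the paper's $\widehat{\mf R}_S$ has no absolute value, so the sign-flip step must use the distributional symmetry of $\varepsilon$ rather than an absolute-value bound. I would state this symmetry explicitly instead of bounding by the absolute-value version.
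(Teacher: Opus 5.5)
Your proposal is correct and follows the paper's proof essentially step for step. Both use ghost-sample symmetrization with Jensen, insert signs by exchangeability, and split the supremum using $-\varepsilon\overset{d}{=}\varepsilon$ to get $\mathbb E\Phi_\pm\le 2\mf R_n\left(\mG\right)$. Both then apply McDiarmid with bounded differences $C/n$ at level $\delta/2$ for each side and finish with a union bound.
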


\begin{proof}
Define
\begin{align*}
\Phi_+\left(S\right)
&\coloneqq
\sup_{g\in\mG}
\left(
Pg-P_Sg
\right),
\\
\Phi_-\left(S\right)
&\coloneqq
\sup_{g\in\mG}
\left(
P_Sg-Pg
\right).
\end{align*}
We first bound the expectations of these two random variables.

Let $S'=\left(Z_1',\ldots,Z_n'\right)$ be an independent copy of $S$. Since $Pg=\mathbb E_{S'}\left[P_{S'}g\right]$, Jensen's inequality for the supremum gives
\begin{align*}
\mathbb E_S\left[\Phi_+\left(S\right)\right]
&=
\mathbb E_S
\left[
\sup_{g\in\mG}
\left(
\mathbb E_{S'}\left[P_{S'}g\right]-P_Sg
\right)
\right]
\\
&\le
\mathbb E_{S,S'}
\left[
\sup_{g\in\mG}
\left(
P_{S'}g-P_Sg
\right)
\right]
\\
&=
\mathbb E_{S,S'}
\left[
\sup_{g\in\mG}
\frac{1}{n}
\sum_{i=1}^{n}
\left(
 g\left(Z_i'\right)-g\left(Z_i\right)
\right)
\right].
\end{align*}
For every sign vector $\varepsilon=\left(\varepsilon_1,\ldots,\varepsilon_n\right)$, swap $Z_i$ and $Z_i'$ whenever $\varepsilon_i=-1$. Because each pair $\left(Z_i,Z_i'\right)$ is exchangeable and the pairs are independent, the joint distribution is unchanged. Therefore,
\begin{align*}
&\mathbb E_{S,S'}
\left[
\sup_{g\in\mG}
\frac{1}{n}
\sum_{i=1}^{n}
\left(
 g\left(Z_i'\right)-g\left(Z_i\right)
\right)
\right]
\\
&\qquad=
\mathbb E_{S,S',\varepsilon}
\left[
\sup_{g\in\mG}
\frac{1}{n}
\sum_{i=1}^{n}
\varepsilon_i
\left(
 g\left(Z_i'\right)-g\left(Z_i\right)
\right)
\right].
\end{align*}
For every fixed realization of $S$, $S'$, and $\varepsilon$,
\begin{align*}
&\sup_{g\in\mG}
\frac{1}{n}
\sum_{i=1}^{n}
\varepsilon_i
\left(
 g\left(Z_i'\right)-g\left(Z_i\right)
\right)
\\
&\qquad\le
\sup_{g\in\mG}
\frac{1}{n}
\sum_{i=1}^{n}
\varepsilon_i g\left(Z_i'\right)
+
\sup_{g\in\mG}
\frac{1}{n}
\sum_{i=1}^{n}
\left(-\varepsilon_i\right)g\left(Z_i\right).
\end{align*}
Taking expectations, using that $S'$ has the same distribution as $S$, and using that $-\varepsilon$ has the same distribution as $\varepsilon$, we obtain
\begin{align*}
\mathbb E_S\left[\Phi_+\left(S\right)\right]
\le
2\mf R_n\left(\mG\right).
\end{align*}

The reverse deviation is handled explicitly in the same way. Since $Pg=\mathbb E_{S'}\left[P_{S'}g\right]$,
\begin{align*}
\mathbb E_S\left[\Phi_-\left(S\right)\right]
&=
\mathbb E_S
\left[
\sup_{g\in\mG}
\left(
P_Sg-\mathbb E_{S'}\left[P_{S'}g\right]
\right)
\right]
\\
&\le
\mathbb E_{S,S'}
\left[
\sup_{g\in\mG}
\left(
P_Sg-P_{S'}g
\right)
\right].
\end{align*}
Interchanging the names of $S$ and $S'$ shows that the final expectation is equal to the ghost-sample expectation already bounded above. Hence
\begin{align*}
\mathbb E_S\left[\Phi_-\left(S\right)\right]
\le
2\mf R_n\left(\mG\right).
\end{align*}

We next apply bounded differences. If one coordinate $Z_i$ is replaced by an arbitrary value $\widetilde Z_i$, then, for every $g\in\mG$,
\begin{align*}
\left|
P_Sg-P_{\widetilde S}g
\right|
&=
\frac{1}{n}
\left|
 g\left(Z_i\right)-g\left(\widetilde Z_i\right)
\right|
\\
&\le
\frac{C}{n}.
\end{align*}
Taking suprema shows that changing one sample coordinate changes each of $\Phi_+\left(S\right)$ and $\Phi_-\left(S\right)$ by at most $C/n$. McDiarmid's inequality therefore gives
\begin{align*}
\P\left(
\Phi_+\left(S\right)
>
\mathbb E\left[\Phi_+\left(S\right)\right]
+
C
\sqrt{
\frac{
\log\left(2/\delta\right)
}{2n}
}
\right)
&\le
\frac{\delta}{2},
\\
\P\left(
\Phi_-\left(S\right)
>
\mathbb E\left[\Phi_-\left(S\right)\right]
+
C
\sqrt{
\frac{
\log\left(2/\delta\right)
}{2n}
}
\right)
&\le
\frac{\delta}{2}.
\end{align*}
A union bound, together with the two expectation estimates, implies that with probability at least $1-\delta$,
\begin{align*}
\Phi_+\left(S\right)
&\le
2\mf R_n\left(\mG\right)
+
C
\sqrt{
\frac{
\log\left(2/\delta\right)
}{2n}
},
\\
\Phi_-\left(S\right)
&\le
2\mf R_n\left(\mG\right)
+
C
\sqrt{
\frac{
\log\left(2/\delta\right)
}{2n}
}.
\end{align*}
Unpacking the definitions of $\Phi_+$ and $\Phi_-$ gives the two simultaneous inequalities in the lemma.
\end{proof}

\subsection{Proof of the conditional target-transfer bound}

\begin{proof}[Proof of \Cref{prop:shared-transfer-bound}]
Condition on the source data and hence on $\widehat T$. Under the independence assumption, the target observations remain independent and identically distributed after conditioning, while $\widehat T$ is fixed. We may therefore carry out the complete argument conditionally for the fixed class
\begin{align*}
\F_{\widehat T}\left(B\right)
=
\left\{
\widehat T g:
\|g\|_{\mH_{\mathrm{out}}}\le B
\right\}.
\end{align*}

Let
\begin{align*}
S_0
\coloneqq
\left(
\left(X_1,Y_1\right),
\ldots,
\left(X_{n_0},Y_{n_0}\right)
\right)
\end{align*}
denote the target sample. Define the empirical target risk by
\begin{align*}
\widehat R_0\left(f\right)
\coloneqq
\frac{1}{n_0}
\sum_{i=1}^{n_0}
\ell\left(Y_i,f\left(X_i\right)\right),
\end{align*}
and define the scalar loss class
\begin{align*}
\mG_{\widehat T}
\coloneqq
\left\{
\left(x,y\right)
\longmapsto
\ell\left(y,f\left(x\right)\right):
f\in\F_{\widehat T}\left(B\right)
\right\}.
\end{align*}
For a fixed realization of $S_0$, its empirical scalar Rademacher complexity is
\begin{align*}
\widehat{\mf R}_{S_0}\left(\mG_{\widehat T}\right)
\coloneqq
\mathbb E_{\varepsilon}
\left[
\sup_{f\in\F_{\widehat T}\left(B\right)}
\frac{1}{n_0}
\sum_{i=1}^{n_0}
\varepsilon_i
\ell\left(Y_i,f\left(X_i\right)\right)
\right].
\end{align*}
For each $i$, set $c_i\coloneqq\ell\left(Y_i,0\right)$. Because $c_i$ is independent of $f$, for every fixed Rademacher vector,
\begin{align*}
&\sup_{f\in\F_{\widehat T}\left(B\right)}
\frac{1}{n_0}
\sum_{i=1}^{n_0}
\varepsilon_i
\ell\left(Y_i,f\left(X_i\right)\right)
\\
&\qquad=
\frac{1}{n_0}
\sum_{i=1}^{n_0}
\varepsilon_i c_i
+
\sup_{f\in\F_{\widehat T}\left(B\right)}
\frac{1}{n_0}
\sum_{i=1}^{n_0}
\varepsilon_i
\left[
\ell\left(Y_i,f\left(X_i\right)\right)-c_i
\right].
\end{align*}
Taking the Rademacher expectation and using
\begin{align*}
\mathbb E_{\varepsilon}
\left[
\frac{1}{n_0}
\sum_{i=1}^{n_0}
\varepsilon_i c_i
\right]
=0
\end{align*}
shows that subtracting $\ell\left(Y_i,0\right)$ leaves the empirical Rademacher complexity unchanged.

For each fixed observation $\left(X_i,Y_i\right)$, the map
\begin{align*}
\b z
\longmapsto
\ell\left(Y_i,\b z\right)-\ell\left(Y_i,0\right)
\end{align*}
is $L_\ell$-Lipschitz with respect to the Euclidean norm and vanishes at the origin. The vector-contraction inequality of \citet{maurer2016vectorcontraction} therefore gives
\begin{align*}
\widehat{\mf R}_{S_0}\left(\mG_{\widehat T}\right)
\le
\sqrt{2}\,L_\ell
\widehat{\mf R}_{n_0}^m
\left(
\F_{\widehat T}\left(B\right)
\right).
\end{align*}
The scalar complexity on the left uses the conventional one-sided supremum. The absolute-value vector complexity used in this paper is at least as large as its one-sided counterpart, so the displayed inequality remains valid with our vector-complexity definition on the right.

Define the conditional complexity radius
\begin{align*}
r_{\widehat T}
\coloneqq
\sqrt{2}\,L_\ell B
\|\widehat T\|
\sqrt{
\frac{
\kappa\Tr\left(\b M\right)
}{n_0}
}.
\end{align*}
By \Cref{lem:conditional-target-complexity}, for every target sample realization,
\begin{align*}
\widehat{\mf R}_{S_0}\left(\mG_{\widehat T}\right)
\le
r_{\widehat T}.
\end{align*}
Consequently, the conditional expected Rademacher complexity satisfies
\begin{align*}
\mf R_{n_0}\left(\mG_{\widehat T}\mid\widehat T\right)
&\coloneqq
\mathbb E_{S_0\mid\widehat T}
\left[
\widehat{\mf R}_{S_0}\left(\mG_{\widehat T}\right)
\right]
\\
&\le
r_{\widehat T}.
\end{align*}
The quantity $r_{\widehat T}$ is deterministic after conditioning on the source data.

Apply \Cref{lem:two-sided-expected-rademacher} conditionally on $\widehat T$ to the class $\mG_{\widehat T}\subset\left[0,C_\ell\right]$. With conditional probability at least $1-\delta$, the following inequalities hold simultaneously for every $f\in\F_{\widehat T}\left(B\right)$:
\begin{align*}
R_0\left(f\right)
&\le
\widehat R_0\left(f\right)
+
2r_{\widehat T}
+
C_\ell
\sqrt{
\frac{
\log\left(2/\delta\right)
}{2n_0}
},
\\
\widehat R_0\left(f\right)
&\le
R_0\left(f\right)
+
2r_{\widehat T}
+
C_\ell
\sqrt{
\frac{
\log\left(2/\delta\right)
}{2n_0}
}.
\end{align*}

Let $\eta>0$. By the definition of the infimum, there is $f_\eta\in\F_{\widehat T}\left(B\right)$ such that
\begin{align*}
R_0\left(f_\eta\right)
\le
\inf_{f\in\F_{\widehat T}\left(B\right)}
R_0\left(f\right)
+
\eta.
\end{align*}
Because $\widehat f_0$ is an empirical risk minimizer,
\begin{align*}
\widehat R_0\left(\widehat f_0\right)
\le
\widehat R_0\left(f_\eta\right).
\end{align*}
Apply the first deviation inequality to $\widehat f_0$, use empirical optimality, and then apply the second deviation inequality to $f_\eta$:
\begin{align*}
R_0\left(\widehat f_0\right)
&\le
\widehat R_0\left(\widehat f_0\right)
+
2r_{\widehat T}
+
C_\ell
\sqrt{
\frac{
\log\left(2/\delta\right)
}{2n_0}
}
\\
&\le
\widehat R_0\left(f_\eta\right)
+
2r_{\widehat T}
+
C_\ell
\sqrt{
\frac{
\log\left(2/\delta\right)
}{2n_0}
}
\\
&\le
R_0\left(f_\eta\right)
+
4r_{\widehat T}
+
2C_\ell
\sqrt{
\frac{
\log\left(2/\delta\right)
}{2n_0}
}
\\
&\le
\inf_{f\in\F_{\widehat T}\left(B\right)}
R_0\left(f\right)
+
\eta
+
4r_{\widehat T}
+
2C_\ell
\sqrt{
\frac{
\log\left(2/\delta\right)
}{2n_0}
}.
\end{align*}
Let $\eta\downarrow0$ and subtract $R_0^\star$. By the definition of $A_{\widehat T}\left(B\right)$,
\begin{align*}
R_0\left(\widehat f_0\right)-R_0^\star
\le
A_{\widehat T}\left(B\right)
+
4r_{\widehat T}
+
2C_\ell
\sqrt{
\frac{
\log\left(2/\delta\right)
}{2n_0}
}.
\end{align*}
Insert the definition of $r_{\widehat T}$:
\begin{align*}
R_0\left(\widehat f_0\right)-R_0^\star
&\le
A_{\widehat T}\left(B\right)
+
4\sqrt{2}\,L_\ell B
\|\widehat T\|
\sqrt{
\frac{
\kappa\Tr\left(\b M\right)
}{n_0}
}
\\
&\quad+
2C_\ell
\sqrt{
\frac{
\log\left(2/\delta\right)
}{2n_0}
}.
\end{align*}
This is the asserted conditional bound. Replacing $\|\widehat T\|$ by $\|\widehat T\|_{\mathrm{HS}}$ is valid because every Hilbert--Schmidt operator satisfies $\|\widehat T\|\le\|\widehat T\|_{\mathrm{HS}}$.
\end{proof}

\section{A Sufficient Sobolev Activation Condition}\label{app:activation-condition}

The main Sobolev theorems assume boundedness of each activation Koopman operator. The following lemma records a standard sufficient condition in the integer-order case and verifies that the same scalar composition estimate remains valid for the $\b M^{-1}$-weighted vector norm.

\begin{lemma}[Bounded Sobolev activation composition]\label{lem:bounded-koopman-activation}
Let $s\in\Zp$, let $d,m\in\Zp$, and let $\b M\succ0$. Suppose that $\sigma:\R^d\to\R^d$ is a $C^s$ bi-Lipschitz diffeomorphism, that every partial derivative $D^\alpha\sigma_j$ with $1\le|\alpha|\le s$ is bounded, and that
\begin{align*}
\left\|\det D\sigma^{-1}\right\|_{L^\infty\left(\R^d\right)}
<\infty.
\end{align*}
Then
\begin{align*}
\mK_\sigma:
\HMs{s}\left(\R^d,\R^m\right)
\longrightarrow
\HMs{s}\left(\R^d,\R^m\right)
\end{align*}
is bounded.
\end{lemma}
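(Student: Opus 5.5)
The plan is to reduce the weighted vector statement to the scalar integer-order composition estimate, exactly as in the proof of \Cref{lem:weighted-sobolev-restriction}. Set $\b A\coloneqq\b M^{-1/2}$. Since $\b A$ acts only on output coordinates and $\mK_\sigma$ acts only on the input variable, $\b A\left(h\circ\sigma\right)=\left(\b A h\right)\circ\sigma$. The weighted norm satisfies
\begin{align*}
\|h\|_{\HMs{s}\left(\R^d,\R^m\right)}^2
=
\sum_{a=1}^{m}\|\left(\b A h\right)_a\|_{H^s\left(\R^d\right)}^2 .
\end{align*}
Hence it suffices to find a constant $C_\sigma$, depending only on $\sigma$, $s$ and $d$ and not on $\b M$, such that
\begin{align*}
\|u\circ\sigma\|_{H^s}\le C_\sigma\|u\|_{H^s}
\qquad\text{for scalar } u\in H^s\left(\R^d\right).
\end{align*}
Summing over components then gives $\|\mK_\sigma\|\le C_\sigma$.

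For the scalar estimate I would use that for integer $s$ the Bessel-potential norm is equivalent to
\begin{align*}
\sum_{|\alpha|\le s}\|D^\alpha u\|_{L^2}^2 .
\end{align*}
By density, I would first work with $u\in C_c^\infty\left(\R^d\right)$. The multivariate chain rule (Fa\`a di Bruno) expresses $D^\alpha\left(u\circ\sigma\right)$, for $1\le|\alpha|\le s$, as a finite sum of terms
\begin{align*}
\left(D^\beta u\right)\circ\sigma\cdot\prod_{r}D^{\gamma_r}\sigma_{j_r},
\qquad 1\le|\beta|\le|\alpha|,\quad 1\le|\gamma_r|\le s .
\end{align*}
By hypothesis every coefficient product is bounded in $L^\infty$, with bounds depending only on $\sigma$. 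The case $\alpha=0$ is the term $u\circ\sigma$ itself. For each term, I would change variables $\b y=\sigma\left(\b x\right)$, which is legitimate because $\sigma$ is a $C^1$ diffeomorphism of $\R^d$:
\begin{align*}
\int\left|\left(D^\beta u\right)\left(\sigma\left(\b x\right)\right)\right|^2\d\b x
=
\int\left|D^\beta u\left(\b y\right)\right|^2\left|\det D\sigma^{-1}\left(\b y\right)\right|\d\b y
\le
\left\|\det D\sigma^{-1}\right\|_{L^\infty}\|D^\beta u\|_{L^2}^2 .
\end{align*}
Combining these bounds with the triangle inequality yields the scalar constant $C_\sigma$. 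I would then extend from $C_c^\infty$ to all of $H^s$ by density: $\mK_\sigma$ is bounded on a dense set, and $u_k\to u$ in $H^s$ implies $u_k\circ\sigma\to u\circ\sigma$ in $L^2$ by the same change of variables. The extension therefore agrees with actual composition, and the weak derivatives of $u\circ\sigma$ are the expected ones.

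The main obstacle is the bookkeeping in the chain-rule step. Each derivative of $\sigma$ appearing in the Fa\`a di Bruno expansion has order between $1$ and $s$, so it is covered by the boundedness assumption; this is where $\sigma\in C^s$ is really used. The bi-Lipschitz hypothesis is not essential once $\det D\sigma^{-1}$ is bounded; it mainly guarantees that $\sigma$ is a global diffeomorphism onto $\R^d$, which the change of variables requires. I would state the equivalence-of-norms constant explicitly so that the final bound visibly carries no dependence on $\b M$.
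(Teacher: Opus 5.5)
Your proposal is correct and follows essentially the same route as the paper. Both arguments use integer-order norm equivalence, the Fa\`a di Bruno expansion on the $C_c^\infty$ core with the change of variables controlled by $\|\det D\sigma^{-1}\|_{L^\infty}$, a density extension identified through $L^2$ convergence of $u_r\circ\sigma$, and the componentwise reduction via $\b M^{-1/2}$; the only difference is that the paper performs this last reduction at the end rather than at the start, which is immaterial.
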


\begin{proof}
We first establish the scalar estimate on the dense core $C_c^\infty\left(\R^d\right)$. For integer $s$, the Fourier Sobolev norm is equivalent to the weak-derivative norm. Thus there are constants $c_1,c_2>0$, depending only on $d$, $s$, and the Fourier normalization, such that every scalar $u\in H^s\left(\R^d\right)$ satisfies
\begin{align*}
c_1
\sum_{|\alpha|\le s}
\|D^\alpha u\|_{L^2\left(\R^d\right)}^2
&\le
\|u\|_{H^s\left(\R^d\right)}^2
\\
&\le
c_2
\sum_{|\alpha|\le s}
\|D^\alpha u\|_{L^2\left(\R^d\right)}^2.
\end{align*}
Fix $u\in C_c^\infty\left(\R^d\right)$. We bound every derivative of $u\circ\sigma$ of order at most $s$.

For the zeroth-order term, the change of variables $y=\sigma\left(x\right)$ gives
\begin{align*}
\|u\circ\sigma\|_{L^2\left(\R^d\right)}^2
&=
\int_{\R^d}
|u\left(\sigma\left(x\right)\right)|^2
\,\d x
\\
&=
\int_{\R^d}
|u\left(y\right)|^2
\left|
\det D\sigma^{-1}\left(y\right)
\right|
\,\d y
\\
&\le
\left\|\det D\sigma^{-1}\right\|_{L^\infty}
\|u\|_{L^2\left(\R^d\right)}^2.
\end{align*}

Fix a multi-index $\alpha$ with $1\le|\alpha|\le s$. Because both $u$ and $\sigma$ are classically differentiable to the required orders, the multivariate chain rule expresses $D^\alpha\left(u\circ\sigma\right)$ as a finite sum of terms of the form
\begin{align*}
\left(D^\beta u\right)\left(\sigma\left(x\right)\right)
P_{\alpha,\beta}
\left(
\left(D^\gamma\sigma_j\left(x\right)\right)_{
1\le j\le d,
\ 1\le|\gamma|\le|\alpha|
}
\right),
\end{align*}
where $1\le|\beta|\le|\alpha|$ and $P_{\alpha,\beta}$ is a polynomial determined only by the multi-indices. The number of terms is finite and depends only on $d$ and $s$. By the assumed bounds on the derivatives of $\sigma$, every polynomial factor is uniformly bounded. Hence there is $C_\alpha>0$ such that
\begin{align*}
\left|
D^\alpha\left(u\circ\sigma\right)\left(x\right)
\right|^2
\le
C_\alpha
\sum_{1\le|\beta|\le|\alpha|}
\left|
\left(D^\beta u\right)\left(\sigma\left(x\right)\right)
\right|^2.
\end{align*}
Integrate over $x$ and apply the same change of variables to every term:
\begin{align*}
\|D^\alpha\left(u\circ\sigma\right)\|_{L^2}^2
&\le
C_\alpha
\sum_{1\le|\beta|\le|\alpha|}
\int_{\R^d}
\left|
\left(D^\beta u\right)\left(\sigma\left(x\right)\right)
\right|^2
\,\d x
\\
&=
C_\alpha
\sum_{1\le|\beta|\le|\alpha|}
\int_{\R^d}
|D^\beta u\left(y\right)|^2
\left|
\det D\sigma^{-1}\left(y\right)
\right|
\,\d y
\\
&\le
C_\alpha
\left\|\det D\sigma^{-1}\right\|_{L^\infty}
\sum_{1\le|\beta|\le|\alpha|}
\|D^\beta u\|_{L^2}^2.
\end{align*}
Sum over all $\alpha$ with $|\alpha|\le s$. Since only finitely many multi-indices occur, there is $C_\sigma>0$ such that
\begin{align*}
\sum_{|\alpha|\le s}
\|D^\alpha\left(u\circ\sigma\right)\|_{L^2}^2
\le
C_\sigma
\sum_{|\beta|\le s}
\|D^\beta u\|_{L^2}^2.
\end{align*}
Use the norm equivalence in both directions:
\begin{align*}
\|u\circ\sigma\|_{H^s\left(\R^d\right)}^2
&\le
c_2
\sum_{|\alpha|\le s}
\|D^\alpha\left(u\circ\sigma\right)\|_{L^2}^2
\\
&\le
c_2C_\sigma
\sum_{|\beta|\le s}
\|D^\beta u\|_{L^2}^2
\\
&\le
\frac{c_2C_\sigma}{c_1}
\|u\|_{H^s\left(\R^d\right)}^2.
\end{align*}
Define
\begin{align*}
C_{\mathrm{comp}}
\coloneqq
\left(
\frac{c_2C_\sigma}{c_1}
\right)^{1/2}.
\end{align*}
We have proved
\begin{align*}
\|u\circ\sigma\|_{H^s}
\le
C_{\mathrm{comp}}
\|u\|_{H^s}
\end{align*}
for every $u\in C_c^\infty\left(\R^d\right)$.

We now extend the estimate to an arbitrary $u\in H^s\left(\R^d\right)$. Choose $u_r\in C_c^\infty\left(\R^d\right)$ such that
\begin{align*}
u_r
\longrightarrow
u
\end{align*}
in $H^s\left(\R^d\right)$. For every $r,q\in\Zp$, the estimate on the smooth core applied to $u_r-u_q$ gives
\begin{align*}
\|u_r\circ\sigma-u_q\circ\sigma\|_{H^s}
&=
\|\left(u_r-u_q\right)\circ\sigma\|_{H^s}
\\
&\le
C_{\mathrm{comp}}
\|u_r-u_q\|_{H^s}.
\end{align*}
Thus $\left(u_r\circ\sigma\right)_{r\in\Zp}$ is Cauchy in $H^s\left(\R^d\right)$. Completeness gives a function $v\in H^s\left(\R^d\right)$ such that
\begin{align*}
u_r\circ\sigma
\longrightarrow
v
\end{align*}
in $H^s\left(\R^d\right)$ and therefore in $L^2\left(\R^d\right)$.

The same change of variables used for the zeroth-order term gives
\begin{align*}
\|\left(u_r-u\right)\circ\sigma\|_{L^2\left(\R^d\right)}^2
&=
\int_{\R^d}
|u_r\left(y\right)-u\left(y\right)|^2
\left|
\det D\sigma^{-1}\left(y\right)
\right|
\,\d y
\\
&\le
\left\|\det D\sigma^{-1}\right\|_{L^\infty}
\|u_r-u\|_{L^2\left(\R^d\right)}^2
\\
&\longrightarrow
0.
\end{align*}
Hence $u_r\circ\sigma$ converges to $u\circ\sigma$ in $L^2$. The $L^2$ limit is unique, so $v=u\circ\sigma$ almost everywhere. Therefore $u\circ\sigma\in H^s\left(\R^d\right)$. Passing to the limit in the smooth-core estimate gives
\begin{align*}
\|u\circ\sigma\|_{H^s}
&=
\lim_{r\to\infty}
\|u_r\circ\sigma\|_{H^s}
\\
&\le
C_{\mathrm{comp}}
\lim_{r\to\infty}
\|u_r\|_{H^s}
\\
&=
C_{\mathrm{comp}}
\|u\|_{H^s}.
\end{align*}
Thus scalar composition is bounded on all of $H^s\left(\R^d\right)$.

We now pass to the weighted vector-valued space. Let
\begin{align*}
\b A
\coloneqq
\b M^{-1/2}
\end{align*}
and write $v=\b A f$. Since $\b A$ is a constant output transformation,
\begin{align*}
\b A\left(f\circ\sigma\right)
=
\left(\b A f\right)\circ\sigma
=
v\circ\sigma.
\end{align*}
The weighted norm is
\begin{align*}
\|f\|_{\HMs{s}\left(\R^d,\R^m\right)}^2
=
\sum_{a=1}^{m}
\|v_a\|_{H^s\left(\R^d\right)}^2.
\end{align*}
Apply the scalar estimate to every component $v_a$:
\begin{align*}
\|f\circ\sigma\|_{\HMs{s}}^2
&=
\sum_{a=1}^{m}
\|v_a\circ\sigma\|_{H^s}^2
\\
&\le
C_{\mathrm{comp}}^2
\sum_{a=1}^{m}
\|v_a\|_{H^s}^2
\\
&=
C_{\mathrm{comp}}^2
\|f\|_{\HMs{s}}^2.
\end{align*}
Taking square roots proves boundedness of $\mK_\sigma$ on the weighted vector-valued Sobolev space.
\end{proof}

\section{Additional Experimental Details}\label{app:numerics}

\begingroup
\small
\subsection{Proxy regularization on MNIST}

The regularization experiment uses the stabilized numerical proxies described in \Cref{sec:experiments}. These quantities are optimization penalties motivated by the layer geometry of the theory; they are not evaluations of \Cref{inv,inj,thm:BrownianKoopman} for the rank-deficient experimental architecture.

The fully connected network has the form
\begin{align*}
f_{\theta}\left(x\right)
&=
g\left(
W_4
\sigma\left(
W_3
\sigma\left(
W_2
\sigma\left(
W_1x+b_1
\right)
+b_2
\right)
+b_3
\right)
+b_4
\right),
\end{align*}
with weight dimensions
\begin{align*}
W_1&\in\R^{1024\times784},
&
W_2&\in\R^{2048\times1024},
\\
W_3&\in\R^{2048\times2048},
&
W_4&\in\R^{10\times2048}.
\end{align*}
The bias dimensions are $b_1\in\R^{1024}$, $b_2\in\R^{2048}$, $b_3\in\R^{2048}$, and $b_4\in\R^{10}$. Thus the hidden widths are $1024$, $2048$, and $2048$, followed by a ten-dimensional output layer.

The matrices $W_1$ and $W_2$ use orthogonal initialization~\citep{saxe2014exact}; $W_3$ and $W_4$ use a truncated-normal initialization. All biases use a uniform initialization. The activation is the smooth Leaky ReLU of \citet{biswas2022smooth}. Training uses $1000$ randomly selected MNIST training examples, runs for $1800$ epochs, and uses Adam~\citep{kingma2017adammethodstochasticoptimization} with learning rate $10^{-4}$. No additional $L_2$ weight decay is used.

Let $\mI=\left\{1,2\right\}$ denote the layers on which the proxy penalties are evaluated. The Sobolev-inspired and Brownian-inspired penalties are, respectively,
\begin{align*}
\mathrm{SR}
&\coloneqq
\sum_{l\in\mI}
\frac{
\|W_l\|^{s_l}
}{
\det\left(
\b I+W_l^{\top}W_l
\right)^{1/4}
},
\qquad
s_l
\coloneqq
\frac{d_l+0.1}{2},
\\
\mathrm{BR}
&\coloneqq
\sum_{l\in\mI}
\frac{
\|W_l\|
}{
\det\left(
\b I+W_l^{\top}W_l
\right)^{1/4}
}.
\end{align*}
Each penalty is added to the empirical training loss with coefficient $\gamma=0.01$. The identity inside the determinant prevents singularity for rectangular or rank-deficient weights. It also makes these penalties different from the determinant terms in the proved injective bound.

Figure~\ref{fig:experiment}(b) reports test accuracy across five independent initializations. In this experiment, the Brownian-inspired penalty produces a higher mean test-accuracy trajectory than the unregularized baseline, whereas the Sobolev-inspired penalty produces a lower trajectory. This is an empirical observation for the stated architecture and hyperparameters, not a general ordering theorem for the two RKHS regimes.

\endgroup

\end{document}